\documentclass{article}

\usepackage{microtype}
\usepackage{graphicx}
\usepackage{subcaption}
\usepackage{booktabs} 
\usepackage{tikz}
\usetikzlibrary{positioning,arrows.meta}

\usepackage[pagebackref=true]{hyperref}
\renewcommand*\backref[1]{\ifx#1\relax \else (Cited on p. #1) \fi}

\usepackage[accepted]{icml2026}

\usepackage{amsmath}
\usepackage{amssymb}
\usepackage{mathtools}
\usepackage{amsthm}

\usepackage[capitalize,noabbrev]{cleveref}

\usepackage{minitoc}

\theoremstyle{plain}
\newtheorem{theorem}{Theorem}[section]
\newtheorem{proposition}[theorem]{Proposition}
\newtheorem{lemma}[theorem]{Lemma}

\theoremstyle{definition}
\newtheorem{definition}[theorem]{Definition}

\theoremstyle{remark}

\usepackage[disable,textsize=tiny]{todonotes}

\DeclareMathOperator*{\argmin}{argmin}

\newcommand{\loss}{\mathcal{L}}

\newcommand{\R}{\mathbb{R}}
\newcommand{\E}{\mathbb{E}}
\newcommand{\cG}{\mathcal{G}}
\newcommand{\cH}{\mathcal{H}}
\newcommand{\cB}{\mathcal{B}}
\newcommand{\cM}{\mathcal{M}}

\newcommand{\cF}{\mathcal{F}}

\newcommand{\cP}{\mathcal{P}}
\newcommand{\iter}{\ell}
\newcommand{\cPR}{\mathcal{P}_2(\R^d)}
\newcommand{\cPa}{\mathcal{P}_{\mathrm{ac},2}(\R^d)}
\newcommand{\W}{\mathrm{W}}
\newcommand{\gW}{\nabla_{\W_2}}

\newcommand{\kl}{\mathrm{KL}}
\newcommand{\T}{\mathrm{T}}
\newcommand{\F}{F}

\newcommand{\id}{\mathrm{Id}}
\newcommand{\MMD}{\mathrm{MMD}}
\newcommand{\D}{\mathrm{D}}

\newcommand{\Df}{\mathrm{D}_f}
\newcommand{\UOT}{\mathrm{UOT}}

\newcommand{\sUOT}{\mathrm{sUOT}}
\newcommand{\IPM}{\mathrm{IPM}}
\newcommand{\argmax}{\mathrm{argmax}}

\newcommand{\ie}{\emph{i.e.}}
\newcommand{\eg}{\emph{e.g.}}

\newcommand{\softplus}{\mathrm{softplus}}
\newcommand{\diag}{\mathrm{diag}}

\definecolor{blush}{rgb}{0.87, 0.36, 0.51}
\usepackage{xargs}

\icmltitlerunning{A Unifying View of Variational Generative Wasserstein Flows}

\begin{document}

\twocolumn[
  \icmltitle{A Unifying View of Variational Generative Wasserstein Flows}

  \icmlsetsymbol{equal}{*}

  \begin{icmlauthorlist}
    \icmlauthor{Paul Caucheteux}{yyy}
    \icmlauthor{Clément Bonet}{xxx}
    \icmlauthor{Anna Korba}{yyy}
  \end{icmlauthorlist}

  \icmlaffiliation{yyy}{ENSAE, CREST, IP Paris}
  \icmlaffiliation{xxx}{CMAP, CNRS, École Polytechnique, Institut Polytechnique de Paris, Palaiseau, France}

  \icmlcorrespondingauthor{Paul Caucheteux}{paul.caucheteux@ensae.fr}

  \icmlkeywords{Machine Learning, ICML}

  \vskip 0.3in
]

\printAffiliationsAndNotice{}  

\begin{abstract}
  
\looseness=-1 Many modern generative models can be viewed as minimizing divergences between probability distributions, yet they rely on different algorithmic and geometric principles. Wasserstein gradient flows provide a continuous-time formulation for optimizing over distributions, and can be approximated through their implicit discretization via the Jordan–Kinderlehrer–Otto (JKO) scheme. In this work, we present a unified theoretical framework for generative modeling based on Wasserstein gradient flows, which we refer to as Generative Wasserstein Flows (GWF). We show that a broad class of existing methods can be derived as instances of parametric JKO schemes for $f$-divergence objectives, and we establish equivalences between several recently proposed algorithms. We extend this framework beyond $f$-divergences to Integral Probability Metrics and squared Maximum Mean Discrepancy, deriving new JKO-based generative algorithms, and clarifying their connections with GANs. We study empirically the impact of the JKO regularization for a wide set of objectives. Finally, we analyze parametric Wasserstein flows, where the dynamics are restricted to distributions induced by parametrized maps. 
\end{abstract}

\section{Introduction}

\looseness=-1 The goal of generative modeling is to generate new samples from an unknown target distribution $\nu$, from which we have access to a finite number of samples. In generative modeling, many paradigms face each other including diffusion models \citep{song2021scorebased}, flow matching \citep{lipman2023flow}, optimal transport maps based methods \cite{makkuva2020optimal, rout2022generative} and approaches based on divergence minimization such as variational autoencoders \citep{kingma2013auto}, GANs \citep{goodfellow2014generative} or normalizing flows \citep{papamakarios2021normalizing}. While the field is currently dominated by diffusion methods, which have demonstrated very good results and scalability, recent advances in normalizing flows \citep{zhai2025normalizing}, GANs \citep{huang2024the}, and Wasserstein gradient flows \citep{choi2024scalable} have shown that these alternative paradigms remain highly competitive when well designed, while allowing for faster sampling.
These models differ not only in architecture but in the underlying geometric structure induced by the objective they minimize. Understanding these methods from a unified perspective remains a central challenge.

A particularly fruitful viewpoint is provided by Wasserstein gradient flows (WGFs), \ie, continuous-time paths of distributions which describe the steepest descent of a functional over the space of probability measures endowed with the Wasserstein metric, namely the Wasserstein space. The Jordan–Kinderlehrer–Otto (JKO,~\cite{jordan1998variational}) scheme offers a principled time-discretization of such flows through a sequence of proximal optimization problems in Wasserstein space. Recently, JKO-based methods have gained attention in generative modeling. These approaches reveal connections between likelihood-based models \citep{finlay2020how, vidal2023taming, xu2023normalizing}, transport maps \citep{fan2022variational, choi2024scalable}, and adversarial objectives \citep{lin2021wasserstein}. However, existing works often focus on specific divergences
, and specific parameterizations or algorithmic instantiations. This raises the question of how a wide range of generative methods can be derived, analyzed, and implemented in a principled and unified manner.

\paragraph{Contributions.} \looseness=-1 In this work, we propose a unified theoretical framework and methodology for generative modeling based on Wasserstein gradient flows, which we refer to as Generative Wasserstein Flows (GWFs). This work makes several contributions. 
First, we show in \Cref{sec:jko_fdiv} that minimizing a wide class of $f$-divergences through JKO schemes naturally leads to generative algorithms expressed with transport maps, encompassing and clarifying connections between variational Wasserstein gradient flows, unbalanced optimal transport–based methods, and adversarial training objectives. In particular, we establish the equivalence, both theoretically and practically, of several methods introduced in the last few years \citep{fan2022variational, choi2024scalable, baptista2025proximal}. 
Second, we investigate the minimization of $f$-divergences using different variational formulations such as the Donsker-Varadhan formula for the KL divergence \citep{birrell2022optimizing}. We also extend in \Cref{sec:jko_mmd} the JKO framework beyond classical $f$-divergences to integral probability metrics and squared Maximum Mean Discrepancy (MMD). This leads to a novel generative scheme that reveals new connections between MMD GANs and recently proposed JKO training for GANs~\citep{lin2021wasserstein}. 
Third, we provide in \Cref{sec:experiments} numerical experiments which investigate the impact of the JKO regularization for training generative models for a wide range of objectives. Finally, we present in \Cref{sec:theory} a theoretical framework for parametric Wasserstein flows, where the evolution is restricted to distributions generated by parametrized maps, and interpret these methods as projected Wasserstein gradient flows.

\paragraph{Notation.} \looseness=-1 We write $\cM^+(\R^d)$ for the set of finite nonnegative measures over $\R^d$. We write $\cP(\R^d)$ for the space of probability measures over $\R^d$, $\cPR$ its restriction to measures with bounded second moment, and $\cPa$ the restriction of the latter space to absolutely continuous measures \emph{w.r.t.} the Lebesgue measure.
Given $f:(0,\infty)\to \R$ convex with $f(1)=0$, the $f$-divergence is defined as $\Df(\mu||\nu)=\int f(\frac{\mathrm{d}\mu}{\mathrm{d}\nu})\mathrm{d}\nu$ for $\mu$ absolutely continuous with respect to $\nu$ ($\mu \ll \nu$, $\frac{\mathrm{d}\mu}{\mathrm{d}\nu}$ denoting the Radon-Nikodym density of $\mu$ relative to $\nu$), and $+\infty$ otherwise. For $\nu$ a given target distribution, important particular cases are the reverse KL divergence 
$\cF(\mu)=\kl(\mu||\nu) = \int \log (\frac{\mathrm{d}\mu}{\mathrm{d}\nu})\mathrm{d}\mu$ obtained for $f(x)=x\log x$, and the forward KL divergence $\cF(\mu)=\kl(\nu||\mu)$ obtained for $f(x)=-\log x$. For a measurable map $\T:\R^d\to \R^d$, $\T_\#\mu$ denotes the pushforward measure, which satisfies for all $A\in\cB(\R^d)$, $\T_\#\mu(A)=\mu\big(\T^{-1}(A)\big)$. $\id$ denotes the identity map on $\R^d$. For $\varphi:\R^d\to \R^p$, $L^k(\mu)$ denotes the space of functions such that $\int \|\varphi\|^k\ \mathrm{d}\mu <\infty$ and is equipped with the norm $\|\cdot\|_{L^k(\mu)}$. 

\section{Wasserstein Gradient Flows}\label{sec:background}

In this section, we provide some background on optimal transport (OT) and Wasserstein gradient flows to minimize a functional. We refer to \citep{ambrosio2008gradient, villani2008optimal, santambrogio2015optimal} for more details.

\subsection{Wasserstein Space}\label{sec:background_ot}

For two probability distributions $\mu$ and $\nu$ in $\cPR$,  the $2$-Wasserstein distance is defined by means of an optimal coupling between $\mu$ and $\nu$ in the following way:
\begin{align}\label{eq:wasserstein_2}
  \W_2^2(\mu,\nu) := \inf_{\pi\in\Pi(\mu,\nu)} \int \|x-y\|_2^2\ \mathrm{d} \pi(x,y),
\end{align}
where $\Pi(\mu,\nu)$ denotes the set of possible couplings between $\mu$ and $\nu$. 
Beyond this primal formulation, the quadratic cost endows the problem with a particularly rich structure. In particular, when $\mu\in\cPa$
, the OT plan is induced by a deterministic map, as stated by Brenier's theorem~\citep{brenier1991polar}. More precisely, there exists a convex function $\varphi:\mathbb{R}^d\to\mathbb{R}$ such that the optimal coupling $\pi^\star$ is given by $(\id,\nabla\varphi)_\#\mu$, and $\W_2$ can be expressed as
\[
    \W_2^2(\mu,\nu) = \int \|x-\nabla\varphi(x)\|_2^2 \, \mathrm{d}\mu(x) = \|\id - \nabla\varphi\|_{L^2(\mu)}^2.
\]
The map $\T^\star=\nabla \varphi$ is referred to as the OT map and belongs to $L^2(\mu)$.
A complementary viewpoint is provided by the Kantorovich dual formulation, which highlights the variational nature of the problem. In this case, $\W_2$ admits the (semi-dual) representation \citep[Th. 5.10]{villani2008optimal} 
\begin{multline}
    \tfrac12\W_2^2(\mu,\nu) = \sup_{\phi\in L^1(\nu)}\ \int \phi^c\,\mathrm{d}\mu + \int \phi\,\mathrm{d}\nu
\end{multline}
where $\phi:\R^d\to \R$ is a Kantorovich potential and $\phi^c$ denotes the $c$-transform of $\phi$, \ie, $\phi^{c}(x)=\inf_{y}\{c(x,y)-\phi(y)\}$, where unless stated otherwise, we use $c(x,y)=\tfrac12\|x-y\|_2^2$. Let $\mu\in\cPa$ and denote by $\phi_{\mu,\nu}\in L^1(\nu)$ an optimal Kantorovich potential (\ie, a maximizer of the semi-dual OT problem), then the OT map between $\mu$ and $\nu$ satisfies $\T^\star(x)=\argmin_y\ \tfrac12\|x-y\|_2^2 - \phi_{\mu,\nu}(y)$. This formulation will be useful for the generalization of OT to the unbalanced setting that we will consider later.  Applying Danskin's theorem to $\phi_{\mu,\nu}^c$, see \emph{e.g.} \citep[Chapter 11]{blondel2024elements}, we also get the well known formula $\T^\star(x) = x-\nabla\phi_{\mu,\nu}^c(x)$ which is indeed the gradient of the convex function $x\mapsto \tfrac12\|x\|^2-\phi_{\mu,\nu}^c(x)$ \citep[Th. 1.17]{santambrogio2015optimal}.

\subsection{Gradient Flows in Wasserstein Space}

Wasserstein gradient flows \citep{ambrosio2008gradient, santambrogio2017euclidean} describe the time evolution of a probability measure following the steepest–descent direction of a functional $\cF:\cPR\to\R$ with respect to the 2-Wasserstein distance \(\W_2\). Consequently, to minimize \(\cF\), one evolves \((\mu_t)_{t\ge0}\) via the continuity equation with velocity field the negative Wasserstein gradient:
\begin{equation} \label{eq:WGF} \tag{WGF}
    \partial_t \mu_t \;=\; \nabla\!\cdot\!\big(\mu_t\,\gW\cF(\mu_t)\big).
\end{equation}
The Wasserstein (sub)-gradient at $\mu$ is defined as $\gW\cF(\mu)=\nabla \cF'(\mu)$, \ie, as the Euclidean gradient of the first variation $\cF'(\mu):\R^d\to \R$ of $\cF$ at $\mu$\footnote{If it exists, it is the function such that for any $\nu\in \cP(\R^d)$, $\cF(\mu+\varepsilon(\nu-\mu))=\cF(\mu)\;+\;\varepsilon\int \cF'(\mu)(x)\,\mathrm{d}(\nu-\mu)(x)\;+\;o(\varepsilon).$}. It is an element of $L^2(\mu)$. 
There are in general no closed-form solutions to \eqref{eq:WGF}. To approximate the flows, different schemes based on time discretizations have been proposed, \eg~ based on explicit (forward) Euler discretizations, implicit (backward) ones, or splitting schemes mixing both \citep{wibisono2018sampling, salim2020wasserstein}. The explicit Euler scheme, often referred to as the Wasserstein gradient descent \citep{bonet2024mirror}, 
is of the form, for a step-size $\gamma>0$, $\mu_0\in\cPR$,\begin{equation}  \label{eq:wgd}  \forall \iter \ge 0,\ \mu_{\iter+1} = \big(\id - \gamma \gW\cF(\mu_\iter)\big)_\#\mu_\iter.\end{equation}
In this work, we mostly focus on the implicit scheme introduced by \citet{jordan1998variational}, and thus also known as the JKO scheme, given for a step-size $\tau>0$, $\mu_0\in\cPR$,
\begin{equation} \label{eq:JKO} \tag{JKO} 
    \forall \iter\ge 0, \ \mu_{\iter+1} = \argmin_{\mu\in\cPR}\ \cF(\mu) + \frac{1}{2\tau}\W_2^2(\mu,\mu_{\iter}).
\end{equation}
Under mild assumptions, this scheme converges as $\tau\to 0$ towards \eqref{eq:WGF} \citep[Th. 4.0.4]{ambrosio2008gradient}.

\section{JKO for Generative Modeling based on $\Df$} 
\label{sec:jko_fdiv}

We now study generative modeling approaches that minimize $f$-divergences, $\cF(\cdot)=\Df(\cdot\|\nu)$, using a JKO scheme, where $\nu \in \cPa$ is the target probability measure, \eg, the data distribution from which we wish to generate new samples. 
This section presents a unifying view of methods within this framework, which we call Generative Wasserstein Flows (GWFs). 

\subsection{Unifying JKO-based Schemes for $f$-Divergences} \label{subsec:jko_fdiv}

\paragraph{Primal formulation of JKO.} 

If $\mu_\iter\in\cPa$, we can simplify \eqref{eq:JKO} as an optimization problem over maps
\begin{equation}\label{eq:JKO_Brenier}
    \left\{
    \begin{aligned}
        \T_{\iter+1} &= \argmin_{\T\in L^2(\mu_{\iter})}\ \cF(\T_\#\mu_\iter) + \frac{1}{2\tau} \|\T-\id\|_{L^2(\mu_{\iter})}^2 \\
        \mu_{\iter+1} &= (\T_{\iter+1})_\#\mu_{\iter},
    \end{aligned}
    \right.
\end{equation}
\looseness=-1 by leveraging Brenier's theorem. In particular, $\T_{\iter+1}$ is necessarily the OT map between $\mu_\iter$ and $\mu_{\iter+1}$ (\cref{lemma:jko_df}), and thus the gradient of a convex function. Yet if $\mu_\iter\notin \cPa$, the squared norm term in the objective for $\T$ is an upper bound on the $\W_2$ distance, (as a map may not be able to reach any measure)
. Note that for $\cF$ an $f$-divergence, $\mu_\iter$ necessarily belongs to $\cPa$ as $\nu\in\cPa$. We always assume that a minimum $\T_{\iter+1}$ exists, which holds \eg~if $\cF$ is lower-semi continuous \emph{w.r.t.} the narrow topology and bounded below, see \emph{e.g.} \citep[Corollary 2.2.2]{ambrosio2008gradient}.
\citet{mokrov2021largescale, alvarez-melis2022optimizing} used this formulation and Brenier's theorem by parameterizing maps $\T_{\iter+1}$ as gradient of convex functions with Input Convex Neural Networks (ICNNs) \citep{amos2017input}, but did not use it for generative modeling. 
 The formulation \eqref{eq:JKO_Brenier} suffers from some limitations to minimize $f$-divergences $\cF(\cdot)=\Df(\cdot||\nu)$, as it requires to evaluate the density of the current iterate $\mu_\iter$. This can be solved by using invertible neural networks such as Normalizing Flows \citep{papamakarios2021normalizing}, and this was used \emph{e.g.} in \citep{bonet2022efficient, vidal2023taming} in the JKO setting. However, such architectures are known to have limited expressiveness \citep{kong2020expressive}, although recent works have proposed more expressive architectures \citep{zhai2025normalizing, gu2025starflow}. Moreover, some $f$-divergences, such as the reverse KL, also require approximating the density of the target $\nu$, which is unknown in generative modeling.

To circumvent these problems, \citet{fan2022variational} proposed a method called Variational Wasserstein Gradient Flow (VWGF), by leveraging the variational formulation of $f$-divergences \citep{nguyen2010estimating}
\begin{equation} \label{eq:VarRepfdiv}
    \Df(\mu||\nu) = \sup_{h\in C_b(\R^d)}\ \int h\ \mathrm{d}\mu - \int f^*\circ h\ \mathrm{d}\nu := \mathcal{L}(\mu, h),
\end{equation}
where the supremum is over continuous bounded maps from $\R^d$ to $\R$ \citep[Remark 1]{birrell2022optimizing}
, and \(f^*\) is the convex conjugate of \(f\), \ie, $f^*(s) = \sup_t \ st - f(t)$. 
In this case, the minimization problem over $\T$ in \eqref{eq:JKO_Brenier} is equivalent to 
\begin{equation} \label{eq:VWGF}
    \inf_{\T}\  \sup_{h}\ \int \big(\frac{1}{2\tau}\|\T-\id\|_2^2 + h\circ \T\big)\ \mathrm{d}\mu_\iter - \int f^*\circ h\ \mathrm{d}\nu.
\end{equation}
This formulation can be evaluated given only samples of $\mu_\iter$ and $\nu$, making it well suited to generative modeling. It can be approximated by parameterizing $\T$ and $h$ with any neural networks through an adversarial training procedure. In that setting,  the map $\T$ (or its compositions through iterations) plays the role of a generator, pushing source samples to target ones, and $h$ of a discriminator between generated and real data samples. 
Note that it can also be seen as a bilevel optimization problem $\inf_\T\ \mathcal{L}\big(\T_\#\mu_\iter, h^*(\T_\#\mu_\iter)\big)$ with $h^*(\T)$ the maximum in \eqref{eq:VarRepfdiv}, where the outer loss is $\Df(\cdot||\nu)$ and outer loops correspond to JKO steps.

\paragraph{Dual formulation of JKO.} 

When $\cF(\mu)=\Df(\mu||\nu)$, a step of the JKO scheme \eqref{eq:JKO} can alternatively be interpreted as the source-fixed Unbalanced Optimal Transport (sUOT) problem \citep{liero2018optimal, choi2024scalable}
\begin{equation} \label{eq:semi_uot}
    \begin{aligned}
        \sUOT(\mu,\nu) = \inf_{\pi_2\in\cPR}\  \lambda_2 \Df(\pi_2||\nu) + \W_2^2(\mu,\pi_2),
    \end{aligned}
\end{equation}
with $\lambda_2=2\tau$, $\mu=\mu_\iter$. 
The (semi-)dual of \eqref{eq:semi_uot} is given, for $c_\tau(x,y)=\tfrac{1}{2\tau}\|x-y\|_2^2$, by \citep{vacher2023semi}
\begin{equation}\label{eq:suot_dual}
    \sUOT(\mu_\iter,\nu) = 2\tau \cdot \sup_{h \in C_b(\R^d)}\ \int h^{c_\tau}\mathrm{d}\mu_\iter - \int  f^* \circ-h\ \mathrm{d}\nu.  
\end{equation}

\citet{choi2024scalable} proposed a method called Scalable JKO (S-JKO) to simulate \eqref{eq:JKO} for $f$-divergence objectives, by seeing it as a sequence of sUOT problems, and where each of these sUOT problems is solved using the method introduced in \citep{choi2023generative}. In their setting, denoting $h$ a maximizer of \eqref{eq:suot_dual}, the map $\T(x)=\argmin_y\ c_\tau(x,y)-h(y)$ corresponds both to the unbalanced transport between $\mu_\iter$ and $\nu$, and the OT map between $\mu_\iter$ and $\mu_{\iter+1}$ (the rebalanced measure) \citep{eyring2024unbalancedness, gallouet2025regularity}. Hence, parametrizing $h^{c_\tau}$ (the $c_{\tau}$-transform of $h$), by a measurable map $\T$ as $h^{c_\tau}(x)=\inf_{\T}\tfrac{1}{2\tau}\|x-\T(x)\|_2^2 - h\big(\T(x)\big)$ and swapping the infimum and the integral\footnote{Using \citep[Theorem 14.60 and Example 14.29]{rockafellar1998variational} as $(x,y)\mapsto c_\tau(x,y)-h(y)$ is a normal integrand since $x\mapsto c_\tau(x,y)-h(y)$ is measurable and $y\mapsto c_\tau(x,y)-h(y)$ is continuous.}
, \eqref{eq:suot_dual} is equivalent to
\begin{equation}\label{eq:SJKO2}  
    \sup_h\ \inf_\T\ \int \big(\frac{1}{2\tau}\|\T-\id\|_2^2 - h\circ \T\big)\ \mathrm{d}\mu_\iter - \int f^*\circ -h\ \mathrm{d}\nu.
\end{equation}
\looseness=-1 This problem can again be solved by parameterizing $\T$ and $h$ with neural networks through an adversarial training scheme. 
While \eqref{eq:SJKO2} and \eqref{eq:VWGF} differ in the order of the supremum and infimum (and in the sign convention for $h$), we show below that they are in fact equivalent.
\begin{proposition} \label{prop:VWGF_equals_SJKO_refs}
    Let $\mu_\iter,\nu\in\cPa$. Then, the objectives \eqref{eq:VWGF} (VWGF) and 
    \eqref{eq:SJKO2}
    (S-JKO) are equal. 
\end{proposition}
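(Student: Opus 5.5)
Both objectives will be rewritten as $2\tau$-rescaled versions of the same quantity, the source-fixed UOT value $\sUOT(\mu_\iter,\nu)$. In other words, both equal the optimal value of the JKO step $\min_\mu \Df(\mu\|\nu)+\frac{1}{2\tau}\W_2^2(\mu,\mu_\iter)$. First I substitute $h\mapsto -h$ in \eqref{eq:SJKO2}. This is a bijection of $C_b(\R^d)$, so the S-JKO objective becomes
\[
\sup_h\ \inf_\T\ \int \big(\tfrac{1}{2\tau}\|\T-\id\|_2^2 + h\circ \T\big)\,\mathrm{d}\mu_\iter - \int f^*\circ h\,\mathrm{d}\nu .
\]
This is exactly the VWGF Lagrangian with $\inf$ and $\sup$ swapped. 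The proposition therefore reduces to showing that this minimax problem has no duality gap.

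\textbf{VWGF side.} For fixed $\T\in L^2(\mu_\iter)$, the inner supremum over $h$ is, by \eqref{eq:VarRepfdiv}, $\frac{1}{2\tau}\|\T-\id\|^2_{L^2(\mu_\iter)}+\Df(\T_\#\mu_\iter\|\nu)$. Hence VWGF equals the value of \eqref{eq:JKO_Brenier}. Since $\mu_\iter\in\cPa$, Brenier's theorem gives two facts. First, every $\T$ satisfies $\|\T-\id\|^2_{L^2(\mu_\iter)}\ge \W_2^2(\T_\#\mu_\iter,\mu_\iter)$. Second, every $\mu\in\cPR$ is attained as $\T_\#\mu_\iter$ with equality, using the OT map. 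So VWGF $=\inf_{\mu}\Df(\mu\|\nu)+\frac{1}{2\tau}\W_2^2(\mu,\mu_\iter)=\frac{1}{2\tau}\sUOT(\mu_\iter,\nu)$ with $\lambda_2=2\tau$.

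\textbf{S-JKO side.} For fixed $h$, the inner infimum over measurable $\T$ equals $\int h^{c_\tau}\mathrm{d}\mu_\iter$ after the sign change. This uses the interchange of infimum and integral justified in the paper's footnote (normal integrand, \citep{rockafellar1998variational}). Here $h^{c_\tau}$ is formed with $+h$, which matches the $c_\tau$-transform of $-h$ in \eqref{eq:suot_dual}. The S-JKO objective is thus $\sup_h \int (-h)^{c_\tau}\mathrm{d}\mu_\iter-\int f^*\circ h\,\mathrm{d}\nu$. Renaming $h\to-h$ gives exactly the semi-dual \eqref{eq:suot_dual} divided by $2\tau$. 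By the strong duality result for sUOT \citep{vacher2023semi}, this equals $\frac{1}{2\tau}\sUOT(\mu_\iter,\nu)$, which closes the chain.

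The main obstacle is the strong duality step, i.e.\ the absence of a gap between \eqref{eq:semi_uot} and \eqref{eq:suot_dual}. I take this as given from the cited result, since the paper states it. If a self-contained argument were needed, I would proceed as follows. Weak duality ($\sup\inf\le\inf\sup$) is immediate. For the reverse inequality, I would use Fenchel--Rockafellar duality on $\mu\mapsto \Df(\mu\|\nu)+\frac{1}{2\tau}\W_2^2(\mu,\mu_\iter)$. Both terms are convex and lower semicontinuous in $\mu$ for the narrow topology, and their conjugates are respectively $h\mapsto\int f^*\circ h\,\mathrm{d}\nu$ and the $c_\tau$-transform term coming from Kantorovich duality. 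A qualification condition is also needed: $\Df(\cdot\|\nu)$ is finite at $\nu$, and the $\W_2^2$ term is continuous there for the appropriate topology. A secondary technical point is making sure the admissible class for $\T$ (measurable versus $L^2(\mu_\iter)$) agrees on both sides. This holds because maps with $\|\T-\id\|_{L^2}=\infty$ give value $+\infty$ and can be discarded.
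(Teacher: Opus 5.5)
Your proof is correct, but it is organized differently from the paper's. You evaluate each objective separately and then match the two values through the sUOT primal--dual pair. VWGF reduces to the primal value $\inf_{\mu}\Df(\mu\|\nu)+\frac{1}{2\tau}\W_2^2(\mu,\mu_\iter)$ via the variational formula and Brenier. S-JKO reduces to the semi-dual \eqref{eq:suot_dual} via the normal-integrand interchange. The cited sUOT strong duality closes the chain.

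The paper never computes the two values. Its steps are:
\begin{itemize}
\item it relaxes maps to couplings $\gamma$ with $\pi^1_\#\gamma=\mu_\iter$;
\item it applies a minimax theorem (Liero et al.) to the coupling Lagrangian, which is linear in $\gamma$ and concave in $h$;
\item it returns to maps on the inf--sup side, using the result of Eyring et al.\ that optimal sUOT plans are induced by OT maps when the source is absolutely continuous;
\item it concludes with the sandwich $\sup_h\inf_\T\le\inf_\T\sup_h=\sup_h\inf_\gamma\le\sup_h\inf_\T$, where the last inequality only uses that maps form a subclass of couplings.
\end{itemize}

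The no-gap ingredient is the same in substance: evaluating $\inf_\gamma$ in the paper's minimax identity gives exactly the semi-dual. The difference lies in how maps and couplings are handled. The sandwich spares the paper the measurable-selection interchange on the sup--inf side. Your route needs that interchange, but it makes explicit that both objectives equal the JKO step value. Your Brenier replacement also works at the level of infima, so, unlike the appeal to Eyring et al., it does not presuppose that an optimal plan exists.

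One caution about your fallback Fenchel--Rockafellar sketch. Both $\Df(\cdot\|\nu)$ and $\W_2^2(\cdot,\mu_\iter)$ equal $+\infty$ off the cone of nonnegative measures. That cone has empty interior for the narrow topology, and also for total-variation-type norms. So neither function is continuous at any point, and the qualification condition cannot be checked on the primal side as you propose. This is why such dualities are usually proved either with a minimax theorem, as the paper does, or by applying Fenchel--Rockafellar on the side of continuous test functions.
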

Hence, although not stated by \citet{choi2024scalable}, 
S-JKO and VWGF constitute the exact same generative scheme. We refer to \Cref{Appendix:proofProp1} for the proof.
Formulas of the objectives for common \(f\)-divergences are detailed in \Cref{Appendix:VarFormfdiv}. 

\paragraph{Explicit scheme as a JKO step.}

$f$-divergences are only well suited to compare probability measures sharing the same support. Hence, several works regularized them with Moreau's envelope, allowing to compare any probability measures. The MMD's Moreau's envelope was \emph{e.g.} used in \citep{glaser2021kale, chen2025de, stein2025wasserstein}. 
\citet{baptista2025proximal} recently proposed a regularized $f$-divergence based on the $\W_2$-Moreau's envelope:
\begin{equation} \label{eq:ProxOtDiv}
    \cF_\varepsilon(\mu)=\inf_{\eta\in\cPR}\ \W_2^2(\mu, \eta) + \varepsilon \D_f(\eta||\nu).
\end{equation}
\citet{baptista2025proximal} proposed to minimize $\cF_\varepsilon$ in the particular case of the reverse KL divergence using an explicit discretization, \ie, the Wasserstein gradient descent scheme \eqref{eq:wgd}. We show in the next proposition that this scheme coincides for any $f$-divergence with the JKO scheme of $\cF=\Df(\cdot||\nu)$ for a particular choice of $\varepsilon$.

\begin{proposition}
\label{thm:EquivalenceForwardBackward}
    Let $\nu\in \cPa$. A step of Wasserstein gradient descent of \eqref{eq:ProxOtDiv} with step size $\gamma=\tfrac12$ and $\varepsilon=2\tau$ is equivalent to a step of \eqref{eq:JKO} for $\cF(\mu)=\Df(\mu||\nu)$.
\end{proposition}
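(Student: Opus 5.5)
The plan is to compute the Wasserstein gradient of the Moreau envelope $\cF_\varepsilon$ explicitly, show that the explicit step $(\id - \tfrac12 \gW\cF_\varepsilon(\mu_\iter))_\#\mu_\iter$ lands exactly on the proximal point, and then match that proximal point with the JKO iterate. We fix $\mu=\mu_\iter\in\cPa$ and let $\eta^\star$ be a minimizer of \eqref{eq:ProxOtDiv} with $\varepsilon=2\tau$. Dividing the objective by $2\tau$ shows that
\[
\argmin_{\eta}\ \W_2^2(\mu,\eta)+2\tau\Df(\eta\|\nu)=\argmin_\eta\ \Df(\eta\|\nu)+\tfrac{1}{2\tau}\W_2^2(\eta,\mu).
\]
So $\eta^\star$ is exactly a JKO iterate $\mu_{\iter+1}$ for $\cF=\Df(\cdot\|\nu)$. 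This part is immediate, and it is also the sUOT identity \eqref{eq:semi_uot} with $\lambda_2=2\tau$.

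Next I would compute the first variation of $\cF_\varepsilon$ at $\mu$ by an envelope (Danskin-type) argument. Perturbing $\mu$ while keeping $\eta^\star$ fixed, the only $\mu$-dependence is through $\W_2^2(\mu,\eta^\star)$. By the semi-dual of \eqref{eq:wasserstein_2}, the first variation of $\tfrac12\W_2^2(\cdot,\eta^\star)$ at $\mu$ is the Kantorovich potential $\phi^c_{\mu,\eta^\star}$. Hence $\cF_\varepsilon'(\mu)=2\phi^c_{\mu,\eta^\star}$ and
\[
\gW\cF_\varepsilon(\mu)=2\nabla\phi^c_{\mu,\eta^\star}=2(\id-\T^\star),
\]
where $\T^\star$ is the OT map from $\mu$ to $\eta^\star$, using $\T^\star=\id-\nabla\phi^c_{\mu,\eta^\star}$ from Section~\ref{sec:background_ot}. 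This map exists by Brenier because $\mu\in\cPa$.

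The explicit step with $\gamma=\tfrac12$ is then
\[
\big(\id-\tfrac12\cdot 2(\id-\T^\star)\big)_\#\mu=\T^\star_\#\mu=\eta^\star=\mu_{\iter+1},
\]
which is the JKO step. Equivalently, in map form, $\T^\star$ solves \eqref{eq:JKO_Brenier}, consistent with \cref{lemma:jko_df}. Note that $\mu_{\iter+1}\ll\nu$, so it lies in $\cPa$ and the argument iterates.

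The main obstacle is justifying the envelope step, namely that $\cF_\varepsilon'(\mu)=2\phi^c_{\mu,\eta^\star}$ even though $\eta^\star$ itself depends on $\mu$. I would handle this with two one-sided bounds along $\mu_t=\mu+t(\rho-\mu)$:
\begin{itemize}
\item \emph{Upper bound.} Using $\eta^\star$ as a competitor gives $\cF_\varepsilon(\mu_t)-\cF_\varepsilon(\mu)\le \W_2^2(\mu_t,\eta^\star)-\W_2^2(\mu,\eta^\star)$, which is $2t\int\phi^c\,\mathrm{d}(\rho-\mu)+o(t)$.
\item \emph{Lower bound.} Use the dual representation to get $\W_2^2(\mu_t,\eta_t)\ge 2\int\phi^c_{\mu,\eta^\star}\mathrm{d}\mu_t+2\int\phi_{\mu,\eta^\star}\mathrm{d}\eta_t$, together with minimality of $\eta^\star$ in the form of the sUOT dual \eqref{eq:suot_dual}.
\end{itemize}
If this becomes delicate (uniqueness of potentials, stability of $\eta_t$), I would fall back on the convention the paper uses: the Wasserstein gradient is defined via the first variation, and the proof is stated under the assumption that the minimizer $\eta^\star$ and the potential are unique, which holds since $\Df(\cdot\|\nu)$ is convex along generalized geodesics when $\mu\in\cPa$.
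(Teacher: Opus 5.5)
Your plan is correct and follows the same skeleton as the paper's proof. The proximal point with $\varepsilon=2\tau$ is the JKO iterate. You show $\gW\cF_\varepsilon(\mu)=2\nabla\phi^c_{\mu,\eta^\star}=2(\id-\T^\star)$; the paper's $\varepsilon\nabla\phi^{c_\varepsilon}$ with $c_\varepsilon=\tfrac1\varepsilon\|x-y\|_2^2$ is the same field. This comes from a lower bound via the sUOT dual with a frozen optimal potential and an upper bound using $\eta^\star$ as a competitor. The step with $\gamma=\tfrac12$ then gives $\T^\star_\#\mu=\eta^\star$. Your direct rescaling argument is cleaner than the paper's matching of the two maps through \cref{lemma:jko_df}.

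The real difference is the class of perturbations, which moves where the difficulty lies.

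The paper perturbs by pushforward, $(\id+s\xi)_\#\mu$. Its upper bound is then an exact inequality via the coupling $(\id+s\xi,\T^\star)_\#\mu$: $\cF_\varepsilon\big((\id+s\xi)_\#\mu\big)\le\cF_\varepsilon(\mu)+2s\int\langle\xi,\id-\T^\star\rangle\,\mathrm{d}\mu+s^2\|\xi\|_{L^2(\mu)}^2$. Only the lower bound needs a limit, which uses a.e.\ differentiability of $\phi^c$ and dominated convergence. This also tests the gradient exactly as in the appendix definition.

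With your flat perturbation $\mu_t=\mu+t(\rho-\mu)$ it is the other way round: the lower bound is exact, but the upper bound is not elementary.
\begin{itemize}
\item The mixture coupling $(1-t)(\id,\T^\star)_\#\mu+t\pi$ with $\pi\in\Pi(\rho,\eta^\star)$ only gives slope $\W_2^2(\rho,\eta^\star)-\W_2^2(\mu,\eta^\star)$. This slope is $\ge 2\int\phi^c\,\mathrm{d}(\rho-\mu)$, so it is too large.
\item You therefore genuinely need the first-variation formula for $\W_2^2(\cdot,\eta^\star)$ \citep[Prop.~7.17]{santambrogio2015optimal}. It relies on stability of the potentials along $\mu_t$ and uniqueness of the Kantorovich potential, and is proved there on compact domains.
\item Uniqueness is also what makes your two bounds match. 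The lower bound must use the optimal sUOT dual potential, which need not coincide with an arbitrary Kantorovich potential for $(\mu,\eta^\star)$.
\end{itemize}
Stability of $\eta_t$, by contrast, is never needed in your scheme. So the pushforward route is the more economical one on $\R^d$.

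The justification in your fallback is wrong. $\Df(\cdot\|\nu)$ is not convex along generalized geodesics for a general $\nu\in\cPa$; already for $\kl$ this requires $\nu$ log-concave. Uniqueness of $\eta^\star$ instead follows from flat (linear) convexity: $\eta\mapsto\W_2^2(\mu,\eta)$ is linearly convex, and $\eta\mapsto\Df(\eta\|\nu)$ is strictly linearly convex when $f$ is strictly convex. Uniqueness of the Kantorovich potential is a separate property of $\mu$ (e.g.\ a positive density on a connected open set). It does not follow from any convexity of $\Df$.
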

The proof 
is given in \Cref{Appendix:Forward=Backward}. \citet{baptista2025proximal} proposed a scheme with ICNNs based on \citep{makkuva2020optimal} to perform the Wasserstein gradient descent of $\cF_\varepsilon$. We show in \Cref{appendix:baptista} that this scheme is equivalent to solving the JKO scheme using S-JKO or VWGF with the Donsker-Varadhan formulation that we introduce next.

\paragraph{Donsker-Varadhan.} \label{paragraph:DV} Besides the usual variational formulation of $f$-divergences \eqref{eq:VarRepfdiv}, the KL divergence admits another dual variational representation, known as the Donsker-Varadhan formula \citep{donsker1983large}
\begin{equation}
\label{eq:DV}
    \kl(\mu||\nu) = \sup_{h\in C_b(\R^d)}\ \int h\ \mathrm{d}\mu - \log \int e^h\ \mathrm{d}\nu.
\end{equation}
\looseness=-1 Note that for any function $h$, this objective is pointwise greater or equal to the classical variational bound \eqref{eq:VarRepfdiv}, which corresponds to $\int h\ \mathrm{d}\mu - \int e^{h-1}\mathrm{d}\nu$ for the reverse KL. In particular, both representations coincide at optimality but Donsker-Varadhan yields a tighter lower bound. 
This was noticed by \citet{fan2022variational, baptista2025proximal}, but it has not yet been used in a generative modeling scheme to the best of our knowledge. We provide numerical comparisons in \Cref{subsec:expeDV}, and refer to \Cref{Appendix:DVformulations} for more details.

\subsection{Reparametrization Trick and GANs} \label{section:gans}

\paragraph{Reparametrization trick.}
\label{paragraph:reparamTrick}

\looseness=-1 At each JKO step, since the update is given by $\mu_{\iter+1} = \T_{\iter+1\#} \mu_{\iter}$, we need access to samples from the current distribution $\mu_{\iter}$. Hence, obtaining such samples requires recursively composing all previous transport maps, which leads to $\mathcal{O}(\iter^2)$ evaluations of neural networks just to get samples during the training (assuming we generate new source samples at each iteration $\iter$). To alleviate this complexity issue, it is possible instead to always train a neural network which pushes samples from the source $\mu_0$ to get samples from $\mu_{\iter+1}$, \eg~using the reparametrization trick of \citep{choi2024scalable}. We define $\Bar{\T}_{\iter} = \T_{\iter} \circ \cdots \circ \T_1$, so that, when $\mu_0=\mu$, we get $ \Bar{\T}_{\iter\#} \mu = \mu_{\iter}$, and $\Bar{\T}_{\iter} = \T_{\iter} \circ \Bar{\T}_{\iter-1}$. This enables to write \eqref{eq:VWGF} as
\begin{equation}
\label{eq:VWGFwReparam}
    \inf_{\Bar{\T}}\ \sup_h\ \int \big(\tfrac{1}{2\tau}\|\Bar{\T}-\Bar{\T}_\iter\|_2^2+h\circ \Bar{\T}\big)\ \mathrm{d}\mu_0 - \int f^*\circ h\ \mathrm{d}\nu.
\end{equation}
Up to a change of variable, both problems are equivalent since $\mu_\iter\in\cPa$.
This trick has two advantages: first, it reduces the cost at inference time, when generating new samples from $\mu$ as it only requires to evaluate once the final map obtained by the algorithm; second, during training, it allows to sample new instances from the source $\mu$ at each iteration $\iter$, without composing $\iter$ maps.

\paragraph{GANs and their link to JKO.} 
\label{paragraph:linkGAN-JKO}

GANs are a class of generative models based on minimizing objectives defined over the parameters of two neural networks (the generator and the discriminator) with a min-max formulation. Depending on the choice of the discrepancy, several families of GANs exist, such as the ones based on $f$-divergences \citep{nowozin2016f}, Wasserstein distances \citep{arjovsky2017wasserstein, genevay2017gan, korotin2021wasserstein} or Integral Probability Metrics \citep{mroueh2017mcgan, li2017mmd}.

The class of $f$-GANs \citep{nowozin2016f} trains the discriminator $h$ by maximizing the lower bound derived through the dual representation of $\cF(\mu)=\D_f(\nu||\mu)$, and the generator $\T$ by minimizing the lower bound of $\cF$ \emph{w.r.t} $\mu=\T_\#\mu_0$ for some $\mu_0\in\cPR$, \ie, it solves
\begin{equation}
    \inf_\T \ \sup_h\ \int h\ \mathrm{d}\nu - \int f^*\circ h\circ \T\ \mathrm{d}\mu_0.
\end{equation}
In contrast with JKO methods presented earlier, GANs solve one \emph{global} optimization problem to bring $\mu$ close to $\nu$.
In practice, the discriminator is often reparametrized as $h=g_f\circ \Tilde{h}$ with $g_f:\R^d\to\R$ valued in the domain of $f^*$, and the optimization is done over $\Tilde{h}$.
Given the choice of $g_f$, it allows to recover many formulations of GANs including vanilla GAN, see \citep{nowozin2016f}. For simplicity, we will only write $h$. 
\citet{lin2021wasserstein} proposed the Relaxed Wasserstein Proximal GAN (RWP-GAN) method, adding an outer optimization loop and 
a regularization term over the generator, \ie, solving a sequence of regularized problems, where at each step $\iter$ of the sequence the objective writes
\begin{equation}
\label{eq:RWPfGAN}
    \inf_\T\ \sup_h\ \int h\ \mathrm{d}\nu - \int f^*\circ h \circ \T\ \mathrm{d}\mu_0 + \tfrac{1}{2\tau}\|\T-\T_\iter\|_{L^2(\mu_0)}^2.
\end{equation}

\begin{proposition}
\label{prop:orientation_equivalence}

\eqref{eq:VWGF} (VWGF) and \eqref{eq:RWPfGAN} (RWP-$f$-GAN) correspond to the same scheme applied to opposite orientations of the $f$-divergence. 
When instantiated with the same orientation, the two methods coincide.

\end{proposition}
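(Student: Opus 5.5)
The plan is to write each objective as a JKO-type step, $\inf_\T \cG(\T_\#\mu_\iter) + \tfrac{1}{2\tau}\|\T-\id\|^2_{L^2(\mu_\iter)}$, and read off which functional $\cG$ is being minimized. First I will bring \eqref{eq:RWPfGAN} to the same form as \eqref{eq:VWGF} by the reparametrization trick of \Cref{paragraph:reparamTrick}. RWP-GAN optimizes a map $\T$ from $\mu_0$ with proximal term $\tfrac{1}{2\tau}\|\T-\T_\iter\|^2_{L^2(\mu_0)}$. This is exactly the form of \eqref{eq:VWGFwReparam}, with $\Bar{\T}=\T$ and $\Bar{\T}_\iter=\T_\iter$ and current iterate $\mu_\iter=(\T_\iter)_\#\mu_0$. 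When $\mu_\iter\in\cPa$ (with $\T_\iter$ invertible, or via the change of variables used after \eqref{eq:VWGFwReparam}), writing $\T=\mathrm{S}\circ\T_\iter$ turns the proximal term into $\tfrac{1}{2\tau}\|\mathrm{S}-\id\|^2_{L^2(\mu_\iter)}$. The adversarial term becomes
\[
\int h\,\mathrm{d}\nu-\int f^*\circ h\ \mathrm{d}(\mathrm{S}_\#\mu_\iter).
\]

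Next, I identify the inner supremum in each objective. For fixed $\T$ with $\rho=\T_\#\mu_\iter$, the supremum over $h$ in \eqref{eq:VWGF} is $\Df(\rho\|\nu)$ by \eqref{eq:VarRepfdiv}. In the RWP objective the supremum is $\sup_h \int h\,\mathrm{d}\nu - \int f^*\circ h\,\mathrm{d}\rho$, which by the same formula with the roles of $\mu$ and $\nu$ swapped equals $\Df(\nu\|\rho)$. So VWGF is the JKO step \eqref{eq:JKO_Brenier} for $\cF=\Df(\cdot\|\nu)$, and RWP-$f$-GAN is the same JKO step for the opposite orientation $\cF=\Df(\nu\|\cdot)$. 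This proves the first claim.

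For the second claim, I will instantiate both with the same orientation. Taking RWP-GAN with the functional $\Df(\cdot\|\nu)$ means using the variational formula $\sup_h \int h\,\mathrm{d}\rho - \int f^*\circ h\,\mathrm{d}\nu$. Equivalently, I can use the reversed generator $\tilde f(x)=xf(1/x)$, since $\D_{\tilde f}(\nu\|\rho)=\Df(\rho\|\nu)$. After the change of variables above, its objective is literally \eqref{eq:VWGFwReparam}, which is equivalent to \eqref{eq:VWGF}. Conversely, instantiating VWGF with $\tilde f$ recovers \eqref{eq:RWPfGAN}. I will check the perspective identity $\D_{\tilde f}(\nu\|\mu)=\Df(\mu\|\nu)$ on a short line, assuming mutual absolute continuity.

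The main obstacle is the change of variables between $L^2(\mu_0)$ and $L^2(\mu_\iter)$. Any map $\mathrm{S}\circ \T_\iter$ is a valid $\T$, but the converse requires every $\T$ to factor through $\T_\iter$. I would first handle this by arguing at the level of optimal values: both infima equal $\inf_\rho \cG(\rho)+\tfrac{1}{2\tau}\W_2^2(\rho,\mu_\iter)$. The lower bound $\|\T-\T_\iter\|^2_{L^2(\mu_0)}\ge \W_2^2(\T_\#\mu_0,\mu_\iter)$ holds for any $\T$, and the bound is attained by composing the optimal map of \eqref{eq:JKO_Brenier} with $\T_\iter$, which exists by Brenier's theorem since $\mu_\iter\in\cPa$.
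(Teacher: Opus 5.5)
Your proposal is correct and follows essentially the paper's route. You identify each inner supremum via the variational formula in the appropriate orientation, move between the map-from-$\mu_0$ and map-from-$\mu_\iter$ forms by the reparametrization trick, and switch orientations with the adjoint generator $\tilde f(x)=xf(1/x)$ and the identity $\D_{\tilde f}(\nu\|\mu)=\Df(\mu\|\nu)$. You run the reparametrization in the opposite direction to the paper (from RWP to VWGF, rather than from VWGF on $\Df(\nu\|\cdot)$ to RWP). Your value-level justification of the change of variables, via $\|\T-\T_\iter\|_{L^2(\mu_0)}^2\ge \W_2^2(\T_\#\mu_0,\mu_\iter)$ together with Brenier, is more careful than the paper's one-line ``up to a change of variable'' remark.
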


Consequently, in their original formulations, VWGF applied to $\Df(\mu\|\nu)$ corresponds to RWP-$f$-GAN applied to the reverse divergence $\Df(\nu\|\mu)$ (and vice versa), we refer to \cref{proof:orientation_equivalence} for the proof. For example, RWP-GAN for the forward KL corresponds to VWGF with the reverse KL.

To summarize, we have proved in this section the equivalence between several schemes, among which VWGF~\citep{fan2022variational} and SJKO~\citep{choi2024scalable} as well as with the regularized GANs (RWP-GAN) methods. These equivalent formulations can all be interpreted as instances of the Generative Wasserstein Flow (GWF) framework introduced in this work. A schematic summary of these correspondences and of the overall GWF framework is provided in \cref{fig:unifying_schema}.

\section{JKO for Generative Modeling based on $\MMD^2$ and IPMs}
\label{sec:jko_mmd}

In this section, we define a novel family of generative modeling methods, based on JKO schemes applied to functionals $\cF=\D(\cdot,\nu)$ for $\D$ divergences which can be obtained through an alternative variational formulation other than $f$-divergences, hence extending the GWF framework. In particular, we consider Integral Probability Metrics (IPMs) \citep{muller1997integral} such as the Maximum Mean Discrepancy (MMD) and the Wasserstein-1 distance. While not stricto sensu an IPM, we also focus on the squared MMD, and make connections with MMD GANs.

\begin{algorithm}[tb]
    \caption{Primal--Dual Training Procedure of $\MMD_k^2$}
    \label{alg:primal_dual_mmd_samples}
    \begin{algorithmic}
    \STATE \textbf{Inputs}: $K$ outer (JKO) steps, $N$ inner steps, $k$ a p.s.d. kernel, $\tau$ JKO step size, $\eta_\T$ network step size
    \STATE Initialize $\T_{\theta_{0}^N} = I_d$
    \FOR{$\iter = 1, \ldots, K$}
        \FOR{$i = 1, \ldots, N$}
            \STATE Sample $(z_j)_{j=1}^n \sim \mu_0$, $(y_j)_{j=1}^m \sim \nu$
            \STATE Compute $x_j(\theta_\ell^i) = \T_{\theta_\iter^i}(z_j)$ for $j=1,\ldots,n$
            \STATE Define $g=
            \frac{1}{n}\sum_{j=1}^n k(x_j(\theta_\ell^i),\cdot)-\frac{1}{m}\sum_{j=1}^m k(y_j,\cdot)$
            \STATE {\small
$\begin{aligned}
J(\theta_\iter^i)
= \frac{1}{n}\sum_{j=1}^n
\Big[
&\frac{1}{2\tau}\|T_{\theta_{\iter-1}^N}(z_j)-x_j(\theta_\ell^i)\|_2^2
- g(x_j(\theta_\ell^i))
\Big]
\end{aligned}$
}
            \STATE Update $\theta_\iter^{i+1}=\theta_\iter^{i} -\eta_T \nabla_\theta J(\theta_\iter^i).$
        \ENDFOR
        \STATE $\theta_{\iter+1}^{1}=\theta_{\iter}^{N}$
    \ENDFOR
    \end{algorithmic}
\end{algorithm}

\paragraph{Dual formulation of JKO.} 

\looseness=-1 Beyond $f$-divergences, several popular metrics are defined through a variational formulation. For instance, IPMs are distances between probability distributions defined as $\IPM_{\cG}(\mu,\nu)=\sup_{f\in\cG} \big|\E_{\mu}[f(X)]-\E_{\nu}[f(X)]\big|$ for a chosen class of test functions $\cG$. When $\cG$ is the unit ball of a reproducing kernel Hilbert space $\cH_k$ with positive semidefinite kernel $k$, we obtain the MMD, and for $\cG$ the set of 1-Lipschitz functions, we obtain the Wasserstein-1 distance, \ie~the OT problem with cost $c(x,y)=\|x-y\|_2$ \citep{sriperumbudur2009integral}. While not an IPM, the squared MMD also admits a variational form as (see \Cref{lemma:var_form_mmd2} or \citep{mroueh2021convergence})
\begin{equation}
    \tfrac12\MMD^2_k(\mu,\nu)=\sup_{g\in \cH_k}\ \int g\ \mathrm{d}(\mu-\nu) - \tfrac12\|g\|_{\cH_k}^2.
\end{equation}
In particular, the supremum for the squared MMD can be found in closed-form as $g(\cdot)=\int k(x,\cdot)\ \mathrm{d}(\mu-\nu)(x)$.

Following the method of \citep{choi2024scalable} for $f$-divergences, we derive the dual formulation of the JKO scheme seen as a source-fixed UOT problem with $\cF$ chosen either as an IPM or the squared MMD. Full derivations are in \Cref{appendix:MMD}. In particular, the dual for IPMs follows from \citep{manupriya2024mmdregularized}. We now present the method for the squared MMD which is new to the best of our knowledge. In this case, the dual of the source-fixed UOT problem is 
\begin{equation}
    \sup_{g\in\cH_k}\ \int g^c\ \mathrm{d}\mu + \int g \ \mathrm{d}\nu - \tfrac{1}{4\tau}\|g\|_{\cH_k}^2.
\end{equation}
Hence, for the $\iter$-th step of JKO for the squared MMD, we can then parametrize $g^c$ by a measurable map $\T$ as $g^c(x)=\inf_\T\ c\big(x,\T(x)\big) - g\big(\T(x)\big)$ and obtain for $c(x,y)=\|x-y\|_2^2$ and with the change of variable $g=2\tau h$ the problem
\begin{equation}
    \sup_{h\in\cH_k}\ \inf_{\T} \ \int \big(\tfrac{1}{2\tau}\|\id-\T\|_2^2 - h\circ \T\big)\ \mathrm{d}\mu_\iter + \int h\ \mathrm{d}\nu - \tfrac12 \|h\|_{\cH_k}^2.
\end{equation}
\looseness=-1 Hence, we define a generative modeling scheme by solving this problem. Straightforward computations (detailed in \cref{appendix:JKOwithMMDsquared}) yield the scheme presented in \Cref{alg:primal_dual_mmd_samples}, where we additionally use the reparametrization trick as described in \Cref{section:gans}. Note that compared to the case of $f$-divergences, we do not have an adversarial scheme because the witness function (the discriminator) is available in closed form in the case of squared MMD. This scheme can be seen as a JKO regularized version of the Generative Moment Matching scheme \citep{li2015generative, dziugaite2015training}.

\paragraph{JKO MMD GAN.} 

We observe in practice that the previous algorithm does not perform well 
in high dimensions (see \cref{appendix:xpsLimitationAlgo1} for further details). 
We hypothesize that one might need a discriminator function that is adapted to the current iteration.  
To overcome this, we take inspiration from the MMD GAN literature~\citep{li2017mmd,bińkowski2018demystifying}, and learn simultaneously an embedding through which we compare the measures (hence a discriminator). The problem becomes
\begin{equation}
\label{eq:MMDJKOGAN}
    \max_\phi\ \min_\T\ \tfrac12\|\id-\T\|_{L^2(\mu_\iter)}^2 + \mathrm{MMD}^2_{k_\phi}(\T_{\#}\mu_\iter, \nu),
\end{equation}
where $k_\phi(x,y)=k\big(\phi(x), \phi(y)\big)$. The resulting optimization problem is solved using an adversarial scheme, see \Cref{appendix:jko_mmd_gan} and \cref{alg:mmd_wpgan}. This new algorithm can be seen as a specification of the RWP-GAN to MMD GANs (\ie, a JKO regularization of MMD GANs), which was not considered by \citet{lin2021wasserstein}. 

Note that a similar derivation can be carried out for IPMs, such as the Wasserstein-1 distance;
we detail the corresponding UOT dual derivation in \cref{paragraph:sUOTIPM} and omit it here for brevity. The overall GWF framework is illustrated in \cref{fig:unifying_schema}, and its practical implementation is detailed in \cref{appendix:PracticalAlgorithm}.

\section{Experiments}\label{sec:experiments}

In this section, we empirically support and extend the results developed throughout the paper.
Our experiments are designed to illustrate the behavior of Generative Wasserstein Flow (GWF) under different divergence choices and regularization regimes.

\vspace{-0.75em}

\begin{itemize}
    \setlength\itemsep{0.2em} 
    \item In \cref{subsec:expeDV}, we compare two variational formulations of the Kullback--Leibler divergence: the classical $f$-divergence representation and the Donsker-Varadhan formulation introduced in \cref{paragraph:DV}.
    \item In \cref{subsec:expeMMD}, we study GWF with the squared MMD. 
    We highlight the necessity of adversarial training in these models, and compare our approach with the unregularized counterpart, namely MMD-GAN.
    \item In \cref{subsec:expeAllDiv}, we investigate the role of regularization across a broad class of divergences, including $f$-divergences, IPMs and squared MMD. 
    \item In \cref{subsec:additionalExpeGWF}, we report complementary experimental results, including an analysis of inner optimization accuracy, a comparison between forward and reverse KL formulations, and an evaluation of the computational cost of JKO, together with additional quantitative tables and qualitative sample visualizations.
\end{itemize}

\vspace{-0.75em}

We focus on image generation tasks, and conduct experiments on the MNIST 
($1\times28\times28$) and the CIFAR-10 datasets ($3\times32\times32$) \cite{krizhevsky2009learning}.
We use four different architectures throughout our experiments: \emph{U-Net}, \emph{Large-Net}, \emph{Small-Net}, and \emph{ResNetMMD}
, for both generators and critics.
Across all experiments, model performance is evaluated using the Fréchet Inception Distance (FID) \citep{heusel2017gans}.
Unless stated otherwise, we use the dimension-normalized quadratic transport cost $c(x,y)=\|x-y\|_2^2/d$. All experiments are conducted on a single NVIDIA H100 SXM5 GPU.
Architectural details, as well as additional implementation details and hyperparameters, are provided in \cref{Appendix:ImplementationDetails}\footnote{Code available at \url{https://github.com/Paulcauch/Generative_Wasserstein_Flows}}. 

\subsection{Donsker-Varadhan Formulation}
\label{subsec:expeDV}

\looseness=-1 We study whether the Donsker–Varadhan (DV) formulation of the KL divergence \eqref{eq:DV} improves the empirical performance of GWF. Although its Monte Carlo estimator is biased due to the logarithm, it optimizes a larger variational lower bound. We therefore evaluate whether this bias nonetheless leads to improved sample quality in practice.

\looseness=-1 We conduct the experiments with the \emph{U-Net}, \emph{Large-Net}, and \emph{Small-Net}.
We fix the JKO step size to $\tau = 0.5$ for the \emph{Small-Net}, and $\tau = 0.2$ for the other architectures. At each JKO step, we perform $N=2000$ inner optimization steps to train the \emph{Large-Net}, and $N=1000$ for the other architectures. \Cref{fig:dv_fid_curves} reports the evolution of the FID as a function of the JKO iteration for each architecture and for both the DV and classical variational formulation of the KL \eqref{eq:VarRepfdiv}. In addition, the final FID values are summarized in \Cref{tab:dv_kl_arch}. Overall, the DV formulation leads to improved or comparable FID values across architectures throughout the JKO trajectory. These results suggest that the DV formulation constitutes a competitive alternative to the classical variational KL formulation for generative modeling.

\subsection{Squared MMD JKO GAN}
\label{subsec:expeMMD}

We now investigate GWFs with the squared MMD, introduced at the end of \cref{sec:jko_mmd}. In all experiments, we use a weighted sum of kernels whose weights are learned during training, following the approach of \citet{zhang2025ckgan}, see \cref{Appendix:ImplementationDetails} for more details. 
As discussed in \Cref{sec:jko_mmd}, training with \Cref{alg:primal_dual_mmd_samples} leads to 
poor sample quality; additional experiments and details are provided in \Cref{appendix:xpsLimitationAlgo1}.
As a consequence, all subsequent experiments rely on a specific instantiation of our framework, namely \Cref{alg:mmd_wpgan}, based on \eqref{eq:MMDJKOGAN} and derived in \cref{appendix:jko_mmd_gan}.
This algorithm can be interpreted as a JKO-regularized version of the classical MMD-GAN.

\looseness=-1 We consider several variants of MMD-based adversarial losses commonly studied in the literature:
\emph{ckMMDGAN} \citep{zhang2025ckgan} and
\emph{sMMDGAN} \citep{sMMDGAN2018}. 
These variants modify the standard MMD objective to improve performance, see \cref{appendix:PracticalAlgorithm} for further details.
Our goal is to assess whether the proposed JKO-regularized formulation improves over the corresponding unregularized MMD-GAN baselines.
We conduct experiments on MNIST and CIFAR-10, fixing the JKO step size to $\tau = 0.5$ and running the algorithm for $K=140$ JKO steps on CIFAR-10 and $K=100$ on MNIST, with $N=1000$ optimization steps at each JKO iteration. In the unregularized setting which corresponds to the original underlying GANs
, where no JKO regularization is used, we use the same total number of parameter updates, namely $K \times N$. We use the \emph{ResNet-MMD} architecture for MNIST and \emph{Small-Net} for CIFAR-10. Quantitative results are summarized in terms of FID in \cref{tab:mmd_jko_fid}. 
On MNIST, JKO consistently leads to lower FID across all losses.
On CIFAR-10, JKO yields clear gains for MMD and sMMD, while having a more limited effect for the ckMMD loss.
Overall, these results suggest that JKO regularization generally improves or stabilizes MMD-based adversarial training. 

\begin{figure}[t]
    \centering
    \includegraphics[width=\linewidth]{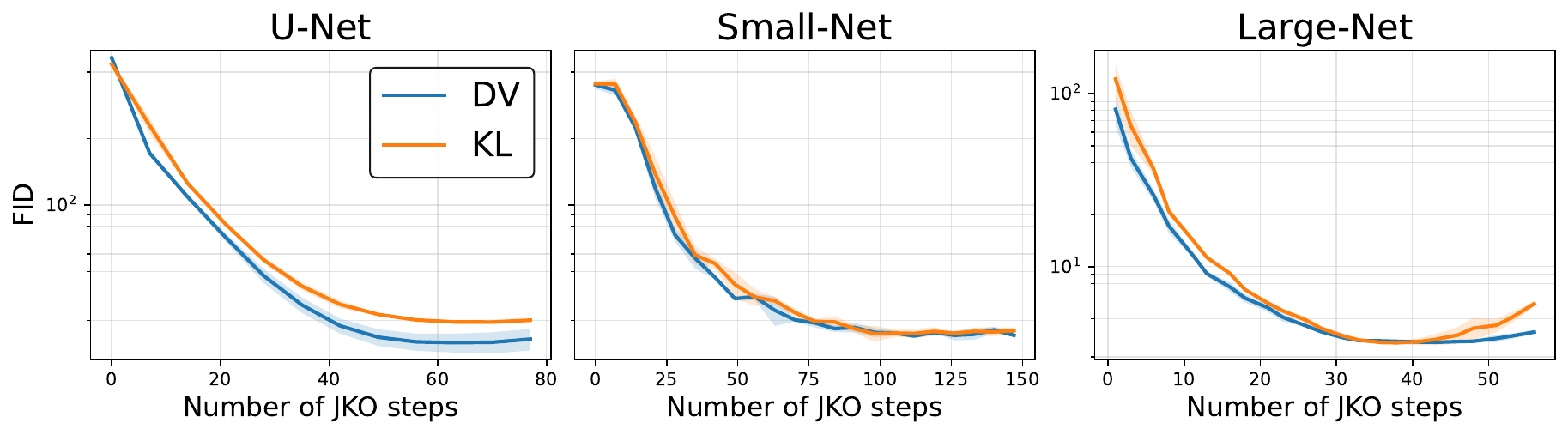}
        \caption{
    FID versus JKO iteration on CIFAR-10, comparing the classical KL formulation with the Donsker–Varadhan (DV) formulation across three network architectures.
    }
    \label{fig:dv_fid_curves}
    \vspace{-1.5em}
\end{figure}

\subsection{GWF for Various Divergences}
\label{subsec:expeAllDiv}

We now extend our experimental study to a broader class of divergences, including $f$-divergences, IPMs and squared MMD. For $f$-divergences, we consider the variational formulation of KL, Jensen-Shannon and $\chi^2$ divergences, as well as the DV formulation of KL, following the framework discussed in \Cref{sec:jko_fdiv}. Following \Cref{sec:jko_mmd}, we consider the Wasserstein-1  ($\W_1$) distance as an IPM, and the $\MMD^2$. The corresponding losses, as well as a generic training algorithm, are detailed in \Cref{appendix:PracticalAlgorithm}.
Our objective is to assess the need for JKO regularization for each divergence, extending  experiments done in \citep{fan2022variational, choi2024scalable} for $f$-divergences, and in \citep{lin2021wasserstein} for GANs. 

Note that these experiments were not designed to achieve the best possible performance, but rather to isolate and study the effect of JKO regularization under a unified experimental setup. 
To ensure a fair comparison across divergences, we use the same architecture (\emph{Small-Net}) and comparable hyperparameters for all methods, which allows us to attribute performance differences to the optimization scheme rather than to model capacity.

We compare several JKO step sizes, namely $\tau \in \{0.01, 1, 100\}$, as well as a baseline without JKO regularization. The latter corresponds to an infinite-step-size limit and recovers standard $f$-GAN training for the reverse $f$-divergence (\eg, KL yields the reverse-KL $f$-GAN), while for symmetric divergences such as $\W_1$ and $\MMD^2$, it coincides with WGAN and MMDGAN.

The resulting training dynamics on CIFAR-10, measured in terms of FID as a function of the JKO iteration, are reported in \cref{fig:all_div_jko}, and summarized quantitatively in \cref{tab:final_fid_jko}. We recall that for all methods, we use the same number of generator updates (see end of \Cref{subsec:expeMMD}). 
Overall, JKO regularization improves training across most divergences, provided that the step size is chosen appropriately. 
Very small step sizes ($\tau=0.01$) often lead to slow or unstable convergence, while excessively large step sizes ($\tau=100$) tend to degrade performance, as the objective approaches the unregularized regime.
Compared to the results of \cref{subsec:expeMMD}, which relied on a different architecture, the benefits of JKO regularization for MMD-based objectives (MMD, ckMMD, sMMD) are much more clearly visible in this unified experimental setting, with improvements of up to 20 FID points in the MMD case. For $f$-divergences, JKO yields moderate yet consistent improvements. In the case of $\W_1$, small JKO step sizes significantly stabilize training compared to the standard WGAN objective. These results highlight the importance of tuning $\tau$, and show that the effectiveness of JKO regularization depends on both the divergence and the neural network architecture.

\begin{table}[t]
    \centering
    \caption{
        FID for different MMD losses with or without JKO regularization across MNIST and CIFAR10 (averaged over 3 runs).
    }
    \small
    \resizebox{\linewidth}{!}{
        \begin{tabular}{l cc cc}
         & \multicolumn{2}{c}{\textbf{MNIST}} & \multicolumn{2}{c}{\textbf{CIFAR-10}} \\
        \textbf{Loss} 
        & \textbf{no JKO} & \textbf{JKO} 
        & \textbf{no JKO} & \textbf{JKO} \\
        \toprule
        ckMMD 
        & $2.173 \pm 0.145$ 
        & $\mathbf{1.971 \pm 0.079}$ 
        & $13.96 \pm 0.56$ 
        & $\mathbf{12.78 \pm 0.88}$ \\
        
        sMMD  
        & $1.464 \pm 0.092$ 
        & $\mathbf{1.324 \pm 0.070}$ 
        & $59.79 \pm 7.79$ 
        & $\mathbf{26.79 \pm 0.23}$ \\

        MMD 
        & $1.028 \pm 0.110$ 
        & $\mathbf{0.983 \pm 0.026}$ 
        & $49.39 \pm 13.48$ 
        & $\mathbf{15.76 \pm 0.08}$ \\
        \bottomrule
        \end{tabular}
    }
    \label{tab:mmd_jko_fid}
    \vspace{-2em}
\end{table}

\begin{figure*}[t]
    \centering
    \includegraphics[width=\linewidth]{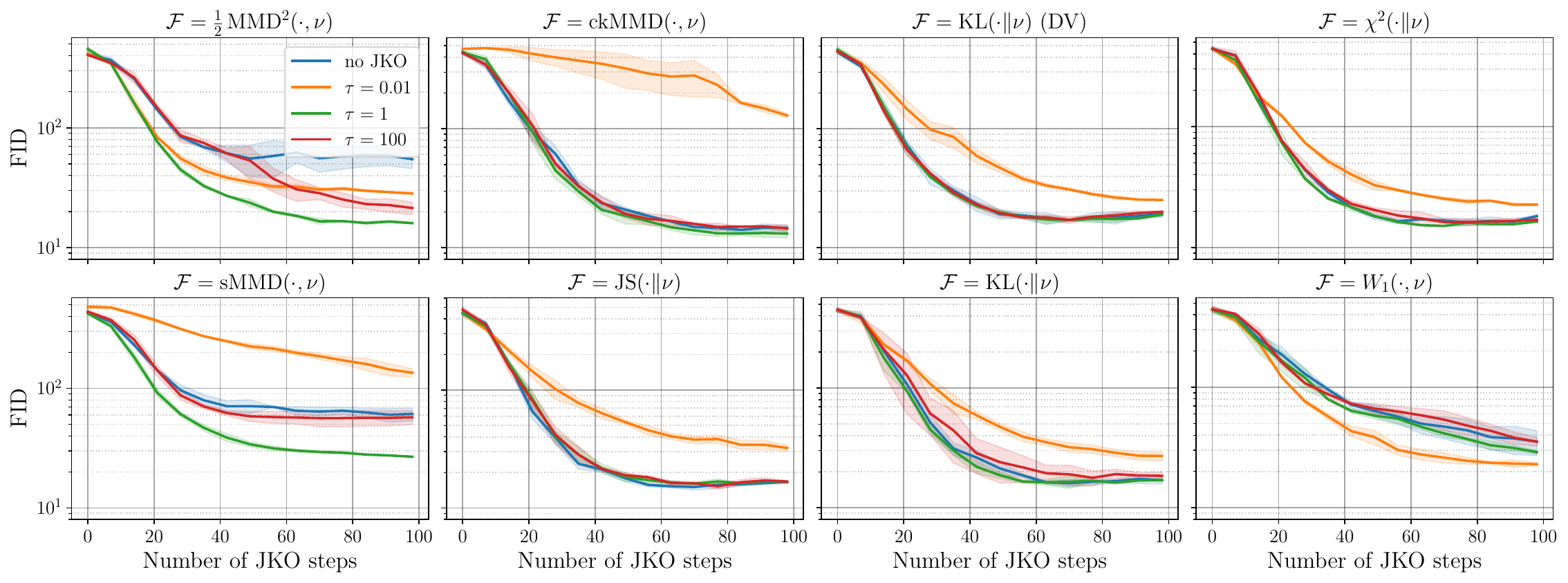}
    \caption{
    Impact of JKO regularization on training dynamics for various divergences and step sizes $\tau$ on CIFAR-10.
        All experiments use the same architecture (\emph{Small-Net}) and comparable hyperparameters, and are averaged over 3 runs.}
    \label{fig:all_div_jko}
    \vspace{-1em}
\end{figure*}

\subsection{Practical Insights and Limitations}

We summarize here several practical insights and limitations observed during our experiments with the GWF framework.

\textbf{Effect of the JKO step size.} 
The step size $\tau$ controls the trade-off between stability and convergence speed and therefore requires careful tuning in practice. Its optimal value depends on the divergence, architecture, and dataset, making the choice of $\tau$ problem-specific rather than universal. 

\textbf{Inner optimization steps.}
The number of generator and critic updates per JKO step affects both performance and computational cost. A moderate number of inner iterations is typically sufficient: too many increase cost without clear gains, while too few degrade the quality of the subproblem solution (see \cref{fig:all_div_inner}). 

\textbf{Reverse divergences.}
We did not observe major convergence differences between opposite orientations of a divergence. For example, in the case of forward and reverse KL, the usual mode-seeking behavior associated with $\kl(\mu\|\nu)$ appears mitigated in practice (see \cref{fig:kl_vs_reversekl}), likely because the variational formulations depend only on expectations and are therefore less sensitive to support mismatch.

\textbf{Adversarial optimization.}
As in standard GAN training, insufficient critic regularization leads to unstable training across divergences. More generally, the reliance on adversarial optimization makes the method sensitive to hyperparameter tuning and initialization compared to alternative paradigms such as score-based models.

\textbf{Cost--benefit trade-off and scope of the method.}
Introducing JKO regularization increases computational cost due to the additional proximal term and inner optimization steps. In our experiments, this overhead remained moderate and does not depend on the architecture (approximately $8.6\%$ on average; see \cref{tab:jko_overhead,tab:architecture_cost}), but its benefit is not uniform across divergences. However, when $\tau$ is properly chosen, it does not degrade the original performance.

\section{Theoretical View of Parametric Flows}
\label{sec:theory}

In this section, we study the dynamics induced by the parametric schemes considered throughout the paper when the optimization is restricted to a family of pushforward measures. Our goal is to clarify how these dynamics relate to Wasserstein gradient flows, and to show that the parametric JKO scheme used in practice can be interpreted, in the small-step regime, as a projected Wasserstein gradient flow.

Consider the optimization problem
\begin{equation}\label{eq:parametric_pb}
    \min_{\theta\in\R^p}\ \cF(\mu_{\theta}),
    \qquad \mu_{\theta} = (\F_{\theta})_{\#} \mu,
\end{equation}
where $\mu\in \cPR$ is a given source distribution and 
$\F_{\theta}:\R^d\to \R^d$ is a measurable map parametrized by 
$\theta \in \R^p$. 
This formulation encompasses generative methods based on direct divergence minimization, such as normalizing flows, variational autoencoders, or GANs assuming a perfect discriminator, as well as the JKO-based schemes parametrized by neural networks introduced in the previous sections. Furthermore, throughout this section, we assume that $\F_{\theta}$ is invertible 
for all $\theta$, and for all $x\in\R^d$, $\theta\mapsto F_\theta(x)$ is differentiable.

\paragraph{Parametric Flows.}

Let $(\theta_t)_{t\ge 0}$ be a parameter trajectory. We are interested in studying the induced dynamic on the space of probability distributions, through the lens of a continuity equation 
\begin{equation} \label{eq:parametric_flow}   
    \partial_t\mu_{\theta_t} + \nabla\cdot(\mu_{\theta_t}v_{\theta_t}) = 0,
\end{equation}
\looseness=-1 with $v_{\theta_t}:\R^d\to \R^d$ a parametric vector field, describing the evolution of $\mu_{\theta_t}$ through time. Let $x_t = \F_{\theta_t}(z)$ for $z \sim \mu$, so that $x_t \sim \mu_{\theta_t}$. Since $\F_{\theta_t}$ is invertible, particles evolve according to $\dot{x}_t= \partial_{\theta}\F_{\theta_t}\big(\F_{\theta_t}^{-1}(x_t)\big)\dot{\theta}_t$ where  $\partial_\theta \F_{\theta}(\cdot)\in \R^{d\times p}$ denotes  the Jacobian of $\theta\mapsto \F_{\theta}(\cdot)$.
The induced velocity field is therefore  $ v_{\theta_t}(x) = \partial_\theta \F_{\theta_t}\big(\F_{\theta_t}^{-1}(x)\big) \dot{\theta_t}$. 
Define the operator $\cG_\theta:L^2(\mu_\theta)\to\R^p$ as, for $f\in L^2(\mu_\theta)$,
\begin{equation}
    \cG_\theta(f) = \int \partial_\theta\F_\theta(z)^\top f\big(\F_\theta(z)\big)\ \mathrm{d}\mu(z) \in \R^p.
\end{equation}

\begin{proposition}\label{prop:conditioner}
\looseness=-1
Assume $\theta\mapsto \cF(\mu_\theta)$ differentiable and consider $\dot{\theta}_t = -\nabla_{\theta}\cF(\mu_{\theta_t})$. 
Define the operator $\cH_{\theta}:L^2(\mu_{\theta})\to L^2(\mu_{\theta})$ 
for any $f\in L^2(\mu_\theta)$, $x\in \R^d$ by
\begin{equation} \label{eq:preconditioner}
    \big[\cH_{\theta} f\big](x)
    := \partial_\theta \F_{\theta}\big(\F_{\theta}^{-1}(x)\big) \, \cG_\theta(f).
\end{equation}
Let $v_{\theta_t}:= -\cH_{\theta_t} \gW\cF(\mu_{\theta_t})$, then $v_{\theta_t}$ satisfies \eqref{eq:parametric_flow}.
\end{proposition}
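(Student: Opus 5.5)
The plan is to reduce everything to the particle velocity computed just before the statement, and then identify $\dot\theta_t$ through a chain rule in parameter space. We already know that if $x_t=\F_{\theta_t}(z)$ with $z\sim\mu$, then $\mu_{\theta_t}=(\F_{\theta_t})_\#\mu$ solves the continuity equation \eqref{eq:parametric_flow} with velocity
\[
v(x)=\partial_\theta\F_{\theta_t}\big(\F_{\theta_t}^{-1}(x)\big)\,\dot\theta_t .
\]
To check this, I would test against $\varphi\in C_c^\infty(\R^d)$ and differentiate:
\[
\frac{\mathrm{d}}{\mathrm{d}t}\int\varphi\,\mathrm{d}\mu_{\theta_t}=\int\nabla\varphi(\F_{\theta_t}(z))^\top\partial_\theta\F_{\theta_t}(z)\dot\theta_t\,\mathrm{d}\mu(z)=\int\nabla\varphi^\top v\,\mathrm{d}\mu_{\theta_t}.
\]
This is the weak form of \eqref{eq:parametric_flow}. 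Differentiation under the integral is justified by the differentiability of $\theta\mapsto\F_\theta(x)$ together with a domination assumption, which I would state as standing regularity. The statement then reduces to showing that $\dot\theta_t=-\nabla_\theta\cF(\mu_{\theta_t})$ equals $-\cG_{\theta_t}\big(\gW\cF(\mu_{\theta_t})\big)$. Once this holds, substituting into $v$ gives exactly $v_{\theta_t}=-\cH_{\theta_t}\gW\cF(\mu_{\theta_t})$ by the definition \eqref{eq:preconditioner}.

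The key step is therefore the chain rule $\nabla_\theta\cF(\mu_\theta)=\cG_\theta(\gW\cF(\mu_\theta))$. Fix a direction $\xi\in\R^p$ and consider the curve $s\mapsto\mu_{\theta+s\xi}=(\F_{\theta+s\xi})_\#\mu$. By the same computation as above, this curve solves a continuity equation at $s=0$ with velocity $w_\xi(x)=\partial_\theta\F_\theta(\F_\theta^{-1}(x))\xi$. Equivalently, it is a pushforward perturbation $\mu_{\theta+s\xi}=(\id+s\,w_\xi+o(s))_\#\mu_\theta$, obtained by writing $\F_{\theta+s\xi}=(\F_{\theta+s\xi}\circ\F_\theta^{-1})\circ\F_\theta$, which uses invertibility.

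I would then use the standard first-order expansion of a functional along such a curve:
\[
\frac{\mathrm{d}}{\mathrm{d}s}\Big|_{s=0}\cF(\mu_{\theta+s\xi})=\int\langle\nabla\cF'(\mu_\theta),w_\xi\rangle\,\mathrm{d}\mu_\theta .
\]
Formally this follows by writing $\cF(\mu_s)-\cF(\mu_0)\approx\int\cF'(\mu_0)\,\mathrm{d}(\mu_s-\mu_0)$ via the first variation, and then applying the test-function computation with $\varphi=\cF'(\mu_\theta)$. A change of variables $x=\F_\theta(z)$ turns the right-hand side into
\[
\xi^\top\int\partial_\theta\F_\theta(z)^\top\gW\cF(\mu_\theta)(\F_\theta(z))\,\mathrm{d}\mu(z)=\xi^\top\cG_\theta\big(\gW\cF(\mu_\theta)\big).
\]
Since $\xi$ is arbitrary, this identifies $\nabla_\theta\cF(\mu_\theta)$ as claimed.

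The main obstacle is making this chain rule rigorous. The first variation gives an expansion along linear interpolations, not along pushforwards, and $\cF'(\mu_\theta)$ need not be smooth or compactly supported. I would handle it by assuming enough regularity, namely that $\gW\cF(\mu_\theta)\in L^2(\mu_\theta)$ is a genuine Wasserstein gradient in the sense of \citet{ambrosio2008gradient}, so that $\cF$ is differentiable along $L^2$-perturbations of the identity. Under that assumption the derivative along $(\id+s w_\xi)_\#\mu_\theta$ is $\langle\gW\cF(\mu_\theta),w_\xi\rangle_{L^2(\mu_\theta)}$. The $o(s)$ remainder is absorbed because $\theta\mapsto\cF(\mu_\theta)$ is assumed differentiable, so the directional derivative exists and coincides with this expression. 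The remaining steps are bookkeeping.
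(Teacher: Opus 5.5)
Your proposal is correct and follows essentially the paper's route: the paper likewise obtains $\nabla_\theta\cF(\mu_\theta)=\cG_\theta\big(\gW\cF(\mu_\theta)\big)$ from a first-order expansion of $\cF$ along the coupling $(\F_\theta,\F_{\theta+h})_\#\mu$, which is your $(\id,\F_{\theta+s\xi}\circ\F_\theta^{-1})_\#\mu_\theta$, and then reads off the velocity from $x_t=\F_{\theta_t}(z)$. One justification needs correcting: differentiability of $\theta\mapsto\cF(\mu_\theta)$ does not absorb the $o(s)$ remainder; what absorbs it is the strong, any-coupling (Fr\'echet-in-$L^2$) form of Wasserstein differentiability \citep[Proposition 2.12]{lanzetti2025first} together with $L^2(\mu)$-differentiability of $\theta\mapsto\F_\theta$, which is exactly what the paper uses.
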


Hence, when the parameter dynamics follow the Euclidean flow, the induced evolution at the distribution level can be interpreted as a WGF preconditioned by the operator $\cH_\theta$.

\paragraph{Projected flows.} 

The operator $\cH_\theta$ is in general not a projection operator, since typically $\cH_\theta\circ\cH_\theta \ne \cH_\theta$ (see \Cref{prop:Hnotaprojection}). 
We now show that a projection structure can be recovered by endowing the parameter space with the metric induced by the parametrization. More
precisely, for a smooth curve $(\theta_t)_t$, the squared $L^2(\mu_{\theta_t})$ norm of the induced velocity field is given by $
\|v_{\theta_t}\|_{L^2(\mu_{\theta_t})}^2
=
\dot{\theta}_t^\top G(\theta_t)\dot{\theta}_t,
$
where
$
G(\theta)=\int \partial_\theta \F_\theta(z)^\top\partial_\theta\F_\theta(z)\, \mathrm{d}\mu(z)\in \R^{p\times p}.
$
This defines a Riemannian metric on $\R^p$, provided that $G(\theta)$ is invertible.
We refer to
\cref{appendix:parametric_geometry} for the precise geometric
derivations. In the following, we state results under this invertibility assumption, as in \citep{zuo2025numerical,dumont2026learning}. In practice, however, $G(\theta)$ may fail to be invertible, especially for overparameterized neural networks, in which case one may replace $G(\theta)^{-1}$ by the Moore--Penrose pseudoinverse, as in \citep{jin2025parameterized}.

\begin{proposition} 
\looseness=-1
\label{prop:orthogonal_projector}
Assume $\theta\mapsto \cF(\mu_\theta)$ differentiable and consider $\dot{\theta}_t=-G(\theta_t)^{-1}\nabla_\theta\cF(\mu_{\theta_t})$. Define the operator $\Tilde{\cH}_\theta : L^2(\mu_\theta)\to L^2(\mu_\theta)$ 
for any $f\in L^2(\mu_\theta)$, $x\in\R^d$ by
\begin{equation}
    [\Tilde{\cH}_\theta f](x)
    = \partial_\theta\F_\theta\big(\F_{\theta}^{-1}(x)\big)\, G(\theta)^{-1}\cG_\theta(f).
\end{equation}

Then $\Tilde{\cH}_\theta$ defines an orthogonal projector from 
$L^2(\mu_\theta)$ to the subspace
$
V_\theta = \{x\mapsto \partial_\theta \F_\theta\big(\F_{\theta}^{-1}(x)\big)u,\ u\in\R^p\}$. 

Furthermore, if we define
$
v_{\theta_t}
:=
-\Tilde{\cH}_{\theta_t}\gW\cF(\mu_{\theta_t}),
$
then $v_{\theta_t}$ satisfies \eqref{eq:parametric_flow}.

\end{proposition}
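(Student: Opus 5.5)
The proof has two parts: the algebraic projector property of $\Tilde{\cH}_\theta$, and the continuity-equation claim. Both rest on identifying $\cG_\theta$ as the adjoint of the linear map $A_\theta:\R^p\to L^2(\mu_\theta)$, $A_\theta u = \partial_\theta\F_\theta(\F_\theta^{-1}(\cdot))\,u$, whose range is exactly $V_\theta$.

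For $f\in L^2(\mu_\theta)$ and $u\in\R^p$, the change of variables $x=\F_\theta(z)$ (valid since $\mu_\theta=(\F_\theta)_\#\mu$ and $\F_\theta$ is invertible) gives
\[
\langle A_\theta u, f\rangle_{L^2(\mu_\theta)} = \int u^\top\partial_\theta\F_\theta(z)^\top f(\F_\theta(z))\,\mathrm{d}\mu(z) = u^\top\cG_\theta(f).
\]
Hence $\cG_\theta=A_\theta^*$. The same computation gives $A_\theta^*A_\theta = G(\theta)$, which needs $\partial_\theta\F_\theta\in L^2(\mu)$; I assume this implicitly, as it is already needed for $G(\theta)$ to be defined. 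We can then write $\Tilde{\cH}_\theta = A_\theta G(\theta)^{-1}A_\theta^*$ and check three properties.
\begin{itemize}
\item \emph{Idempotence:} $\Tilde{\cH}_\theta^2 = A_\theta G^{-1}(A_\theta^*A_\theta)G^{-1}A_\theta^* = \Tilde{\cH}_\theta$.
\item \emph{Self-adjointness:} this follows because $G(\theta)$ is symmetric, so $G(\theta)^{-1}$ is too.
\item \emph{Range equal to $V_\theta$:} the range is contained in $\mathrm{range}(A_\theta)=V_\theta$. Conversely, $\Tilde{\cH}_\theta A_\theta u = A_\theta G^{-1}G u = A_\theta u$, so $\Tilde{\cH}_\theta$ fixes every element of $V_\theta$.
\end{itemize}
Since $V_\theta$ is finite-dimensional, it is closed, and an idempotent self-adjoint operator with range $V_\theta$ is the orthogonal projector onto $V_\theta$.

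For the second claim, the computation before \Cref{prop:conditioner} already shows that $v_{\theta_t}(x)=\partial_\theta\F_{\theta_t}(\F_{\theta_t}^{-1}(x))\,\dot\theta_t = A_{\theta_t}\dot\theta_t$ solves \eqref{eq:parametric_flow}. It therefore suffices to show $A_{\theta_t}\dot\theta_t = -\Tilde{\cH}_{\theta_t}\gW\cF(\mu_{\theta_t})$.

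This reduces to the chain rule $\nabla_\theta\cF(\mu_\theta) = \cG_\theta(\gW\cF(\mu_\theta))$, which is the identity underlying \Cref{prop:conditioner}. I would justify it the same way: differentiate $\epsilon\mapsto\cF((\F_{\theta+\epsilon u})_\#\mu)$. This is a pushforward by the perturbation $\F_{\theta+\epsilon u}\circ\F_\theta^{-1} = \id + \epsilon A_\theta u + o(\epsilon)$ of $\mu_\theta$. The first-order expansion of $\cF$ along such perturbations is $\int\langle\gW\cF(\mu_\theta),A_\theta u\rangle\,\mathrm{d}\mu_\theta$, which equals $u^\top\cG_\theta(\gW\cF(\mu_\theta))$ by the adjoint identity. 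Substituting,
\[
A_{\theta_t}\dot\theta_t = -A_{\theta_t}G(\theta_t)^{-1}\cG_{\theta_t}(\gW\cF(\mu_{\theta_t})) = -\Tilde{\cH}_{\theta_t}\gW\cF(\mu_{\theta_t}),
\]
which is the claim.

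The main obstacle is rigour in the chain-rule step: it requires $\cF$ to be differentiable along pushforward perturbations with the Wasserstein gradient as derivative. I would take this as the standing regularity assumption implicit in \Cref{prop:conditioner} and defer to that result rather than reprove it.
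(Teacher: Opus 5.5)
Your proof is correct and follows the paper's argument. The factorization $\Tilde{\cH}_\theta = A_\theta G(\theta)^{-1}A_\theta^*$, with $\cG_\theta = A_\theta^*$ and $G(\theta)=A_\theta^*A_\theta$, is a compact repackaging of the paper's direct integral computations of idempotence and self-adjointness, after which the paper invokes Rudin's characterization of orthogonal projections. The second claim rests on the same chain rule $\nabla_\theta\cF(\mu_\theta)=\cG_\theta(\gW\cF(\mu_\theta))$ that is proved alongside \Cref{prop:conditioner}. Your explicit check that $\Tilde{\cH}_\theta$ fixes every element of $V_\theta$, so that its range is exactly $V_\theta$, is a small step the paper leaves implicit.
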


Therefore, when the parameter dynamics follow the preconditioned flow
\begin{equation}
\label{eq:precondGF}
    \dot{\theta}_t=-G(\theta_t)^{-1}\nabla_\theta\cF(\mu_{\theta_t}),
\end{equation}
the induced evolution at the distribution level can be interpreted as a projected WGF on the subspace of realizable velocity fields.

Note that our preconditioned flow \eqref{eq:precondGF} does not correspond in general to the
Wasserstein natural gradient flow on distributions as introduced in
\cite{li2019wasserstein,chen2020optimal}; see \cref{appendix:wuchen_li_metric} for further details. Yet, they do coincide in some cases (see \cref{prop:coincidewithWL}), for example in 1D or for some linear maps and Gaussian source distribution
, \emph{i.e.} 
$F_\theta(x)=Ax+m$, where $A=\sigma I_d$ or $A$ positive diagonal. Then, if the source distribution is Gaussian, \emph{e.g.} $\mu=\mathcal N(0,I_d)$ the induced distributions along the flow remain Gaussian with isotropic or diagonal covariance matrices, respectively, and the flow also coincides with the Bures-Wasserstein gradient flow \citep{lambert2022variational}, see \Cref{appendix:bures_application}. 

\looseness=-1 Next, we show below that the parametrized JKO scheme with reparameterization trick used in previous sections,
\begin{equation}
\label{eq:paramJKO1}
    \theta_{\ell+1} \in \argmin_\theta\ \cF\big((\F_{\theta})_{\#}\mu\big) + \frac{1}{2\tau}\|\F_\theta - \F_{\theta_\ell}\|_{L^2(\mu)}^2,
\end{equation}
is closely connected to the preconditioned flow \eqref{eq:precondGF}. 

\begin{proposition}
\label{prop:JKOandprecondGF}
Assume that $\theta \mapsto \F_\theta$ is sufficiently regular as a map into $L^2(\mu_\theta)$, and that $\theta \mapsto \cF(\mu_\theta)$ is differentiable. Let $\theta_{\ell+1}$ be defined by \eqref{eq:paramJKO1}. Assume that $\theta_{\ell+1}-\theta_{\ell} =\mathcal{O}(\tau)$ as $\tau\to 0$. Then, as $\tau \to 0$, we have
\begin{equation}
    G(\theta_{\ell+1})\,\frac{\theta_{\ell+1}-\theta_\ell}{\tau}
    =
    -\nabla_\theta \cF(\mu_{\theta_{\ell+1}})
    + o(1).    
\end{equation}
Thus, the parametric JKO update is, at first order, an implicit discretization of the preconditioned gradient flow \eqref{eq:precondGF}.
\end{proposition}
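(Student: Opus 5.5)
The plan is to write the first-order optimality condition of the minimization problem \eqref{eq:paramJKO1} at $\theta_{\ell+1}$ and Taylor-expand the proximal term around $\theta_{\ell+1}$. Since $\theta_{\ell+1}$ is a minimizer, which we assume interior since the problem is posed over all of $\R^p$, and both terms are differentiable in $\theta$, we have
\begin{equation*}
0=\nabla_\theta\cF(\mu_{\theta_{\ell+1}})+\frac{1}{\tau}\int \partial_\theta\F_{\theta_{\ell+1}}(z)^\top\big(\F_{\theta_{\ell+1}}(z)-\F_{\theta_\ell}(z)\big)\,\mathrm{d}\mu(z).
\end{equation*}
Here the gradient of $\theta\mapsto\frac12\|\F_\theta-\F_{\theta_\ell}\|_{L^2(\mu)}^2$ is computed by differentiating under the integral. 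This differentiation is justified by the regularity assumption on $\theta\mapsto\F_\theta$ as an $L^2$-valued map, namely Fréchet differentiability with derivative $u\mapsto\partial_\theta\F_\theta(\cdot)u$.

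Next, I expand $\F_{\theta_\ell}$ around $\theta_{\ell+1}$ in $L^2(\mu)$:
\begin{equation*}
\F_{\theta_\ell}=\F_{\theta_{\ell+1}}-\partial_\theta\F_{\theta_{\ell+1}}(\theta_{\ell+1}-\theta_\ell)+R_\ell,\qquad \|R_\ell\|_{L^2(\mu)}=o(\|\theta_{\ell+1}-\theta_\ell\|).
\end{equation*}
This is exactly Fréchet differentiability at $\theta_{\ell+1}$. For a uniform version as $\tau\to0$, I would assume $\partial_\theta\F_\theta$ is continuous in $\theta$ into $L^2$ and apply the mean value theorem in integral form. Substituting the expansion into the optimality condition gives
\begin{equation*}
G(\theta_{\ell+1})\frac{\theta_{\ell+1}-\theta_\ell}{\tau}=-\nabla_\theta\cF(\mu_{\theta_{\ell+1}})+\frac{1}{\tau}\int\partial_\theta\F_{\theta_{\ell+1}}^\top R_\ell\,\mathrm{d}\mu,
\end{equation*}
where the $G$ term arises from $\int\partial_\theta\F^\top\partial_\theta\F\,\mathrm{d}\mu$.

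The last step, which is the main obstacle, is controlling the remainder. By Cauchy–Schwarz, its norm is at most $\tau^{-1}\|\partial_\theta\F_{\theta_{\ell+1}}\|_{L^2(\mu)}\|R_\ell\|_{L^2(\mu)}$. The factor $\|\partial_\theta\F_{\theta_{\ell+1}}\|_{L^2(\mu)}$ is bounded, since $\theta_{\ell+1}\to\theta_\ell$ and $\partial_\theta\F$ is continuous in $\theta$ into $L^2$. The hypothesis $\theta_{\ell+1}-\theta_\ell=\mathcal O(\tau)$ gives $\|R_\ell\|=o(\tau)$, so the remainder is $o(1)$. This is exactly where the $\mathcal O(\tau)$ assumption and the precise meaning of "sufficiently regular" are needed, and I would make that meaning explicit as $C^1$ regularity into $L^2(\mu)$ with locally uniform modulus.

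Finally, if $G(\theta_{\ell+1})$ is invertible, this identity says the update is, up to $o(1)$, a backward Euler step of \eqref{eq:precondGF}.
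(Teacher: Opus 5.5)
Your proposal is correct and follows the paper's proof essentially step for step. You use first-order optimality of the reparametrized JKO objective at $\theta_{\ell+1}$, then linearize $\F_{\theta_{\ell+1}}-\F_{\theta_\ell}$ in $L^2(\mu)$ around $\theta_{\ell+1}$ to extract $G(\theta_{\ell+1})\Delta_\ell$, bound the remainder by Cauchy--Schwarz, and use the $\mathcal{O}(\tau)$ hypothesis to turn $o(\|\Delta_\ell\|)/\tau$ into $o(1)$. Your remark that the Fr\'echet remainder must be controlled uniformly at the moving base point $\theta_{\ell+1}$ (e.g.\ via continuity of $\theta\mapsto\partial_\theta\F_\theta$ into $L^2(\mu)$) is a point the paper passes over, since it invokes only pointwise Fr\'echet differentiability; your version is slightly more careful there.
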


\looseness=-1 A detailed proof of this result,  as well as assumptions on $\theta\mapsto F_\theta$ such that the sequence of iterates $(\theta_{\ell})_\ell$ are guaranteed to behave appropriately, 
are provided in \cref{appendix:calrifictionJKOpractice}.
As a consequence, solving the reparametrized JKO scheme \eqref{eq:paramJKO1} in practice is not merely performing a Euclidean descent in parameter space. In the small-step regime $\tau\to 0$, the induced dynamics are instead consistent, at first order, with the preconditioned flow \eqref{eq:precondGF}. In that sense, the proximal structure of the JKO update implicitly introduces a 
preconditioning, even though the inverse of the matrix $G(\theta)$ is never formed explicitly in practice. This is 
appealing, since the latter would be intractable for a large neural network. In \cref{subsec:JKOandPrecondinpractice}, we empirically assess whether reparametrized JKO \eqref{eq:paramJKO1} remains close to the 
flow \eqref{eq:precondGF}, as suggested by \cref{prop:JKOandprecondGF}, even when its assumptions are not fully satisfied.

\section{Conclusion}

\looseness=-1 This work unifies several generative modeling methods based on the JKO scheme 
and extends this framework to a large class of divergences which include $f$-divergences, IPMs, and squared MMDs. Moreover, we investigated empirically the benefits of the JKO regularization on a wide range of these objectives, showing that it benefits a lot for MMD and $\W_1$ objectives, and more moderately for $f$-divergences. We also propose a preliminary theoretical framework for parametric flows, which correspond to the schemes computed in practice. 
Our assumptions on the map $F$ are rather strong, hence we plan for future works to alleviate them, with the goal to study the convergence of these flows for more realistic neural networks.

\ifdefined\isaccepted 

\section*{Acknowledgements}

We thank the anonymous reviewers for their valuable comments.
This work was granted access to the HPC resources of IDRIS under the allocation 2025-AD011016536 made by GENCI.
CB was partly funded by the Agence nationale de la recherche, through the PEPR PDE-AI project (ANR-23-PEIA-0004). 
PC and AK were funded by the European Union (ERC, Optinfinite, 101201229). The views and opinions expressed are, however, of the author(s) only and do not necessarily reflect those of the European Union or the European Research Council Executive Agency. Neither the European Union nor the granting authority can be held responsible for them.

\fi

\ifdefined\ispreprint 

\section*{Acknowledgements}

We thank the anonymous reviewers for their valuable comments.
This work was granted access to the HPC resources of IDRIS under the allocation 2025-AD011016536 made by GENCI.
CB was partly funded by the Agence nationale de la recherche, through the PEPR PDE-AI project (ANR-23-PEIA-0004).
PC and AK were funded by the European Union (ERC, Optinfinite, 101201229). The views and opinions expressed are, however, of the author(s) only and do not necessarily reflect those of the European Union or the European Research Council Executive Agency. Neither the European Union nor the granting authority can be held responsible for them.

\fi

\section*{Impact Statement}

This paper presents work whose goal is to advance the field of Machine Learning.
The proposed methods are primarily theoretical and do not directly involve human subjects or sensitive data. 
As with other generative modeling methods, improper use of learned models in downstream applications could lead to unintended consequences. 
We do not anticipate societal impacts beyond those commonly associated with generative modeling techniques.

\bibliography{biblio}
\bibliographystyle{icml2026}

\clearpage
\appendix
\onecolumn

\makeatletter
\def\addcontentsline#1#2#3{%
  \phantomsection
  \addtocontents{#1}{%
    \protect\contentsline{#2}{#3}{\thepage}{\@currentHref}%
  }%
}
\makeatother

{\LARGE \bfseries Appendix}

\vspace{0.2cm}

The appendix is organized as follows.
In \cref{appendix:RelatedWorks}, we review related work.
\Cref{appendix:optimal_transport} provides background on optimal transport.
\Cref{appendix:genschemewithfdiv} reviews variational formulations of several
$f$-divergences, introduces the Donsker-Varadhan representation of the KL
divergence, and clarifies connection between generative schemes.
In \Cref{appendix:MMD}, we derive the objectives for IPM regularization as well
as for squared MMD objective, and provide the full JKO MMD GAN algorithm.
In \Cref{appendix:schemaUnifying}, we provide a schematic summary of the unifying results introduced in the paper.
Proofs of the main theoretical results are gathered in
\cref{Appendix:proofs}.
\Cref{appendix:param_flows} provides a deeper analysis of parametric
Wasserstein flows, with explicit computations in the Bures--Wasserstein setting.
In \cref{appendix:PracticalAlgorithm,Appendix:ImplementationDetails}, we detail
the general GWF algorithm and practical implementation aspects of the proposed
framework. Finally, in \cref{appendix:xps}, we report additional experimental
results covering both theoretical and practical aspects.

{\hypersetup{linkcolor=black}\tableofcontents}

\section{Related Work}
\label{appendix:RelatedWorks}

\paragraph{JKO Scheme.}

\looseness=-1 The JKO scheme has been introduced by \citet{jordan1998variational} and has been shown to be a time discretization of the Wasserstein gradient flow. Thus, it motivated several works to try to compute it in practice to be able to approximate Wasserstein gradient flows of complex functionals. 
However, its implementation in dimension $d\ge 2$ has been limited by the difficulty to compute the Wasserstein distance in high dimension. Several strategies have been proposed. For instance, \citet{benamou2016discretization} leveraged Brenier's theorem to reframe the problem as an optimization problem over convex functions and discretize the space of convex functions. \citet{peyre2015entropic} instead added an entropic regularization to approximate the JKO scheme, and \citet{benamou2016augmented} used the dynamic formulation of optimal transport.

\paragraph{JKO with Neural Networks.}

More recent works propose to approximate the JKO scheme with neural networks, which allowed to use it for generative modeling. More precisely, \citet{mokrov2021largescale,alvarez-melis2022optimizing} simultaneously proposed to learn the OT map at each JKO step with ICNNs. Then, \citet{fan2022variational} used the variational formulation of $f$-divergences to avoid the need to evaluate the density of the current distribution and of the data distribution, and to perform generative modeling. \citet{choi2024scalable} improved the method of \citet{fan2022variational} by adding a reparametrization trick which allows to learn a map from the source distribution. Some works also proposed to use neural networks to learn the JKO scheme in other geometries, \eg~in the space of probability distributions endowed with the Sliced-Wasserstein distance \citep{bonet2022efficient}, which is easier to compute, or endowed with the KL divergence \citep{yao2024minimizing}.

Instead of parametrizing the map, \citet{park2023deep} parametrized the density with neural networks to solve the JKO scheme. Other works parametrize the velocity field and induce from it a map pushing the distributions. This is the case for instance of Continuous Normalizing Flows regularized with the Wasserstein distance under the dynamic formulation \citep{finlay2020how, onken2021ot, vidal2023taming, lee2024deep}, whose objective is the JKO scheme with KL divergence. \citet{xu2023normalizing, cheng2024convergence} also recently proposed to solve the JKO scheme by learning the velocity field of each JKO step. However, they did not use the dynamic formulation of optimal transport, and optimize the KL divergence with respect to the noise, starting from $\mu_0=\nu$ in the same spirit of diffusion models.

\paragraph{GANs.} Originally, GANs do not approximate Wasserstein gradient flows. But \citet{lin2021wasserstein} proposed to regularize them with the $L^2$ distance between the last map and the new map, which is equivalent to the JKO scheme up to the reparametrization trick. We also note that \citet{yi2023monoflow} made a connection between the training step of the generator of $f$-GANs and the forward Wasserstein gradient descent of a well chosen objective.

\paragraph{Wasserstein Gradient Flows of MMD.}

\citet{altekruger2023neural} proposed to solve the JKO scheme with MMD objectives, for which the existence of the OT map at each step is not guaranteed. Hence, they proposed to disintegrate the coupling with respect to the first marginal $\mu_\iter$ and parametrize the resulting kernel with a neural network, hence allowing to learn a plan. \citet{galashov2025deep} proposed a method for generative modeling based on the Wasserstein gradient flow of the MMD \citep{arbel2019maximum} and leveraging ideas of diffusion models, which does not rely on adversarial training.

\paragraph{Parametric Wasserstein Flows.}

Natural gradient flows are gradient flows preconditioned by a Riemannian metric reflecting the structure of the parametric space. In general, they can be written as $\dot{\theta}_t = -G(\theta_t)^{-1}\nabla_\theta L(\theta_t)$ for $L$ some loss and $G(\theta)$ the metric. Originally, the metric is induced by the KL divergence, and corresponds to the Fisher information metric. However, several works proposed to instead use the Wasserstein information matrix \citep{li2019wasserstein}, which leads to the Wasserstein Natural Gradient Flow \citep{li2018natural, li2019wasserstein, chen2020optimal}. \citet{ParametricFPequation} studied this scheme for a KL divergence objective and a parametric parametrization in the 1D and Gaussian case. Computing the metric tensor in general is computationally heavy, hence it was proposed to approximate it using kernels \citep{arbel2019kernelized, moskovitz2020efficient} or to use surrogate \citep{lin2021wasserstein}. In particular, the surrogate considered in \Cref{sec:theory} with $G(\theta)= \int \partial_\theta\F_\theta(z)^\top \partial_\theta\F_\theta(z)\ \mathrm{d}\mu(z)$ coincides in one dimension with the Wasserstein metric, and was studied by \citet{zuo2025numerical} with $\F_\theta$ chosen as neural networks in this setting, while \citet{jin2025parameterized} considered higher dimensional spaces also with $\F_\theta$ chosen as neural networks.
More recently, \citet{dumont2026learning} considered the problem of minimization over $L^2(\mu)$ using an implicit scheme similar to \eqref{eq:paramJKO1}, and also made connections with natural gradient flows.

\section{Background on Optimal Transport}  \label{appendix:optimal_transport}

\subsection{Optimal Transport}

Let $c:\R^d\times\R^d\to \R$ be a cost function. The Kantorovich problem between $\mu,\nu\in\cPR$ is defined as 
\begin{equation} \label{eq:ot}
    \mathrm{OT}_c(\mu,\nu) = \inf_{\pi\in\Pi(\mu,\nu)}\ \int c(x,y)\ \mathrm{d}\pi(x,y),
\end{equation}
where $\Pi(\mu,\nu)=\{\gamma\in\cP_2(\R^d\times \R^d),\ \pi^1_\#\gamma=\mu,\ \pi^2_\#\gamma=\nu\}$ is the set of couplings between $\mu$ and $\nu$, $\pi^1:(x,y)\mapsto x$ and $\pi^2:(x,y)\mapsto y$. This problem admits a dual of the form
\begin{equation} \label{eq:dual_ot}
    \mathrm{OT}_c(\mu,\nu) = \sup_{(f,g)\in\Phi_c}\ \int f\ \mathrm{d}\mu + \int g\ \mathrm{d}\nu,
\end{equation}
with $\Phi_c = \{(f,g)\in L^1(\mu) \times L^1(\nu),\ f(x)+g(y)\le c(x,y)\ \text{for $\mu\otimes \nu$-a.e. } (x,y)\}$, $L^1(\mu)=\{f:\R^d\to\R,\ \int |f|\ \mathrm{d}\mu <\infty\}$. For $g:\R^d\to\R$, define its $c$-transform as $g^c(x)=\inf_{y\in \R^d}\ c(x,y) - g(y)$. $f\in L^1(\mu)$ is said to be $c$-concave if there exists $g:\R^d\to \R$ such that $f=g^c$. Note that by definition of $g^c$, the pair $(g^c, g)$ always satisfies the constraint in \eqref{eq:dual_ot}, \ie, for all $x,y$, $g^c(x) + g(y) \le c(x,y)$. In particular, \eqref{eq:dual_ot} is equivalent to the semi-dual \citep[Theorem 5.10]{villani2008optimal}
\begin{equation}
    \mathrm{OT}_c(\mu,\nu) = \sup_{g\in L^1(\nu)}\ \int g^c\ \mathrm{d}\mu + \int g\ \mathrm{d}\nu.
\end{equation}

\paragraph{Wasserstein distance.}

An important particular case is obtained for $c(x,y)=\frac12\|x-y\|_2^2$ for which the value of the infimum of the problem defines a distance, referred to as the Wasserstein distance, \ie
\begin{equation}
    \frac12 \W_2^2(\mu,\nu) = \inf_{\pi\in\Pi(\mu,\nu)}\ \frac12\int \|x-y\|_2^2\ \mathrm{d}\pi(x,y).
\end{equation}
Note that the Wasserstein distance is equivalent to the OT problem with cost $c(x,y)=-\langle x,y\rangle$ as 
\begin{equation}
    \frac12 \W_2^2(\mu,\nu) = \int \frac{\|x\|_2^2}{2}\ \mathrm{d}\mu(x) + \int \frac{\|y\|_2^2}{2}\ \mathrm{d}\nu(y) + \inf_{\pi\in\Pi(\mu,\nu)}\ \int -\langle x,y\rangle\ \mathrm{d}\pi(x,y).
\end{equation}
Moreover, for any $f\in L^1(\mu), g\in L^1(\nu)$ satisfying the dual constraint, we have the equivalence
\begin{equation}
    f(x) + g(y) \le \tfrac12\|x-y\|_2^2 \iff f(x) - \tfrac12\|x\|_2^2 + g(y) - \tfrac12 \|y\|_2^2 \le -\langle x,y\rangle.
\end{equation}
Hence, up to the reparametrization $\varphi=\tfrac12\|\cdot\|_2^2-f$, $\psi=\tfrac12\|\cdot\|_2^2-g$, we also have
\begin{equation}
    \frac12 \W_2^2(\mu,\nu) = \int \frac{\|x\|_2^2}{2}\ \mathrm{d}\mu(x) + \int \frac{\|y\|_2^2}{2}\ \mathrm{d}\nu(y) - \inf_{\varphi(x)+\psi(y)\ge \langle x,y\rangle\ \forall x,y}\ \int \varphi\ \mathrm{d}\mu + \int \psi \ \mathrm{d}\nu,
\end{equation}
which admits as semi-dual representation \citep[Theorem 2.9]{villani2003topics}
\begin{equation} \label{eq:semi_dual_quadratic}
    \frac12\W_2^2(\mu,\nu) = \int \frac{\|x\|_2^2}{2}\ \mathrm{d}\mu(x) + \int \frac{\|y\|_2^2}{2}\ \mathrm{d}\nu(y) - \inf_{\varphi \ \text{convex}} \ \int \varphi\ \mathrm{d}\mu + \int \varphi^*\ \mathrm{d}\nu,
\end{equation}
with $f^*(y)=\sup_x\ \langle x,y\rangle - f(x)$ the Legendre transform.

\paragraph{OT map.}

Let $\pi\in\Pi_o(\mu,\nu)$ be an optimal coupling between $\mu$ and $\nu$, which minimizes \eqref{eq:ot}. Then, for all $(x,y)\in\mathrm{spt}(\pi)$, we have that the dual constraints are saturated, \ie~ for $\pi$-almost every $(x,y)$,
\begin{equation}
    g^c(x)+g(y) = c(x,y).
\end{equation}
If $\pi=(\id, \T)_\#\mu$ with $\T:\R^d\to\R^d$ such that $\T_\#\mu=\nu$, then we have for $\mu$-almost every $x$,
\begin{equation}
    g^c(x) + g\big(\T(x)\big) = c\big(x, \T(x)\big),
\end{equation}
which implies that 
\begin{equation}
    \T(x) \in \argmin_y\ c(x,y) - g(y) - g^c(x) = \argmin_y\ c(x,y) - g(y).
\end{equation}

Assume now $\mu,\nu\in\cPa$, and take $c(x,y)=\tfrac12\|x-y\|_2^2$. In this case, by \eqref{eq:semi_dual_quadratic}, the optimal potentials $y\mapsto \tfrac12 \|y\|_2^2 - g(y)$ and $x\mapsto \tfrac12\|x\|_2^2 - g^c(x)$ are both convex, and we have
\begin{equation}
    \begin{aligned}
        g^c(x) &= \min_y\ \tfrac{1}{2}\|x-y\|_2^2 - g(y) = \tfrac12 \|x\|_2^2 - \max_y\ \langle x,y\rangle - \big(\tfrac{1}{2}\|y\|_2^2 - g(y)\big).
    \end{aligned}
\end{equation}
Moreover, on one hand, $x\mapsto \langle x,y\rangle$ is convex. On the other hand, $y\mapsto \tfrac12 \|y\|_2^2 - g(y)$ is convex, thus $y\mapsto \langle x,y\rangle - \big(\tfrac{1}{2}\|y\|_2^2 - g(y)\big)$ is concave. Hence, applying Danskin's theorem \citep[Theorem 11.1]{blondel2024elements}, we get that $g^c$ is differentiable and we recover the well know formula
\begin{equation}
    \nabla g^c(x) = x - \T(x) \iff \T(x) = x - \nabla g^c(x),
\end{equation}
which also states that the OT map $\T$ is the gradient of the convex function $x\mapsto \tfrac12\|x\|_2^2 - g^c(x)$.

Note that for $h:\R^d\to \R$ strictly convex and $c(x,y)=h(x-y)$, then we have that $\T(x)=x-(\nabla h)^{-1}\big(\nabla g^c(x)\big)$ \citep[Theorem 1.17]{santambrogio2015optimal}.

\subsection{Unbalanced Optimal Transport}

The Unbalanced Optimal Transport problem \citep{liero2018optimal, chizat2018unbalanced, sejourne2023unbalanced} relaxes the hard marginal constraints by penalizing deviations with divergences. Denoting by $\cM_+(\R^d)$ the space of positive measures over $\R^d$, this problem can be formalized as, for $\mu,\nu\in\cM_+(\R^d)$,
\begin{equation} \label{eq:uot}
    \UOT_c(\mu,\nu) = \inf_{\gamma\in\cM_+(\R^d\times\R^d)}\ \int c(x,y)\ \mathrm{d}\gamma(x,y) + \lambda_1 D(\pi^1_\#\gamma, \mu) + \lambda_2 D(\pi^2_\#\gamma, \nu),
\end{equation}
for $D$ some divergence between positive measures, $\lambda_1,\lambda_2>0$. $D$ is often chosen as a $\varphi$-divergence \citep{sejourne2023unbalanced}, but other divergences can be chosen such as the MMD \citep{manupriya2024mmdregularized} or OT based distances \citep{mahey2024unbalanced}. In general, for $\gamma$ the optimal plan in \eqref{eq:uot}, $\pi^1_\#\gamma\neq \mu$ and $\pi^2_\#\gamma\neq \nu$, but $\pi^1_\#\gamma$ and $\pi^2_\#\gamma$ have the same mass, \ie~$\gamma(\R^d\times\R^d)=\pi^1_\#\gamma(\R^d)=\pi^2_\#\gamma(\R^d)$. In particular, \citet[Proposition 3.1]{eyring2024unbalancedness} showed that for $D(\eta,\mu)=\Df(\eta||\mu)$, $\gamma$ is an optimal transport plan between $\pi^1_\#\gamma$ and $\pi^2_\#\gamma$, \ie, it solves problem \eqref{eq:ot} between these measures. Moreover, for $c(x,y)=h(x-y)$ with $h$ strictly convex and $\mu$ absolutely continuous \emph{w.r.t.} the Lebesgue measure, then $\gamma$ is unique, and induced by an OT map. Note also that \eqref{eq:uot} is equivalent to \citep{liero2018optimal}
\begin{equation}
    \UOT_c(\mu,\nu)=\inf_{\pi_1,\pi_2\in\cM_+(\R^d)}\ \mathrm{OT}_c(\pi_1,\pi_2) + \lambda_1 D(\pi_1,\mu) + \lambda_2 D(\pi_2,\nu),
\end{equation}
optimizing directly over the marginals. This can be seen \eg~by contradiction. Indeed, if $\gamma$ is not an optimal coupling between the marginals of $\gamma$, then one can always find a better coupling between the two marginals, and hence $\gamma$ does not minimize the full objective.

For the case of $f$-divergence, this problem also admits for dual representation \citep{liero2018optimal}
\begin{equation}
    \UOT_c(\mu,\nu) = \sup_{(u,v)\in\Phi_c}\ -\int \lambda_1 f^*\big(-u(x)/\lambda_1\big)\ \mathrm{d}\mu(x) - \int \lambda_2 f^*\big(-v(y)/\lambda_2\big)\ \mathrm{d}\nu(y),
\end{equation}
and the semi-dual
\begin{equation}
    \UOT_c(\mu,\nu) = \sup_{v}\ -\int \lambda_1 f^*\big(-v^c(x)/\lambda_1\big)\ \mathrm{d}\mu(x) - \int \lambda_2 f^*\big(-v(y)/\lambda_2\big)\ \mathrm{d}\nu(y).
\end{equation}

Taking $\lambda_1\to\infty$, $\lambda_2\to\infty$, \eqref{eq:uot} becomes equal to the OT problem \eqref{eq:ot} as it enforces hard constraints on the marginals. Taking only $\lambda_1\to\infty$ enforces the first marginal of the plan $\gamma\in\cM_+(\R^d\times\R^d)$ to be equal to $\mu$, and hence the mass of $\gamma$ is equal to the mass of $\mu$, \ie~$\gamma(\R^d\times \R^d)=\mu(\R^d)$. We now restrict to $\mu\in\cPR$, then taking $\lambda_1\to\infty$, we obtain the semi-relaxed UOT problem
\begin{equation} \label{eq:semi_uot_appendix}
    \begin{aligned}
        \sUOT_c(\mu,\nu)&=\inf_{\gamma\in\cP_2(\R^d\times \R^d),\ \pi^1_\#\gamma=\mu}\ \int c(x,y)\ \mathrm{d}\gamma(x,y) + \lambda_2 D(\pi^2_\#\gamma, \nu) \\
        &= \inf_{\pi_2\in\cPR}\ \mathrm{OT}_c(\mu, \pi_2) + \lambda_2 D(\pi_2, \nu).
    \end{aligned}
\end{equation}
For $\D(\eta,\mu)=\Df(\eta||\mu)$, it also admits the (semi-)dual representation
\begin{equation} \label{eq:semi_dual_suot}
    \sUOT_c(\mu,\nu) = \sup_v\ \int v^c(x)\ \mathrm{d}\mu(x) - \int \lambda_2 f^*\big(-v(y)/\lambda_2\big)\ \mathrm{d}\nu(y).
\end{equation}
By \citep[Proposition 3.1]{eyring2024unbalancedness}, noting $\gamma$ the optimal plan of \eqref{eq:semi_uot_appendix} and $v$ the optimal potential of \eqref{eq:semi_dual_suot}, $\pi^2_\#\gamma = (f^*)'(v) \cdot \nu$. Moreover, if $\mu\in\cPa$ and $c(x,y)=h(x-y)$ with $h$ strictly convex, $\gamma = (\id, \T)_\#\mu$ where $\T = \id - \nabla h^* \circ \nabla v$.

\subsection{Wasserstein Gradient}\label{sec:wass_gradient}

We recall the notion of Wasserstein gradient. For more details, we refer to \citep{ambrosio2008gradient} or \citep{lanzetti2025first}.

\begin{definition}
    Let $\cF:\cPR\to\R$. A Wasserstein gradient of $\cF$ at $\mu\in\cPR$, if it exists, is defined as the map $\gW\cF(\mu)\in L^2(\mu)$, which satisfies for any $\nu\in\cPR$, $\gamma\in\Pi_o(\mu,\nu)$,
    \begin{equation} \label{eq:def_wasserstein_grad}
        \cF(\nu) = \cF(\mu) + \int \langle \gW\cF(\mu)(x), y-x\rangle\ \mathrm{d}\gamma(x,y) + o\big(\W_2(\mu,\nu)\big).
    \end{equation}
\end{definition}
The tangent space of $\cP_2(\R^d)$ at $\mu\in\cPR$ is defined as
\begin{equation}
    T_\mu\cPR = \overline{\{\nabla\psi,\ \psi\in C_c^\infty(\R^d)\}} \subset L^2(\mu),
\end{equation}
where the closure is taken in $L^2(\mu)$, see \eg~\citep[Definition 8.4.1]{ambrosio2008gradient}. It is possible to show that $\gW\cF(\mu) = \xi + \xi^\bot$ with $\xi\in T_\mu\cPR$, $\xi\in T_\mu\cPR^\bot$, and that $\int \langle \xi^\bot(x), y-x\rangle\ \mathrm{d}\gamma(x,y)=0$. Moreover, $\xi$ is unique, see \eg~\citep[Proposition 2.5]{lanzetti2025first}. Hence, we can always restrict ourselves to $\gW\cF(\mu)\in T_\mu\cPR$. Moreover, such Wasserstein gradient is a ``strong Fréchet differential'', implying that \eqref{eq:def_wasserstein_grad} also holds for non optimal couplings.
\begin{proposition}[Proposition 2.12 in \citep{lanzetti2025first}]
    Let $\mu,\nu\in\cPR$, $\gamma\in \cP_2(\R^d\times\R^d)$ any coupling and $\cF:\cPR\to\R$ a Wasserstein differentiable functional at $\mu$ with Wasserstein gradient $\gW\cF(\mu)\in T_\mu\cPR$. Then,
    \begin{equation}
        \cF(\nu) - \cF(\mu) = \int \langle \gW\cF(\mu)(x), y-x\rangle\ \mathrm{d}\gamma(x,y) + o\left(\sqrt{\int \|x-y\|_2^2\ \mathrm{d}\gamma(x,y)}\right).
    \end{equation}
\end{proposition}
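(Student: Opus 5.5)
The plan is to reduce the statement for an arbitrary coupling $\gamma\in\Pi(\mu,\nu)$ to the definition \eqref{eq:def_wasserstein_grad}, which holds for an optimal coupling $\gamma_o\in\Pi_o(\mu,\nu)$. Write $\xi=\gW\cF(\mu)\in T_\mu\cPR$ and $C_\gamma=\int\|x-y\|_2^2\,\mathrm{d}\gamma$. By optimality, $\W_2^2(\mu,\nu)=C_{\gamma_o}\le C_\gamma$, so any $o\big(\W_2(\mu,\nu)\big)$ term is also $o(\sqrt{C_\gamma})$. Applying the definition with $\gamma_o$ gives
\[
\cF(\nu)-\cF(\mu)=\int\langle\xi(x),y-x\rangle\,\mathrm{d}\gamma_o+o(\sqrt{C_\gamma}).
\]
It therefore suffices to show that
\[
\Delta:=\int\langle\xi(x),y-x\rangle\,\mathrm{d}\gamma-\int\langle\xi(x),y-x\rangle\,\mathrm{d}\gamma_o=o(\sqrt{C_\gamma})
\]
as $C_\gamma\to0$.

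The key tool is the tangent-space property of $\xi$. Fix $\varepsilon>0$ and choose $\psi\in C_c^\infty(\R^d)$ with $\|\xi-\nabla\psi\|_{L^2(\mu)}\le\varepsilon$. Split $\Delta$ into a $\nabla\psi$ part and a remainder $(\xi-\nabla\psi)$ part.

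For the remainder, Cauchy--Schwarz under each coupling, using that both have first marginal $\mu$, bounds it by
\[
\varepsilon\big(\sqrt{C_\gamma}+\sqrt{C_{\gamma_o}}\big)\le2\varepsilon\sqrt{C_\gamma}.
\]

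For the $\nabla\psi$ part, a second-order Taylor expansion gives
\[
\psi(y)-\psi(x)=\langle\nabla\psi(x),y-x\rangle+R(x,y),\qquad |R(x,y)|\le\tfrac12\|\nabla^2\psi\|_\infty\|x-y\|_2^2.
\]
Integrating against any coupling of $\mu$ and $\nu$ yields
\[
\int\langle\nabla\psi(x),y-x\rangle\,\mathrm{d}\gamma=\int\psi\,\mathrm{d}\nu-\int\psi\,\mathrm{d}\mu+O\big(\|\nabla^2\psi\|_\infty C_\gamma\big).
\]
The leading term depends only on the marginals, so it cancels between $\gamma$ and $\gamma_o$. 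The $\nabla\psi$ part of $\Delta$ is thus bounded by $\|\nabla^2\psi\|_\infty(C_\gamma+C_{\gamma_o})\le2\|\nabla^2\psi\|_\infty C_\gamma$.

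Combining the two bounds gives
\[
|\Delta|/\sqrt{C_\gamma}\le2\varepsilon+2\|\nabla^2\psi\|_\infty\sqrt{C_\gamma},
\]
so $\limsup_{C_\gamma\to0}|\Delta|/\sqrt{C_\gamma}\le2\varepsilon$ for every $\varepsilon>0$, which proves the claim.

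The main point to check is that the remainder is uniform in the sense the statement intends, namely $o(\cdot)$ as $\sqrt{C_\gamma}\to0$ over all pairs $(\nu,\gamma)$. This holds because $\psi$ depends only on $\mu$ and $\varepsilon$, not on $\nu$ or $\gamma$. The definitional remainder $o(\W_2)$ transfers because $\W_2\le\sqrt{C_\gamma}$. One should also note that $\xi\in L^2(\mu)$ makes all the integrals finite, since $\gamma,\gamma_o\in\cP_2(\R^d\times\R^d)$.
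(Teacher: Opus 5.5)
Your proof is correct and complete. The paper gives no argument of its own for this statement. It is quoted from \citet{lanzetti2025first} and then used in the proof of \Cref{TheoremGradPushFor} with the non-optimal coupling $(\F_\theta,\F_{\theta+h})_\#\mu$, so your write-up supplies a self-contained justification along the standard route. You reduce to an optimal plan via $\W_2(\mu,\nu)\le\sqrt{C_\gamma}$. You then show that for $\xi\in T_\mu\cPR$ the linear term $\int\langle\xi(x),y-x\rangle\,\mathrm{d}\gamma$ depends on the coupling only up to $o(\sqrt{C_\gamma})$. This follows by approximating $\xi$ in $L^2(\mu)$ by $\nabla\psi$ with $\psi\in C_c^\infty(\R^d)$, since $\int\langle\nabla\psi(x),y-x\rangle\,\mathrm{d}\gamma=\int\psi\,\mathrm{d}\nu-\int\psi\,\mathrm{d}\mu+O(C_\gamma)$ for every coupling.

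Your argument also shows why the hypothesis $\gW\cF(\mu)\in T_\mu\cPR$ cannot be dropped. A component in $(T_\mu\cPR)^\perp$ pairs to zero with optimal plans, but can contribute a first-order, coupling-dependent term for non-optimal ones. The degenerate case $C_\gamma=0$ (where $\nu=\mu$ and both sides vanish) is trivial. The uniformity of the remainder, which you correctly trace to $\psi$ depending only on $\mu$ and $\varepsilon$, is handled properly.
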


Under regularity assumptions, the Wasserstein gradient of $\cF$ can be computed in practice using the first variation $\cF'(\mu)$ \citep[Definition 7.12]{santambrogio2015optimal}, which is defined, if it exists, as the unique function (up to a constant) such that, for $\chi$ satisfying $\int\mathrm{d}\chi = 0$,
\begin{equation} \label{eq:1st_var}
    \frac{\mathrm{d}}{\mathrm{d} t}\cF(\mu+t\chi)\Big|_{t=0} = \lim_{t\to 0}\ \frac{\cF(\mu+t\chi) - \cF(\mu)}{t} = \int \cF'(\mu)\ \mathrm{d}\chi.
\end{equation}
Then the Wasserstein gradient can be computed as $\gW\cF(\mu) = \nabla\cF'(\mu)$, see \emph{e.g.} \citep[Proposition 5.10]{chewi2024statistical}.

\section{More details for JKO schemes with $f$-Divergences} \label{appendix:genschemewithfdiv}

In this section, we derive variational formulations for several $f$-divergences. 
We also introduce the Donsker–Varadhan representation of the KL divergence and highlight its advantages over the classical formulation. 
Finally, we discuss the method proposed by \citep{baptista2025proximal} and show that it corresponds to GWF with the Donsker–Varadhan formula, up to a reparametrization.

\subsection{Variational Formulations of $f$-Divergences} \label{Appendix:VarFormfdiv}

We first recall the definition of $f$-divergences.

\begin{definition}[\(f\)-divergence]
    Let \( \mu,\nu\in\cPR \) such that \( \mu \ll \nu \), and denote by 
\( \frac{\mathrm{d}\mu}{\mathrm{d}\nu} \) the Radon–Nikodym derivative of \( \mu \) with respect to \( \nu \). Let \( f: [0, \infty) \to \R \) be a convex function with \( f(1) = 0 \). The \( f \)-divergence between \( \mu \) and \( \nu \) is defined as:
    \begin{equation}
        \Df(\mu \| \nu) = \int f\left( \frac{\mathrm{d}\mu}{\mathrm{d}\nu}(x) \right)\ \mathrm{d}\nu(x).
    \end{equation}
\end{definition}

Given the choice of $f$, we can recover many well-known divergences. For instance, for $f(r)=r\log r$, we recover $\Df(\mu||\nu)=\kl(\mu||\nu)$, for $f(r)=-\log r$, we obtain $\Df(\mu||\nu)=\kl(\nu||\mu)$. We refer to \citep[Chapter 7]{polyanskiy2025information} for other examples. $f$-divergences admit the following lower bound.

\begin{lemma}
    For all measurable functions $ \phi:R^d\to \R$, we have:
    \begin{equation}
        \int \phi \, \mathrm{d}\mu - \int f^*\circ \phi \, \mathrm{d}\nu \leq \Df(\mu \| \nu),
    \end{equation}
    where $f^*(y) = \sup_{t} \{ y t - f(t) \}$ is the convex conjugate of \( f \).
\end{lemma}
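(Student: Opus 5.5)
The plan is a pointwise application of the Fenchel--Young inequality, followed by integration against $\nu$. By definition of the convex conjugate, $f^*(y)=\sup_t\, yt-f(t)$, so for every $y\in\R$ and every $t\in[0,\infty)$ we have
\begin{equation*}
    yt \le f(t) + f^*(y).
\end{equation*}
We will apply this at each point $x$, with $y=\phi(x)$ and $t=r(x)$, where $r=\frac{\mathrm{d}\mu}{\mathrm{d}\nu}$ is the Radon--Nikodym density.

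The steps, in order, are as follows.

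First, we dispose of the trivial case. If $\mu\not\ll\nu$, then by the convention in the notation section $\Df(\mu\|\nu)=+\infty$, and there is nothing to prove.

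Second, assume $\mu\ll\nu$ and write $r=\frac{\mathrm{d}\mu}{\mathrm{d}\nu}\ge 0$. For $\nu$-a.e.\ $x$, Fenchel--Young gives
\begin{equation*}
    \phi(x)r(x) - f^*\big(\phi(x)\big) \le f\big(r(x)\big).
\end{equation*}

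Third, we integrate this inequality against $\nu$. Using the change of measure $\int \phi\, r\,\mathrm{d}\nu=\int\phi\,\mathrm{d}\mu$, the left-hand side becomes $\int\phi\,\mathrm{d}\mu-\int f^*\circ\phi\,\mathrm{d}\nu$. The right-hand side is $\int f(r)\,\mathrm{d}\nu=\Df(\mu\|\nu)$, which gives the claim.

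The main obstacle is integrability: the left-hand side must be meaningful when $\phi$ is merely measurable. We handle this as follows.
\begin{itemize}
    \item We interpret the statement for those $\phi$ for which the left-hand side is well defined, e.g.\ $\phi\in L^1(\mu)$ and $(f^*\circ\phi)^-\in L^1(\nu)$, possibly with value $-\infty$.
    \item The negative part of $f(r)$ is integrable, since $f$ is convex and hence bounded below by an affine function $a+bt$; moreover $\int (a+br)\,\mathrm{d}\nu=a+b$ is finite. Therefore $\Df(\mu\|\nu)\in(-\infty,+\infty]$ is well defined.
    \item Integrating a pointwise inequality between functions whose integrals are well defined (in the extended sense) preserves the inequality, so the argument goes through.
\end{itemize}
If the left-hand side is $-\infty$ or the right-hand side is $+\infty$, the inequality is immediate. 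We also note that $f^*$ may take the value $+\infty$; in that case $\int f^*\circ\phi\,\mathrm{d}\nu=+\infty$ and the bound again holds trivially.
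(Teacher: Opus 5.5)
Your proposal is correct and follows the paper's proof: apply the Fenchel--Young inequality pointwise with $y=\phi(x)$ and $t=\frac{\mathrm{d}\mu}{\mathrm{d}\nu}(x)$, then integrate against $\nu$. Your treatment of the case $\mu\not\ll\nu$ and of integrability adds details the paper leaves implicit; in particular, the affine lower bound on $f$ guarantees that $\Df(\mu\|\nu)$ is well defined in $(-\infty,+\infty]$.
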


\begin{proof}
    Using the definition of \( f^* \), for all $x\in \R^d$,
    \begin{equation}
        f^*\big(\phi(x)\big) = \sup_{t}\ \{ \phi(x) t - f(t) \} \geq \phi(x) \cdot \frac{\mathrm{d}\mu}{\mathrm{d}\nu}(x) - f\left( \frac{\mathrm{d}\mu}{\mathrm{d}\nu}(x) \right),
    \end{equation}
    so integrating both sides w.r.t. \( \nu \) gives the desired inequality.
\end{proof}
If we take the supremum in the previous lower bound, Theorem 7.6 of  \cite{polyanskiy2025information} guarantees that the supremum is attained and equal to the $f$-divergence:
\begin{equation} \label{VarRepFdiv2} \tag{Variational formulation}
    \Df(\mu \| \nu) = \sup_{\phi} \left\{ \int \phi \, \mathrm{d}\mu - \int f^*\circ \phi \, \mathrm{d}\nu \right\}.
\end{equation}

We now summarize the formula for some $f$-divergences of interest, that we will use in our applications. 

\paragraph{Reverse KL divergence.}

\begin{equation}
    f(t) = t \log t, \quad 
    f^*(t) = \sup_{u} \{ tu - u \log u \} = \exp(t - 1).
\end{equation}
Then:
\begin{equation}
    \kl(\mu \| \nu) = \sup_{\phi} \left\{ \int \phi \, \mathrm{d}\mu - \int \exp\big(\phi(y) - 1\big) \, \mathrm{d}\nu(y) \right\}.
\end{equation}

\paragraph{\( \chi^2 \) divergence.}

\begin{equation}
    f(t) = (t - 1)^2, \quad  f^*(t) = \sup_{x} \{ t x - (x - 1)^2 \} = \frac{t^2}{4} + t.
\end{equation}
Then:
\begin{equation}
    \chi^2(\mu \| \nu) = \sup_{\phi} \left\{ \int \phi \, \mathrm{d}\mu - \int \left( \frac{\phi^2}{4} + \phi \right) \mathrm{d}\nu \right\} = \sup_{g} \left\{ \int 2g \, \mathrm{d}\mu - 1 - \int g^2 \, \mathrm{d}\nu \right\}.
\end{equation}
for $\phi = 2g-2$.

\paragraph{Jensen--Shannon divergence (JS).}

\begin{equation}
    f(t) = t \log \left( \frac{2t}{1 + t} \right) + \log \left( \frac{2}{1 + t} \right), \quad f^*(t) = - \log (2 - \exp(t)).
\end{equation}
Then:
\begin{equation}
    \mathrm{JS}(\mu \| \nu) = \sup_{\phi : \phi < \log 2} \left\{ \int \phi \, \mathrm{d}\mu + \int \log(2 - \exp(\phi)) \, \mathrm{d}\nu \right\}.
\end{equation}
Or, if we let $g=\frac{1}{2}e^\phi$,
\begin{equation}
    \mathrm{JS}(\mu \| \nu) = \sup_{g : \R^d \to [0,1]} \left\{\log(2)+ \int \log \circ g \, \mathrm{d}\mu + \int \log\big(1 - g(y)\big) \, \mathrm{d}\nu(y)
\right\}.
\end{equation}

\subsection{Donsker-Varadhan Formulations} \label{Appendix:DVformulations}

Let \( \mu \ll \nu \), we recall the definition of the Kullback-Leibler divergence
\begin{equation}
    \kl(\mu \| \nu) = \int \log\left( \frac{\mathrm{d}\mu}{\mathrm{d}\nu} \right) \mathrm{d}\mu.
\end{equation}
Besides the variational lower bound of the previous section, the KL divergence admits another one, called the Donsker-Varadhan (DV) formula.

\begin{lemma}[DV lower bound] \label{lemmaDVlowerbound}
    For all measurable functions $\phi :\R^d\to \R$, we have:
    \begin{equation}
        \kl(\mu \| \nu) \geq \int \phi \, \mathrm{d}\mu - \log \int e^\phi \, \mathrm{d}\nu.
    \end{equation}
\end{lemma}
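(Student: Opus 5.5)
The plan is the standard change-of-measure argument based on Jensen's inequality, i.e.\ the Gibbs variational principle. We may assume $\kl(\mu\|\nu)<\infty$, since otherwise there is nothing to prove; in particular $\mu\ll\nu$. We may also assume $Z:=\int e^\phi\,\mathrm{d}\nu<\infty$, because if $Z=\infty$ the right-hand side equals $-\infty$ (as long as $\int\phi\,\mathrm{d}\mu$ is well defined, which we take as part of the statement's implicit convention). Since $e^\phi>0$, we have $Z>0$. We then define the tilted probability measure
\[
\mathrm{d}\nu_\phi=\frac{e^\phi}{Z}\,\mathrm{d}\nu .
\]
It is equivalent to $\nu$, so $\mu\ll\nu_\phi$ and
\[
\frac{\mathrm{d}\mu}{\mathrm{d}\nu_\phi}=\frac{\mathrm{d}\mu}{\mathrm{d}\nu}\, Z e^{-\phi}.
\]

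The key step is the decomposition
\[
\kl(\mu\|\nu)=\int\log\frac{\mathrm{d}\mu}{\mathrm{d}\nu}\,\mathrm{d}\mu=\kl(\mu\|\nu_\phi)+\int\phi\,\mathrm{d}\mu-\log Z .
\]
This follows by writing $\log\frac{\mathrm{d}\mu}{\mathrm{d}\nu}=\log\frac{\mathrm{d}\mu}{\mathrm{d}\nu_\phi}+\phi-\log Z$ pointwise $\mu$-a.e.\ and integrating against $\mu$. We then conclude by nonnegativity of the KL divergence between probability measures. That nonnegativity itself comes from Jensen's inequality applied to the convex function $t\mapsto t\log t$:
\[
\int f\Big(\frac{\mathrm{d}\mu}{\mathrm{d}\nu_\phi}\Big)\mathrm{d}\nu_\phi\ge f(1)=0 .
\]
Alternatively, one can use the variational lower bound of the previous lemma with the test function $\phi-\log Z$. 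Since $f^*(s)=e^{s-1}$, that choice gives
\[
\kl(\mu\|\nu)\ge\int\phi\,\mathrm{d}\mu-\log Z-\frac1e\int e^{\phi}/Z\,\mathrm{d}\nu=\int\phi\,\mathrm{d}\mu-\log Z-\frac1e .
\]
This is off by a constant, so the better shift is $\phi-\log Z+1$. With it, $\int e^{\phi-\log Z}\,\mathrm{d}\nu=1$, and we get exactly
\[
\int\phi\,\mathrm{d}\mu-\log Z+1-1=\int\phi\,\mathrm{d}\mu-\log Z .
\]
This one-line route reuses an earlier lemma and is the one I would write.

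The main obstacle is integrability bookkeeping when splitting integrals. The term $\int\phi\,\mathrm{d}\mu$ may fail to be finite, and in the decomposition we must avoid expressions of the form $\infty-\infty$. I would handle this in two stages. First, assume $\phi$ is bounded, where every integral is finite and the argument above is rigorous. Then extend to general measurable $\phi$ by truncation, $\phi_n=\max(\min(\phi,n),-n)$, passing to the limit with monotone convergence on the positive and negative parts separately (for both $\int\phi_n\,\mathrm{d}\mu$ and $\int e^{\phi_n}\,\mathrm{d}\nu$). Whenever the right-hand side is well defined, the inequality survives the limit.
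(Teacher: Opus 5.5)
Your proof is correct, but it takes a different route from the paper's. The paper applies Jensen's inequality once, to the convex function $\exp$ under $\mu$: $\exp\big(\int\phi\,\mathrm{d}\mu-\kl(\mu\|\nu)\big)=\exp\big(\int(\phi-\log\tfrac{\mathrm{d}\mu}{\mathrm{d}\nu})\,\mathrm{d}\mu\big)\le\int e^{\phi}\big(\tfrac{\mathrm{d}\mu}{\mathrm{d}\nu}\big)^{-1}\mathrm{d}\mu\le\int e^{\phi}\,\mathrm{d}\nu$, and then takes logarithms. This is the shortest self-contained argument and needs no normalizing constant, but it treats integrability only formally.

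Your routes each give something extra:
\begin{itemize}
\item The tilted-measure computation gives the exact identity $\kl(\mu\|\nu)-\big(\int\phi\,\mathrm{d}\mu-\log Z\big)=\kl(\mu\|\nu_\phi)$. This identifies the slack and characterizes equality: it holds iff $\nu_\phi=\mu$, i.e.\ $\phi=\log\tfrac{\mathrm{d}\mu}{\mathrm{d}\nu}$ up to an additive constant. That sharpens the attainment claim the paper makes right after the lemma.
\item The shift route shows that, for each $\phi$, the DV objective equals $\sup_{c\in\R}\{\int(\phi+c)\,\mathrm{d}\mu-\int e^{\phi+c-1}\,\mathrm{d}\nu\}$, with the supremum attained at $c=1-\log Z$. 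So it derives the lemma from the preceding $f$-divergence bound. It also explains the paper's later ``DV vs.\ variational KL'' comparison as optimization over constant shifts; the paper proves that comparison separately via $x\ge 1+\log x$.
\end{itemize}

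Your truncation step also supplies the integrability bookkeeping the paper omits. The only slip is in the $Z=\infty$ case. There you need $\int\phi\,\mathrm{d}\mu<+\infty$, not merely ``well defined'', to avoid $+\infty-\infty$. When $\kl(\mu\|\nu)<\infty$ and $Z<\infty$ this holds automatically, e.g.\ from $ab\le a\log a-a+e^{b}$ with $a=\tfrac{\mathrm{d}\mu}{\mathrm{d}\nu}$, $b=\phi^+$.
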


\begin{proof}
    By Jensen's inequality we get:
    \begin{equation}
    \exp\left( \int \phi \, \mathrm{d}\mu - \kl(\mu \| \nu) \right) = \exp\left( \int \left( \phi - \log \left( \frac{\mathrm{d}\mu}{\mathrm{d}\nu} \right) \right) \ \mathrm{d}\mu \right)
    \leq \int \exp\left( \phi - \log \left( \frac{\mathrm{d}\mu}{\mathrm{d}\nu} \right) \right)\ \mathrm{d}\mu.
    \end{equation}
    Now observe that:
    \begin{equation}
    \int \exp\left( \phi - \log \left( \frac{\mathrm{d}\mu}{\mathrm{d}\nu} \right) \right)\ \mathrm{d}\mu
    = \int \frac{e^{\phi}}{ \frac{\mathrm{d}\mu}{\mathrm{d}\nu} }\ \mathrm{d}\mu
    = \int e^{\phi} \, \mathrm{d}\nu.
    \end{equation}
  Applying the logarithm function to the previous Jensen's inequality hence implies the desired inequality.
\end{proof}

Passing to the supremum in the RHS of \Cref{lemmaDVlowerbound}, which is achieved for  \( \phi = \log \circ \frac{\mathrm{d}\mu}{\mathrm{d}\nu} \),  we get:
\begin{equation} \label{Donsker-Varadhan} \tag{Donsker-Varadhan}
    \kl(\mu \| \nu) = \sup_{\phi}\ \left\{ \int \phi \, \mathrm{d}\mu - \log \int e^\phi \, \mathrm{d}\nu \right\}.
\end{equation}

\paragraph{DV vs. Variational KL.} \label{Appendix:DVvsVariational}
The Kullback--Leibler divergence admits both the Donsker--Varadhan (DV) and the variational representation via the convex conjugate of $f(t) = t \log t$. Both formulations are equivalent at optimality:
\begin{align}
    \kl(\mu \| \nu) &= \sup_{\phi : \R^d \to \R} \left\{ \int \phi\ \mathrm{d}\mu  - \log \int e^\phi\ \mathrm{d}\nu \right\}, \\
    &= \sup_{\phi : \R^d \to \R} \left\{ \int \phi\ \mathrm{d}\mu - \int e^{\phi(y)-1}\ \mathrm{d}\nu(y)\right\}.
\end{align}
To compare both formulations, we first use the inequality $x \geq \log (x) + 1$ for all $x>0$ (by concavity of $\log$) at $x=\int e^{\phi-1}\ \mathrm{d}\nu$. Hence,
\begin{equation}
    \log\left(\int e^{\phi-1}\ \mathrm{d}\nu\right) + 1 = \log\left(\int e^{\phi}\ \mathrm{d}\nu\right) \le \int e^{\phi-1}\ \mathrm{d}\nu.
\end{equation}

As a result, for any function \( \phi \), we have
\begin{equation}
    \int \phi\ \mathrm{d}\mu - \int e^{\phi(y)-1}\ \mathrm{d}\nu(y) \leq \int \phi\ \mathrm{d}\mu - \log \int e^\phi\ \mathrm{d}\nu.
\end{equation}

This shows that the DV representation yields a tighter lower bound than the variational representation in the sense that for each choice of $\phi$, the obtained lower bound on KL in the RHS is larger.

\subsection{Descent scheme of \citep{baptista2025proximal}} \label{appendix:baptista} 

\citet{baptista2025proximal} propose to minimize
\begin{equation} \label{eq:baptista}
    \cF_\varepsilon(\mu) = \inf_{\eta\in\cP_2(\R^d)}\ \W_2^2(\mu,\eta) + \varepsilon\kl(\eta||\nu)
\end{equation}
using Wasserstein Gradient Descent. Below, we show that this scheme is equivalent to solving the JKO scheme using GWF with the Donsker-Varadhan formulation.

Let $\mu_\iter\in \cPa$, $\eta_\iter^\star=\argmin_{\eta}\ \W_2^2(\mu_\iter,\eta) + \varepsilon\kl(\eta||\nu)$, using \Cref{thm:EquivalenceForwardBackward} for the KL, the $\iter$-th step of Wasserstein gradient descent of $\cF_\varepsilon(\mu)$ with $\varepsilon=2\tau$ and step size $\gamma=\tfrac12$ is equivalent to the $\iter$-th step of the JKO scheme applied to the KL:
\begin{equation}
    \T_\iter = \argmin_\T\ \frac{1}{\varepsilon}\|\T-\id\|_{L^2(\mu_\iter)}^2 + \kl(\T_\#\mu_\iter||\nu),
\end{equation}
where $\T_\iter$ is the OT map between $\mu_\iter$ and $\mu_{\iter + 1} :=\eta_\iter^\star$. Denoting $h_\iter$ the Kantorovich potential such that $\T_\iter = \id - \nabla h_\iter^c$, we get
\begin{equation}
    \forall \iter\ge 0,\ \mu_{\iter+1} = (\id - \nabla h_\iter^c)_\#\mu_\iter.
\end{equation}
\citet{baptista2025proximal} proposed to find $h_\iter$ by solving a dual form of \eqref{eq:baptista} derived through the Donsker-Varadhan formulation. They first showed that 
\begin{equation}
    \cF_\varepsilon(\mu_\iter) = \sup_{(f,g)\in\Phi_c}\ \int f\mathrm{d}\mu_\iter - \varepsilon \log\left(\int e^{-g/\varepsilon}\ \mathrm{d}\nu \right),
\end{equation}
where we recall that $\Phi_c = \{(f,g)\in L^1(\mu)\times L^1(\nu),\ f(x)+g(y)\le c(x,y) \text{ for $\mu\otimes\nu$-a.e. }(x,y)\}$. 
For $c(x,y)=\tfrac12 \|x-y\|_2^2$, we can do a similar reparametrization as \eqref{eq:semi_dual_quadratic}, \ie~$\varphi=\tfrac12\|\cdot\|_2^2 - f$, $\psi=\tfrac12\|\cdot\|_2^2 - g$ and use $\psi=\varphi^*$, and we obtain
\begin{equation}
    \begin{aligned}
        \cF_\varepsilon(\mu_\iter) &= \frac12\int \|x\|_2^2\ \mathrm{d}\mu_\iter(x) + \sup_{\varphi\ \text{convex}}\ -\int \varphi(x)\ \mathrm{d}\mu_\iter(x) - \varepsilon \log \left(\int e^{-\frac{\|y\|_2^2}{2\varepsilon}}e^{-\frac{\varphi^*(y)}{\varepsilon}}\ \mathrm{d}\nu(y)\right) \\
        &= \frac12\int \|x\|_2^2\ \mathrm{d}\mu_\iter(x) - \inf_{\varphi\ \text{convex}}\ \int \varphi(x)\ \mathrm{d}\mu_\iter(x) + \varepsilon \log \left(\int e^{-\frac{\|y\|_2^2}{2\varepsilon}}e^{-\frac{\varphi^*(y)}{\varepsilon}}\ \mathrm{d}\nu(y)\right).
    \end{aligned}
\end{equation}
Finally, they parametrize $\varphi^*$ by $\varphi^*(x) = \langle x, \nabla g(x)\rangle - \varphi\big(\nabla g(x)\big)$ with $g$ convex, and solve 
\begin{equation}
    \cF_\varepsilon(\mu_\iter) = \frac12\int \|x\|_2^2\ \mathrm{d}\mu_\iter(x) -\inf_{\varphi\ \text{convex}}\ \sup_{g\ \text{convex}}\ \int \varphi(x)\ \mathrm{d}\mu_\iter(x) + \varepsilon \log\left(\int e^{-\frac{\|y\|_2^2}{2\varepsilon}} e^{-\frac{\langle x, \nabla g(x)\rangle - \varphi(\nabla g(x))}{\varepsilon}}d\nu(y)\right),
\end{equation}
using ICNNs \citep{amos2017input} to parametrize $\varphi$ and $g$.

Hence, the scheme proposed in \citep{baptista2025proximal} can be seen as a specific instance of GWF, i.e. generative modeling through JKO schemes. In their case, the objective is the KL, whose dual formulation is chosen as the Donsker-Varadhan formula, and where the optimized functions (in the dual form) are parametrized with ICNNs.

\section{UOT and JKO with MMD} \label{appendix:MMD}

In this section, we derive the dual of the UOT problem with IPM regularization. We then derive a new dual formulation for the squared MMD regularization and show how it can be used to obtain a JKO-based generative scheme. Finally, we present the full algorithm of the JKO MMD GAN, which corresponds to a slight adaptation of the previously derived scheme.

\subsection{Duality of UOT with IPM Regularization} \label{paragraph:sUOTIPM} 

Let $\cG$ be a class of test functions. Following \citet{manupriya2024mmdregularized}, we assume that $\cG$ is compact and absolutely convex (\ie~convex and such that $\lambda\cG\subset\cG$ for all $\lambda\in\R$ such that $|\lambda|\le 1$). 
We now derive the dual of the UOT problem with IPM regularization between $\mu,\nu\in \cP_2(\mathcal{\R^d})$. Let $\lambda_1,\lambda_2>0$. We start from the primal IPM-based UOT objective:
\begin{equation}
    \begin{aligned}
        \UOT_c(\mu, \nu) &= \min_{\gamma \in \cM^+(\R^d \times \R^d)}
    \left\{ \int c(x,y) \, \mathrm{d}\gamma(x,y) + \lambda_1 \mathrm{IPM}_{\cG}(\pi^1_\#\gamma, \mu) + \lambda_2 \mathrm{IPM}_{\cG}(\pi^2_\#\gamma, \nu)\right\} \\
        &= 
    \min_{\gamma \in \cM^+(\R^d \times \R^d)}
    \left\{ \int c(x,y) \, \mathrm{d}\gamma(x,y)
    + \lambda_1 \sup_{f \in \cG} \Big| \int f \, \mathrm{d}\mu - \int f \, \mathrm{d}(\pi^1_\#\gamma) \Big|
    + \lambda_2 \sup_{g \in \cG} \Big| \int g \, \mathrm{d}\nu - \int g \, \mathrm{d}(\pi^2_\#\gamma) \Big|
    \right\}.
    \end{aligned}
\end{equation}
Since $\cG$ is compact, the supremums are attained. Moreover, as it is absolutely convex, $f\in\cG$ implies $-f\in\cG$, and thus we can remove the absolute values.
\begin{equation}
    = \min_{\gamma \in \cM^+(\R^d \times \R^d)}
    \left\{
    \int c(x,y) \, \mathrm{d}\gamma(x,y)
    + \max_{f \in \cG} \Big( \lambda_1 \int f \, \mathrm{d}\mu - \lambda_1 \int f \, \mathrm{d}(\pi^1_\#\gamma) \Big)
    + \max_{g \in \cG} \Big( \lambda_2 \int g \, \mathrm{d}\nu - \lambda_2 \int g \, \mathrm{d}(\pi^2_\#\gamma) \Big)
    \right\}.
\end{equation}
Introducing the rescaled sets $\cG(\lambda) := \{\lambda f,\ f \in \cG\}$, we can rewrite the above as
\begin{equation}
    = \min_{\gamma \in \cM^+(\R^d \times \R^d)}
    \left\{
    \int c(x,y) \, \mathrm{d}\gamma(x,y)
    + \max_{f \in \cG(\lambda_1)} \Big( \int f \, \mathrm{d}\mu - \int f \, \mathrm{d}(\pi^1_\#\gamma) \Big)
    + \max_{g \in \cG(\lambda_2)} \Big( \int g \, \mathrm{d}\nu - \int g \, \mathrm{d}(\pi^2_\#\gamma) \Big)
    \right\}.
\end{equation}
For convenience, define the lifted functions $\tilde{f}(x,y) := f(x)$, $\tilde{g}(x,y) := g(y)$. By applying Sion’s minimax theorem for one compact set 
\citep[Theorem 4.2]{SionMinMax1958} 
(since $\cG(\lambda_1)\times \cG(\lambda_2)$ is compact and convex, 
$\cM^+(\R^d\times\R^d)$ is convex, the objective is affine in each variable, hence concave-convex and continuous in $(f,g)$), 
the problem becomes
\begin{equation}
    = \max_{f \in \cG(\lambda_1), \ g \in \cG(\lambda_2)} \left\{
    \int f \, \mathrm{d}\mu + \int g \, \mathrm{d}\nu
    + \min_{\gamma \in \cM^+(\R^d \times \R^d)}
    \int \big( c(x,y) - \tilde{f}(x,y) - \tilde{g}(x,y) \big) \, \mathrm{d}\gamma(x,y)
    \right\}.
\end{equation}
The inner minimization reduces to 
\begin{equation}
    \min_{\gamma} \int \big( c(x,y) - f(x) - g(y) \big) \, \mathrm{d}\gamma(x,y) =
    \begin{cases}
        0, & \text{if } f(x) + g(y) \leq c(x,y) \ \text{for $\gamma$-a.e.} (x,y), \\
        -\infty, & \text{otherwise}.
    \end{cases}
\end{equation}
Therefore, the dual formulation of UOT is
\begin{equation} \label{eq:dual_uot_ipm}
    \UOT_c(\mu, \nu)= \max_{\substack{f \in \cG(\lambda_1), \ g \in \cG(\lambda_2) \\ (f,g)\in \Phi_c}}\  
    \left\{ \int f \, \mathrm{d}\mu + \int g \, \mathrm{d}\nu \right\},
\end{equation}
where we recall that $\Phi_c = \{(f,g)\in L^1(\mu)\times L^1(\nu),\ f(x)+g(y)\le c(x,y) \text{ for $\mu\otimes\nu$-a.e. }(x,y)\}$.

The semi-relaxed UOT problem corresponds to $\lambda_1=+\infty$. 
Thus, the dual \eqref{eq:dual_uot_ipm} becomes
\begin{equation}
    \sUOT_c(\mu,\nu) = \max_{g\in \cG(\lambda_2),\ (f,g)\in\Phi_c}\ \int f\ \mathrm{d}\mu + \int g\ \mathrm{d}\nu.
\end{equation}
Then, using the $c$-transform $g^c$ and observing that for $\mu\otimes\nu$-almost every $(x,y)$, $g^c(x)+g(y) \le c(x,y)$ and thus $(g^c,c)\in\Phi_c$, we obtain the semi-dual relaxation
\begin{equation}
    \sUOT_c(\mu,\nu) = \max_{g\in\cG(\lambda_2)}\ \int g^c\ \mathrm{d}\mu + \int g\ \mathrm{d}\nu.
\end{equation}
\paragraph{Ball based IPMs.} The previous computations work in particular for IPMs where $\cG=\{f, \|f\|\le 1\}$ for some norm $\|\cdot\|$. This includes many popular choices. In particular, for $\|\cdot\|_{\mathcal H}$ the norm of a reproducing kernel Hilbert space (RKHS) 
$\mathcal H$ \citep{steinwart2012mercer}, this yields the MMD, and for $\|f\|_L = \sup\{|f(x)-f(y)|/\|x-y\|_2,\ x\neq y\}$, this yields the Wasserstein-1 distance defined for $c(x,y)=\|x-y\|_2$. For such IPMs, we observe that $\cG(\lambda)=\{f,\ \|f\|\le \lambda\}$.

\subsection{Duality of UOT with Squared MMD Regularization}

We now discuss the dual of the UOT problem regularized by the squared MMD. Unlike IPMs, the squared MMD is not written as a supremum of the form $ \sup_{f \in \cG} \Big| \int f \, \mathrm{d}\mu - \int f \, \mathrm{d}\nu \Big|$ and therefore the derivation used in the previous section does not apply directly. Instead, it admits a variational formulation over the whole RKHS with an additional quadratic penalty term. We recall this formulation below; see \emph{e.g.} \citep[Proposition 4]{mroueh2021convergence} or \citep[Proposition 1]{mroueh2020unbalanced}.

\begin{lemma} \label{lemma:var_form_mmd2}
    Let $\mu,\nu\in\cP_2(\R^d)$, then
    \begin{equation} \label{eq:variational_mmd2}
        \MMD_k^2(\mu,\nu) = \sup_{f\in \cH_k}\ 2 \int f\ \mathrm{d}(\mu-\nu) - \|f\|_\cH^2.
    \end{equation}
\end{lemma}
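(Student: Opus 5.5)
The plan is to use the reproducing property to rewrite the mean difference as an inner product in $\cH_k$, and then to maximize a concave quadratic in closed form. Define the kernel mean embedding $m_\mu := \int k(x,\cdot)\ \mathrm{d}\mu(x)\in\cH_k$, and similarly $m_\nu$. Set $m := m_\mu - m_\nu$.

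The first step, which is the only non-algebraic one, is to check that $m_\mu$ is a well-defined element of $\cH_k$ as a Bochner integral. It suffices that $\int \|k(x,\cdot)\|_{\cH}\ \mathrm{d}\mu(x) = \int \sqrt{k(x,x)}\ \mathrm{d}\mu(x)<\infty$. I will assume, as is implicit in the statement, that the kernel satisfies this integrability condition under measures in $\cPR$. This holds for instance if $k$ is bounded, or if $k(x,x)$ grows at most quadratically, since $\mu$ has a finite second moment. Under this assumption, for every $f\in\cH_k$ the reproducing property $f(x)=\langle f, k(x,\cdot)\rangle_{\cH}$ can be exchanged with the integral. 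Bochner integrals commute with continuous linear functionals, which gives
\[
\int f\ \mathrm{d}(\mu-\nu) = \langle f, m\rangle_{\cH}.
\]
In the same way, the standard definition $\MMD_k(\mu,\nu) = \sup_{\|f\|_\cH\le 1} \int f\ \mathrm{d}(\mu-\nu)$ becomes $\sup_{\|f\|_{\cH}\le1}\langle f,m\rangle_{\cH} = \|m\|_{\cH}$ by Cauchy--Schwarz. Hence $\MMD_k^2(\mu,\nu)=\|m\|_\cH^2$.

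The second step is the completion of the square. Substituting the identity above into the objective of \eqref{eq:variational_mmd2} gives
\[
2\langle f,m\rangle_{\cH} - \|f\|_{\cH}^2 = \|m\|_{\cH}^2 - \|f-m\|_{\cH}^2 \le \|m\|_{\cH}^2,
\]
with equality if and only if $f=m$. The supremum is therefore attained at $f^\star=m$, that is, $f^\star(\cdot)=\int k(x,\cdot)\ \mathrm{d}(\mu-\nu)(x)$, and its value is $\|m\|_\cH^2=\MMD_k^2(\mu,\nu)$. This also justifies the closed-form witness function used in the main text, after rescaling by $\tfrac12$.

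I expect the only real obstacle to be the integrability and well-posedness of the mean embeddings on $\cPR$; the rest is a two-line algebraic identity.
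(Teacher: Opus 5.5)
Your proof is correct and follows essentially the same route as the paper: you use the reproducing property to write $\int f\,\mathrm{d}(\mu-\nu)=\langle f, m\rangle_{\cH}$, complete the square, and read off the maximizer $f^\star=m$ with value $\|m\|_{\cH}^2=\MMD_k^2(\mu,\nu)$. Two of your steps go beyond the paper without changing the argument: the Bochner-integrability justification of the mean embedding, which the paper leaves implicit, and the Cauchy--Schwarz identification $\MMD_k(\mu,\nu)=\|m\|_{\cH}$.
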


\begin{proof}
    On one hand, for $f\in \cH_k$, $x\in\R^d$, the reproducing property yields $f(x)=\langle f, k(x,\cdot)\rangle_\cH$. Thus,
    \begin{equation}
        \begin{aligned}
            \int f(x)\ \mathrm{d}(\mu-\nu)(x) &= \int f(x)\ \mathrm{d}\mu(x) - \int f(x) \ \mathrm{d}\nu(x) \\
            &= \int \langle f, k(x,\cdot)\rangle_\cH\ \mathrm{d}\mu(x) - \int \langle f, k(x,\cdot)\rangle_\cH\ \mathrm{d}\nu(x) \\
            &= \langle f, \int k(x,\cdot)\ \mathrm{d}(\mu-\nu)(x)\rangle_\cH.
        \end{aligned}
    \end{equation}
    Moreover,
    \begin{equation}
        \begin{aligned}
            2\int f\ \mathrm{d}(\mu-\nu) - \|f\|_\cH^2 &= -\left\|\int k(x,\cdot)\ \mathrm{d}(\mu-\nu)(x)\right\|_\cH^2 + 2 \left\langle f, \int k(x,\cdot)\ \mathrm{d}(\mu-\nu)(x)\right\rangle_\cH - \|f\|_\cH^2 \\&\quad + \left\|\int k(x,\cdot)\ \mathrm{d}(\mu-\nu)(x)\right\|_\cH^2 \\
            &= -\left\|f - \int k(x,\cdot)\ \mathrm{d}(\mu-\nu)(x)\right\|_\cH^2 +  \left\|\int k(x,\cdot)\ \mathrm{d}(\mu-\nu)(x)\right\|_\cH^2.
        \end{aligned}
    \end{equation}
    Hence, as $-\left\|f - \int k(x,\cdot)\ \mathrm{d}(\mu-\nu)(x)\right\|_\cH^2\le 0$, the supremum is attained for $f^*=\int k(x,\cdot)\ \mathrm{d}(\mu-\nu)(x)$. And plugging $f^*$ into the objective of \eqref{eq:variational_mmd2}, we verify that we recover the squared MMD:
    \begin{equation}
        \begin{aligned}
            2\int f^*\ \mathrm{d}(\mu-\nu) - \|f^*\|_\cH^2 &= 2 \left\langle f^*, \int k(x,\cdot)\ \mathrm{d}(\mu-\nu)(x)\right\rangle_\cH - \|f^*\|_\cH^2 \\
            &= 2\left\|\int k(x,\cdot)\ \mathrm{d}(\mu-\nu)(x)\right\|_\cH^2 - \left\|\int k(x,\cdot)\ \mathrm{d}(\mu-\nu)(x)\right\|_\cH^2 \\
            &= \left\| \int k(x,\cdot)\ \mathrm{d}(\mu-\nu)(x) \right\|_\cH^2 \\
            &= \MMD_k^2(\mu,\nu).
        \end{aligned}
    \end{equation}
    \qedhere
\end{proof}

Next, we derive the dual of semi-relaxed UOT problem regularized by $\tfrac12\MMD^2_k$.

\begin{proposition}
\label{prop:dual_jko_mmd2}
    Let $\mu,\nu\in\mathcal{P}_2(\R^d)$, $\lambda_2>0$ and let $k$ be a
bounded continuous positive definite kernel. The semi-relaxed UOT problem regularized by the squared MMD can be written equivalently as
    \begin{equation}
        \begin{aligned}
            \sUOT_c(\mu,\nu) &= \inf_{\gamma \in \cM_+(\R^d\times\R^d), \pi^1_\#\gamma=\mu}\ \int c(x,y)\ \mathrm{d}\gamma(x,y) + \tfrac{\lambda_2}{2}\MMD_k^2(\nu, \pi^2_\#\gamma) \\
            &= \sup_{\substack{f\in C_b(\R^d),g\in\cH \\ (f,g)\in \Phi_c}}\ \int f\ \mathrm{d}\mu + \int g\ \mathrm{d}\nu - \tfrac{1}{2\lambda_2}\|g\|_\cH^2 \\
            &= \sup_{\substack{g\in\cH}}\ \int g^c\ \mathrm{d}\mu + \int g\ \mathrm{d}\nu - \tfrac{1}{2\lambda_2}\|g\|_\cH^2.
        \end{aligned}
    \end{equation}
\end{proposition}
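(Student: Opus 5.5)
The plan mirrors the IPM derivation of \cref{paragraph:sUOTIPM}: replace the regularizer by its variational form, exchange inf and sup, and compute the inner minimization over plans explicitly.

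\textbf{Step 1: substitute the variational form.} By \Cref{lemma:var_form_mmd2}, applied with the roles of $\nu$ and $\pi^2_\#\gamma$ and rescaled,
\begin{equation*}
\tfrac{\lambda_2}{2}\MMD_k^2(\nu,\pi^2_\#\gamma)=\sup_{g\in\cH}\ \int g\ \mathrm{d}\nu-\int g\ \mathrm{d}(\pi^2_\#\gamma)-\tfrac{1}{2\lambda_2}\|g\|_\cH^2 .
\end{equation*}
To see this, write $g=\lambda_2 f$ in \eqref{eq:variational_mmd2} and multiply by $\tfrac{\lambda_2}{2}$. The primal then becomes
\begin{equation*}
\inf_{\gamma,\ \pi^1_\#\gamma=\mu}\ \sup_{g\in\cH}\ \int \big(c(x,y)-g(y)\big)\ \mathrm{d}\gamma(x,y)+\int g\ \mathrm{d}\nu-\tfrac{1}{2\lambda_2}\|g\|_\cH^2 .
\end{equation*}
Since $\pi^1_\#\gamma=\mu$ is a probability measure, $\gamma$ ranges over probability couplings with first marginal $\mu$. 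This set is convex, and it is tight because its first marginal is fixed. The objective is affine in $\gamma$ and concave in $g$. Because $k$ is bounded, every $g\in\cH$ is bounded by $\|g\|_\cH\sqrt{\sup k}$, so all integrals are finite.

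\textbf{Step 2: exchange inf and sup.} This is the main obstacle. Unlike the IPM case, $\cH$ is not compact. I would first try to restrict $g$ to a ball $\{\|g\|_\cH\le R\}$, which is weakly compact in the Hilbert space $\cH$. This restriction loses nothing: the maximizer is $g^\star=\lambda_2\int k(x,\cdot)\ \mathrm{d}(\nu-\pi^2_\#\gamma)(x)$, whose norm is at most $2\lambda_2\sqrt{\sup k}$ uniformly in $\gamma$. Next, the map $g\mapsto\int g\ \mathrm{d}\rho$ is continuous in $\cH$, since $g(x)=\langle g,k(x,\cdot)\rangle$ and $\int k(x,\cdot)\ \mathrm{d}\rho$ is a Bochner integral. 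Hence the objective is weakly upper semicontinuous and concave in $g$. Sion's theorem, with compactness on the $g$ side as in \citep{SionMinMax1958}, then justifies swapping the order. One caveat remains: $c$ is unbounded, so the $\gamma$-side lower semicontinuity is not needed for this version of Sion, but I would double-check that the version used requires no topology on the $\gamma$ side beyond convexity.

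\textbf{Step 3: compute the inner infimum.} After swapping, the inner problem is
\begin{equation*}
\inf_{\gamma,\ \pi^1_\#\gamma=\mu}\int \big(c(x,y)-g(y)\big)\ \mathrm{d}\gamma .
\end{equation*}
By disintegration this equals $\int g^c\ \mathrm{d}\mu$. The lower bound holds pointwise. For the upper bound, pick a measurable near-minimizer $y(x)$ of $c(x,\cdot)-g$ and take $\gamma=(\id,y)_\#\mu$; a measurable selection exists because the integrand is normal, exactly as in the footnote invoking \citep{rockafellar1998variational}. This yields the third expression of the statement. Since $g$ is bounded and $c\ge 0$, we have $g^c\ge-\sup|g|$, so $g^c$ is bounded below and measurable.

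\textbf{Step 4: recover the middle formula.} Any pair $(f,g)\in\Phi_c$ satisfies $f\le g^c$ $\mu$-a.e., so the middle supremum is at most the third. Conversely, the pair $(g^c,g)$ is admissible. If $g^c$ is not in $C_b(\R^d)$, I would approximate it from below by bounded continuous functions $f_n\le g^c$ increasing to $g^c$, using that $g^c$ is upper semicontinuous as an infimum of continuous functions. Monotone convergence then closes the argument.
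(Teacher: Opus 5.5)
Your route differs from the paper's. The paper does one Fenchel--Rockafellar step on $\inf_\gamma G(\gamma)+F(A\gamma)$ with $A\gamma=(\pi^1_\#\gamma,m_{\pi^2_\#\gamma})$. It dualizes the first-marginal constraint through $\iota_{\{m_1=\mu\}}$ and computes $G^*,F_1^*,F_2^*$. This gives the two-potential dual first, and only then does it pass to the semi-dual via the $c$-transform. You instead keep the marginal constraint primal and dualize only the MMD term. You then reach the semi-dual directly from $\inf_{\pi^1_\#\gamma=\mu}\int(c-g)\,\mathrm{d}\gamma=\int g^c\,\mathrm{d}\mu$. The step that makes this work is your bound $\|g^\star\|_{\cH}\le 2\lambda_2\sqrt{\sup_x k(x,x)}$, uniform in $\gamma$, which restores the compactness the IPM argument took from $\cG$.

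What your route buys:
\begin{itemize}
\item the only topological input is weak compactness of an RKHS ball;
\item dual attainment comes for free;
\item there is no constraint qualification to verify. In the paper's formulation this is delicate: $F$ contains the indicator of the singleton $\{\mu\}$, so it is not continuous at $A\gamma$.
\end{itemize}
The paper's template, in exchange, yields the two-potential dual directly and adapts to other regularizers by changing only $F_2^*$.

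On your caveat: the usual statement of Sion's theorem also needs lower semicontinuity in $\gamma$. It holds for the narrow topology on finite-cost couplings: $\gamma\mapsto\int c\,\mathrm{d}\gamma$ is l.s.c., and $\gamma\mapsto\int g\,\mathrm{d}\pi^2_\#\gamma$ is continuous since $g\in C_b(\R^d)$. Alternatively, Ky Fan's minimax theorem needs semicontinuity only on the compact side. Restricting to finite-cost couplings keeps the objective real-valued and changes neither side. Your remark that couplings with fixed first marginal form a tight set is false (consider $\mu\otimes\delta_{y_n}$ with $\|y_n\|_2\to\infty$), but you never use it.

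Step~4 contains a wrong claim. An upper semicontinuous function is a \emph{decreasing} limit of continuous functions, not an increasing one. The supremum of the continuous minorants of $g^c$ is its l.s.c.\ envelope, which can differ from $g^c$ on a set of positive $\mu$-measure (e.g.\ the indicator of a closed nowhere dense set of positive Lebesgue measure). So monotone convergence does not give the lower bound as written.

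No approximation is actually needed for the quadratic cost $c(x,y)=\|x-y\|_2^2$:
\begin{itemize}
\item $c\ge 0$ and $c(x,x)=0$ give $|g^c|\le\sup|g|$;
\item $\|x\|_2^2-g^c(x)=\sup_y\{2\langle x,y\rangle-\|y\|_2^2+g(y)\}$ is a finite convex function, hence continuous.
\end{itemize}
So $g^c\in C_b(\R^d)$ and $(g^c,g)$ is directly admissible.

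Also, your inequality $f\le g^c$ requires $f(x)+g(y)\le c(x,y)$ for \emph{every} $y$. That pointwise constraint is what the paper's conjugate $G^*=\iota_{\{h\le c\}}$ actually produces. If $\Phi_c$ is read literally as a $\mu\otimes\nu$-a.e.\ constraint, the middle formula fails. Take $\mu=\delta_a$ and $\nu=\delta_0$: the pair $g=0$, $f\equiv c(a,0)$ gives $c(a,0)$. But $\gamma=\delta_{(a,a)}$ gives a primal value of at most $2\lambda_2\sup_x k(x,x)$. You should therefore state explicitly that you use the pointwise constraint.
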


\begin{proof}
    We first rewrite the marginal constraint through its convex dual:
    \begin{equation}
        \iota_{\{\pi^1_\#\gamma=\mu\}}(\gamma)
        =
        \sup_{f\in C_b(\R^d)} \left\{ \int f\,\mathrm d\mu - \int f\,\mathrm d(\pi^1_\#\gamma) \right\},
    \end{equation}
    where \(\iota_{\{\pi^1_\#\gamma=\mu\}}(\gamma)=0\) if \(\pi^1_\#\gamma=\mu\), and \(+\infty\) otherwise.
    Indeed, if \(\pi^1_\#\gamma=\mu\), the quantity inside the supremum is always \(0\). If \(\pi^1_\#\gamma\neq\mu\), then there exists \(f\in C_b(\R^d)\) such that
    \(\int f\,\mathrm d\mu-\int f\,\mathrm d(\pi^1_\#\gamma)\neq 0\), and scaling \(f\) shows that the supremum is \(+\infty\).

    Using this identity, we obtain
    \begin{equation} \label{eq:objective_primal_suot}
        \begin{aligned}
            \sUOT_c(\mu,\nu)
            &= \inf_{\gamma \in \cM_+(\R^d\times\R^d), \pi^1_\#\gamma=\mu}\ \int c(x,y)\ \mathrm{d}\gamma(x,y) + \tfrac{\lambda_2}{2}\MMD_k^2(\nu, \pi^2_\#\gamma) \\
            &= \inf_{\gamma\in\cM_+(\R^d\times\R^d)}
            \int c(x,y)\,\mathrm d\gamma(x,y)
            + \iota_{\{\pi^1_\#\gamma=\mu\}}(\gamma)
            + \frac{\lambda_2}{2}\| m_\nu -m_{\pi^2_\#\gamma}\|^2_{\mathcal{H}}.
        \end{aligned}
    \end{equation}

\paragraph{Fenchel--Rockafellar formulation.} We now cast the problem in the form required to apply
Fenchel--Rockafellar duality.
    Let us define $G:\cM_+(\R^d\times\R^d)\to\R$ as $G(\gamma) = \int c(x,y)\ \mathrm{d}\gamma(x,y)$ for $\gamma\in\cM_+(\R^d\times\R^d)$, $A:\cM_+(\R^d\times\R^d)\to \cM_+(\R^d)\times \mathcal{H}$ as $A\gamma = (\pi^1_\#\gamma, m_{\pi^2_\#\gamma})$ where $m_{\pi^2_\#\gamma}$ denotes the kernel mean embedding of $\pi^2_\#\gamma$, namely $ m_{\pi^2_\#\gamma} = \int k(\cdot,y)\,\mathrm d(\pi^2_\#\gamma)(y),$ and $F:\cM_+(\R^d)\times \mathcal{H}\to \R$ as $F(m_1,h)=\iota_{\{m_1=\mu\}} + \frac{\lambda_2}{2} \| m_\nu - h\|^2_{\mathcal{H}}$. Then, \eqref{eq:objective_primal_suot} can be written as 
    \begin{equation}
        \sUOT_c(\mu,\nu) = \inf_{\gamma\in\cM_+(\R^d\times\R^d)}\ G(\gamma) + F(A\gamma).
    \end{equation}

On one hand, $A$ is a linear operator since both the pushforward and the mean
embedding are linear. Moreover, since $k$ is bounded, $A$ is bounded for the total variation norm, hence continuous.
Indeed, writing
$
\kappa:=\sup_{y\in\R^d}\sqrt{k(y,y)}<+\infty,
$ then
\[
\|A\gamma\|
=
\|\pi^1_\#\gamma\|_{\mathrm{TV}}
+
\|m_{\pi^2_\#\gamma}\|_{\mathcal H}
\le
(1+\kappa)\|\gamma\|_{\mathrm{TV}}.
\]
Hence
$$
\|A\|_{\mathrm{op}}
:=
\sup_{\gamma\neq 0}
\frac{\|A\gamma\|}{\|\gamma\|_{\mathrm{TV}}}
\le 1+\kappa.$$

We can compute its adjoint $A^*:C_b(\R^d)\times \mathcal{H} \to C_b(\R^d \times \R^d)$ as follows. Let $f \in C_b(\R^d), g \in \cH$:
    
    \begin{equation}
\begin{aligned}
\langle (f,g),A\gamma\rangle
&=
\langle f,\pi^1_\#\gamma\rangle
+
\langle g,m_{\pi^2_\#\gamma}\rangle_{\mathcal H} \\
&=
\int f(x)\,\mathrm d(\pi^1_\#\gamma)(x)
+
\left\langle
g,
\int k(\cdot,y)\,\mathrm d(\pi^2_\#\gamma)(y)
\right\rangle_{\mathcal H} \\
&= \int f(x)\,\mathrm d\gamma(x,y) + \int g(y)\,\mathrm d\gamma(x,y) = \int \big(f(x)+g(y)\big)\,\mathrm d\gamma(x,y) 
\end{aligned}
    \end{equation}
    
hence $A^*(f, g)= f\oplus g$ where $f\oplus g(x,y)=f(x)+g(y)$ for all $x,y\in\R^d$. 

$G$ is lower semi-continuous and convex because linear, and its convex conjugate is, for $h\in C_b(\R^d\times \R^d)$,
    \begin{equation}
        \begin{aligned}
            G^*(h) &= \sup_{\gamma\in \cM_+(\R^d\times \R^d)}\ \int h(x,y)\ \mathrm{d}\gamma(x,y) - \int c(x,y)\ \mathrm{d}\gamma(x,y) \\
            &= \sup_{\gamma\in\cM_+(\R^d\times\R^d)}\ \int (h-c)\ \mathrm{d}\gamma(x,y) \\
            &= \iota_{\{h\le c\}}.
        \end{aligned}
    \end{equation}
    Moreover, $F(m_1,z)=F_1(m_1)+F_2(z)$ with $F_1(m_1)=\iota_{\{m_1=\mu\}}$ and $F_2(z) = \frac{\lambda_2}{2} \| m_\nu -z\|^2_{\mathcal{H}}$. The convex conjugate of $F_1$ is, for and $f\in C_b(\R^d)$,
    \begin{equation}
        \begin{aligned}
            F_1^*(f) = \sup_{m_1\in\cM_+(\R^d)}\ \int f\ \mathrm{d}m_1 - \iota_{\{m_1=\mu\}} = \int f\ \mathrm{d}\mu.
        \end{aligned}
    \end{equation}
Moreover, for \(g\in\mathcal H\), using the change of variable $u=z-m_\nu$, we obtain
\begin{equation} \label{eq:convex_cong_F2}
        \begin{aligned}
            F_2^*(g) &= \sup_{z\in\mathcal{H}}\ \langle g,z\rangle_\cH - \frac{\lambda_2}{2}\|m_\nu - z\|_{\cH}^2 \\
            &= \langle g,m_\nu \rangle_\cH + \sup_{u\in\mathcal{H}}\ \langle g, u\rangle_\cH - \frac{\lambda_2}{2}\|u\|_{\cH}^2        \end{aligned}
    \end{equation}
Observe that

\begin{equation}
\begin{aligned}
\langle g,u\rangle_{\mathcal H}
-
\frac{\lambda_2}{2}\|u\|_{\mathcal H}^2
&=
-\frac{\lambda_2}{2}
\left(
\|u\|_{\mathcal H}^2
-
\frac{2}{\lambda_2}\langle g,u\rangle_{\mathcal H}
\right) \\
&=
-\frac{\lambda_2}{2}
\left(
\left\|u-\frac{g}{\lambda_2}\right\|_{\mathcal H}^2
-
\frac{1}{\lambda_2^2}\|g\|_{\mathcal H}^2
\right) \\
&=
-\frac{\lambda_2}{2}
\left\|u-\frac{g}{\lambda_2}\right\|_{\mathcal H}^2
+
\frac{1}{2\lambda_2}\|g\|_{\mathcal H}^2.
\end{aligned}
\end{equation}
    Hence, the sup in \eqref{eq:convex_cong_F2} is attained for $u=\frac{1}{\lambda_2}g$, and 
    \begin{equation}
        F_2^*(g) =\langle g,m_\nu \rangle_\cH + \frac{1}{\lambda_2}\|g\|_\cH^2 -\frac{1}{2\lambda_2}\|g\|_{\cH}^2 = \int g\ \mathrm{d}\nu+ \frac{1}{2\lambda_2}\|g\|_\cH^2 .
    \end{equation}

For any $\gamma\in \cP(\R^d\times \R^d)$ such that $\pi^1_\#\gamma=\mu$, $F$ is continuous at $A\gamma$ and $F(A\gamma)<+\infty$, and $G(\gamma)<+\infty$. Hence, applying the Fenchel--Rockafellar strong duality theorem,
namely Theorem 3 of \citet{rockafellar1967duality}, we get 
    \begin{equation}
        \begin{aligned}
            \sUOT_c(\mu,\nu) &= \sup_{(f,g)\in C_b(\R^d)\times \mathcal{H}}\ -F^*\big(-(f, g)\big) - G^*\big(A^*(f,g)\big) \\
            &= \sup_{(f,g)\in C_b(\R^d)\times \mathcal{H}}\ - F_1^*(-f) - F_2^*(-g) - \iota_{\{f\oplus g \le c\}} \\
            &= \sup_{\substack{(f,g)\in C_b(\R^d)\times \mathcal{H}\\ (f,g)\in\Phi_c}}\ \int f\ \mathrm{d}\mu + \int g\ \mathrm{d}\nu - \frac{1}{2\lambda_2}\|g\|_\cH^2.
        \end{aligned}
    \end{equation}

 Finally, using the c-transform and that  \((f,g)\in\Phi_c\), we obtain semi-relaxed UOT problem 
    \begin{equation}
        \sUOT_c(\mu,\nu)
        =
        \sup_{g\in\cH}
        \left\{
        \int g^c\,\mathrm d\mu
        + \int g\,\mathrm d\nu
        - \frac{1}{2\lambda_2}\|g\|_\cH^2
        \right\}.
        \qedhere
    \end{equation}
\end{proof}

\subsection{JKO with Squared MMD}
\label{appendix:JKOwithMMDsquared}

Taking inspiration from the derivation of scalable JKO \citep{choi2024scalable}, we start from the dual of the semi-relaxed UOT with squared MMD regularization derived in \Cref{prop:dual_jko_mmd2}:
\begin{equation} \label{eq:semi_dual_uot_mmd2}
    \sUOT_c(\mu,\nu) = \sup_{g\in\cH}\ \int g^c\ \mathrm{d}\mu + \int g\ \mathrm{d}\nu - \tfrac{1}{2\lambda_2}\|g\|_\cH^2.
\end{equation}
Then, we parametrize $g^c$ by a measurable map $\T$ as $g^c(x) = \inf_\T\ c\big(x,\T(x)\big) - g\big(\T(x)\big)$, where in the following we will take $c(x,y)=\|x-y\|_2^2$. Plugging this into \eqref{eq:semi_dual_uot_mmd2}, we obtain the problem
\begin{equation}
    \sup_{g\in\cH} \inf_{\T}\ \int \big(\|x-\T(x)\|_2^2 - g\big(\T(x)\big)\big)\ \mathrm{d}\mu(x) + \int g\ \mathrm{d}\nu - \tfrac{1}{2\lambda_2}\|g\|_\cH^2.
\end{equation}
Doing the change of variable $g=\lambda_2 u$, we obtain
\begin{equation}
    \sup_{u\in\cH} \inf_{\T}\ \int \big(\|x-\T(x)\|_2^2 - \lambda_2 u\big(\T(x)\big)\big)\ \mathrm{d}\mu(x) + \int \lambda_2 u\ \mathrm{d}\nu - \tfrac{\lambda_2}{2}\|u\|_\cH^2.
\end{equation}
Then factorizing by $\lambda_2$, this is equivalent to
\begin{equation}
    \sup_{u\in\cH}\inf_{\T}\ \int \big(\tfrac{1}{\lambda_2}\|x-\T(x)\|_2^2 - u\big(\T(x)\big)\big)\ \mathrm{d}\mu(x) + \int u\ \mathrm{d}\nu - \tfrac{1}{2}\|u\|_\cH^2. 
\end{equation}
In practice, we alternate between the optimization of $\T$ and $u$. $\T$ is parametrized as a neural network, while the optimal $u$ is known in closed-form. Indeed, for any $\T$,
\begin{equation}
    \begin{aligned}
        \sup_{u\in\cH}\ \int \big( \tfrac{1}{\lambda_2}\|x-\T(x)\|_2^2 - u\big(\T(x)\big)\big)\ \mathrm{d}\mu(x) + \int u\ \mathrm{d}\nu - \tfrac{1}{2}\|u\|_\cH^2 &= \mathrm{cst} + \sup_{u\in\cH}\ \int u\ \mathrm{d}(\nu-\T_\#\mu) - \tfrac{1}{2}\|u\|_\cH^2 \\
        &= \mathrm{cst} + \frac{1}{2}\MMD^2_k(\T_\#\mu,\nu),
    \end{aligned}
\end{equation}
where we used the variational formulation of the squared MMD provided in \Cref{lemma:var_form_mmd2}, whose maximum is obtained for $u^* = \int k(x,\cdot)\ \mathrm{d}(\nu -\T_\#\mu)(x)$. Thus, to solve a step of the JKO we alternate between the update
\begin{equation}
\label{eq:witness_function_appendix}
    u^* = \int k(x,\cdot)\ \mathrm{d}(\nu -\T_\#\mu)(x),
\end{equation}
and a gradient descent step over the parameters of $\T_\theta$ as
\begin{equation}
    \theta_{k+1} = \theta_k - \eta \nabla_\theta J(\theta_k) \quad \text{where} \quad J(\theta) 
=
\int \left(\tfrac{1}{\lambda_2}\|x-\T_\theta(x)\|_2^2
- u^*\big(\T_\theta(x)\big)\right)\, \mathrm{d}\mu(x),
\end{equation}
with $\lambda_2=2\tau$ as detailed in \Cref{alg:primal_dual_mmd_samples}.

\subsection{JKO MMD GANs} 
\label{appendix:jko_mmd_gan}

As shown in Section~\ref{sec:jko_fdiv}, a single step of the JKO scheme associated with a functional $\mathcal{F}$ can be written as a semi-dual unbalanced optimal transport problem. 
In particular, when $\mathcal{F} = \MMD_k^2$ and the transport map is parametrized by a neural network $\T_\theta$, the $\iter$-th JKO step amounts to solving the following problem:

\begin{equation}
    \label{eq:minmaxpbMMDsquared}
    \sup_{u\in\cH}\inf_{\theta}\ 
    \int \left(\tfrac{1}{\lambda_2}\|x-\T_\theta(x)\|_2^2 - u\big(\T_\theta(x)\big)\right)\, \mathrm{d}\mu_\iter(x)
    + \int u\, \mathrm{d}\nu
    - \tfrac{1}{2}\|u\|_\cH^2.
\end{equation}

As discussed in Appendix~\ref{appendix:JKOwithMMDsquared}, the inner maximization over $u$ admits a closed-form solution. Indeed, for any fixed transport map $\T_\theta$, the optimal function $u^*$ is given by the witness function
\[
u^* = \int k(x,\cdot)\, \mathrm{d}(\nu - \T_\theta\#\mu_\iter)(x),
\]
and the maximized value coincides with $\tfrac12 \MMD_k^2(\T_\theta\#\mu_\iter,\nu)$.
As a result, solving a JKO step reduces to alternating between updating the transport map $\T_\theta$ and evaluating the corresponding witness function, as detailed in Algorithm~\ref{alg:primal_dual_mmd_samples}.

\paragraph{Learning the embedding in the kernel.}
In high-dimensional settings such as image generation, using a fixed kernel $k$ often leads to poor performance. Static kernels may fail to capture the evolving geometry of the transported distribution $\T_\theta\#\mu_\iter$, and MMD-based flows are known to suffer from both statistical and computational limitations in high dimensions.

To address these issues, we consider a parametrized kernel of the form
\[
k_\phi(x,y) = k\big(h_\phi(x),h_\phi(y)\big),
\]
where $h$ is a neural network with parameters $\phi$ embedding the data into a lower-dimensional latent space. This modification leads to a joint optimization problem over the transport map $\T_\theta$ and the kernel parameters $\phi$, yielding the following extended formulation:
\begin{equation}
    \max_\phi \sup_{u_\phi \in \cH_\phi} \inf_{\theta}\ 
    \int \left(\tfrac{1}{\lambda_2}\|x-\T_\theta(x)\|_2^2 - u_\phi\big(\T_\theta(x)\big)\right)\, \mathrm{d}\mu_\iter(x)
    + \int u_\phi\, \mathrm{d}\nu
    - \tfrac{1}{2}\|u_\phi\|_{\cH_\phi}^2.
\end{equation}

This problem naturally leads to an alternating optimization scheme. For fixed $\phi$, the transport map is updated by minimizing
\begin{equation}
    \theta^* \in \argmin_{\theta}
    \int \left(\tfrac{1}{\lambda_2}\|x-\T_\theta(x)\|_2^2 - u_\phi^*\big(\T_\theta(x)\big)\right)\, \mathrm{d}\mu_\iter(x),
\end{equation}
where $u_\phi^*$ denotes the witness function associated with the kernel $k_\phi$. Conversely, for fixed $\theta$, using the variational formulation of the squared MMD given in \Cref{lemma:var_form_mmd2}, the optimization with respect to $\phi$ reduces to
\begin{equation}
    \phi^* \in \argmax_\phi\ \tfrac12 \MMD^2_{k_\phi}(\T_\theta\#\mu_\iter,\nu),
\end{equation}
which encourages the kernel to maximally discriminate between the transported samples and the target distribution.

\paragraph{Connection with MMD GANs.}
This alternating scheme admits two complementary interpretations. On the one hand, it can be seen as learning a latent representation $\phi$ in which the JKO flow is carried out in a lower-dimensional feature space, reducing the complexity of the transport dynamics. On the other hand, it can be interpreted as an adversarial procedure, where the learned kernel $k_\phi$ plays the role of an adaptive discriminator. In this view, the update of $\T_\theta$ coincides with a gradient step minimizing a regularized squared MMD objective. Indeed, the gradient of the witness function term
\[
-\nabla_\theta \int u_\phi^*\big(\T_\theta(x)\big)\, \mathrm{d}\mu_\iter(x)
\]
is equal to $\nabla_\theta \MMD^2_{k_\phi}(\T_\theta\#\mu_\iter,\nu)$. Consequently, the resulting update rules are equivalent to those used in MMD GANs~\citep{li2017mmd,bińkowski2018demystifying}, augmented with a Wasserstein proximal regularization on the generator. As discussed in Section~\ref{section:gans}, this framework can be interpreted as a Wasserstein proximal formulation of MMD GANs~\citep{lin2021wasserstein}.

The full training procedure is detailed in Algorithm \ref{alg:mmd_wpgan}.

\begin{algorithm}[tb]
\caption{Primal--Dual Training Procedure of $\MMD^2$ with learned embedded kernel}
\label{alg:mmd_wpgan}
\begin{algorithmic}
\STATE \textbf{Inputs}: $K$ JKO steps, $N$ inner steps, $k$ a p.s.d. kernel, steps sizes $\eta_h$ and $\eta_\T$
\STATE Initialize $\T_{\theta_{0}^N} = I_d$
\FOR{$\iter = 1, \ldots, K$}
    \FOR{$i = 1, \ldots, N$}

        \STATE Sample $(z_j)_{j=1}^n \sim \mu_0$, $(y_j)_{j=1}^m \sim \nu$
        \STATE$\phi_\iter^{i+1} =\phi_\iter^{i} + \eta_h\nabla_\phi \MMD_{k_\phi}\!\big(\{\T_{\theta_\iter^i}(z_j)\}_{j=1}^n,\{y_j\}_{j=1}^m\big)$
        \STATE Sample $(z_j)_{j=1}^n \sim \mu_0$, $(y_j)_{j=1}^m \sim \nu$
        \STATE Compute $x_j = \T_{\theta_\iter^i}(z_j)$ for $j=1,\ldots,n$
        \STATE Define $g=
        \frac{1}{n}\sum_{j=1}^n k_{\phi_\iter^{i+1}}(x_j,\cdot)-\frac{1}{m}\sum_{j=1}^m k_{\phi_\iter^{i+1}}(y_j,\cdot)$
        \STATE $J(\theta_\iter^i) = \frac{1}{n}\sum_{j=1}^n \left( \frac{1}{2\tau} \big\| \T_{\theta_{\iter-1}^N}(z_j)- x_j\big\|_2^2-g(x_j)\right)$ 
        \STATE Update $\theta_\iter^{i+1}=\theta_\iter^{i} -\eta_\T \nabla_\theta J(\theta_\iter^i).$
     
    \ENDFOR
    \STATE $\theta_{\iter+1}^{1}=\theta_{\iter}^{N}$
    \STATE $\phi_{\iter+1}^1 = \phi_{\iter}^N$
\ENDFOR
\end{algorithmic}
\end{algorithm}

\section{Summary of the GWF Framework}
\label{appendix:schemaUnifying}

We summarize in \Cref{fig:unifying_schema} the unifying results of our paper, showing the methods discussed in the paper which belong to the GWF framework.

\begin{figure}[t]
\centering
\begin{tikzpicture}[
    >=Stealth,
    every node/.style={align=center},
    framework/.style={
        draw,
        rounded corners,
        very thick,
        fill=blue!5,
        minimum width=6.2cm,
        minimum height=1.5cm,
        inner sep=6pt
    },
    family/.style={
        draw,
        rounded corners,
        thick,
        fill=gray!10,
        minimum width=4.2cm,
        minimum height=1.5cm,
        inner sep=6pt
    },
    method/.style={
        draw,
        rounded corners,
        thick,
        fill=white,
        minimum width=4.0cm,
        minimum height=0.9cm,
        inner sep=4pt
    },
    note/.style={
        draw,
        rounded corners,
        thick,
        fill=yellow!10,
        text width=13.5cm,
        inner sep=7pt
    },
    arrow/.style={thick,->},
    eqarrow/.style={thick,<->}
]

\node[framework] (gwf) at (0,0) {%
    {\bf Generative Wasserstein Flows (GWF)}\\[0.2em]
    $\displaystyle
    \mu_{k+1}\approx
    \argmin_{\mu}\Big\{
    \cF(\mu)+\frac{1}{2\tau}\W_2^2(\mu,\mu_k)
    \Big\} \quad \text{(JKO)}$
};

\node[family] (fdiv) at (-6,-3.0) {%
    {\bf $f$-divergences}\\[0.2em]
    $\displaystyle \cF(\mu)=D_f(\mu\|\nu)$\\
    or\\
    $\displaystyle \cF(\mu)=D_f(\nu\|\mu)$
};

\node[family] (ipm) at (0,-3.0) {%
    {\bf Integral Probability Metrics}\\[0.2em]
    $\displaystyle \cF(\mu)=\mathrm{IPM}_{\mathcal G}(\mu,\nu)$
};

\node[family] (mmd) at (6,-3.0) {%
    {\bf Squared MMD}\\[0.2em]
    $\displaystyle \cF(\mu)=\tfrac12\mathrm{MMD}_k^2(\mu,\nu)$
};

\draw[arrow] (gwf.south west) -- (fdiv.north);
\draw[arrow] (gwf.south) -- (ipm.north);
\draw[arrow] (gwf.south east) -- (mmd.north);

\node[method, minimum width=3.5cm, minimum height=3.6cm] (fblock) at (-6,-6.6) {%
\begin{tabular}{c}
VWGF \citep{fan2022variational} + reparam.\ trick\\[0.3em]
$\Updownarrow$\\[0.3em]
S-JKO \citep{choi2024scalable}\\[0.3em]
$\Updownarrow$\\[0.3em]
$f$-GAN \citep{nowozin2016f} + RWP
\end{tabular}
};

\node[method, minimum width=3.6cm] (baptista) at (-6,-9.5) {\citet{baptista2025proximal} scheme};

\draw[arrow] (fdiv.south) -- (fblock.north)
node[midway, right=0pt, font=\scriptsize] {Variational form or duality of sUOT};
\draw[arrow] (fblock.south) -- (baptista.north)
    node[midway, right=2pt, font=\scriptsize] {KL with DV formulation};
\node[method] (ipmgan) at (0,-5.2) {GWF for IPMs};
\node[method] (wgan) at (0,-7.0) {WGAN \citep{arjovsky2017wasserstein} \\ + RWP};

\draw[arrow] (ipm.south) -- (ipmgan.north) node[midway, right=0pt, font=\scriptsize] {Dual of sUOT with IPMs};
\draw[arrow] (ipmgan.south) -- (wgan.north)node[midway, right=0pt, font=\scriptsize] {$\W_1$ case};

\node[method, minimum width=4.2cm, minimum height=2.1cm] (mmdfixed) at (6,-5.5) {%
\begin{tabular}{c}
\cref{alg:primal_dual_mmd_samples}\\[0.3em]
$\Updownarrow$\\[0.3em]
GMM \cite{li2015generative}  + RWP
\end{tabular}
};

\node[method, minimum width=4.2cm, minimum height=2.1cm] (mmdblock) at (6,-8.8) {%
\begin{tabular}{c}
\cref{alg:mmd_wpgan}\\[0.3em]
$\Updownarrow$\\[0.3em]
MMD GAN \citep{li2017mmd} \\ + RWP
\end{tabular}
};

\draw[arrow] (mmd.south) -- (mmdfixed.north)  node[midway, right=6pt, font=\scriptsize] {Dual of sUOT with $\mathrm{MMD}^2$};
\draw[arrow] (mmdfixed.south) -- (mmdblock.north) node[midway, right=6pt, font=\scriptsize] {+ learned kernel};

\node[
    align=center,
    font=\small,
    text width=15cm
] (rwpnote) at (-2,-10.4) {%
{\it RWP} stands for {\it Relaxed Wasserstein Proximal}, a regularization technique introduced in \citet{lin2021wasserstein}.
};

\node[note] (orient) at (0,-12.4) {%
{\bf Note on the orientation of $f$-divergences.}
Classical $f$-GAN is typically written for
$\displaystyle F(\mu)=D_f(\nu\|\mu)$,
whereas VWGF and S-JKO are primarily presented in our paper for
$\displaystyle F(\mu)=D_f(\mu\|\nu)$.
When instantiated with the {\it same} objective functional $F$, the schemes coincide.
When comparing opposite orientations, the correspondence is exact for symmetric divergences
(e.g.\ Jensen--Shannon), while forward KL in one formulation corresponds to reverse KL in the other.
This ambiguity does not arise for IPMs and $\MMD^2$, which are symmetric.
};

\end{tikzpicture}
\caption{
Schematic view of the unification results in our paper.
All considered methods fit into the same Generative Wasserstein Flow (GWF) framework,
obtained from parametric JKO steps with different choices of objective functional $F$.
}
\label{fig:unifying_schema}
\end{figure}

\clearpage

\section{Proofs}
\label{Appendix:proofs}

\subsection{Proof of \Cref{prop:VWGF_equals_SJKO_refs}}
\label{Appendix:proofProp1}

Assume that $\nu,\mu\in\cPa$. The original objective is given by
\begin{equation}
    \mathcal{L}(\T,h) = \int \big(\tfrac{1}{2\tau}\|\T-\id\|_2^2 + h\circ \T\big)\ \mathrm{d}\mu - \int f^*\circ h\ \mathrm{d}\nu,
\end{equation}
where we use the change of variable $h\mapsto -h$ for \eqref{eq:SJKO2}. 

We can reparametrize $\T$ with a coupling $\gamma\in\cP_2(\R^d\times \R^d)$ such that $\pi^1_\#\gamma=\mu$. Hence, let us focus first on
\begin{equation}
    \Tilde{\mathcal{L}}(\gamma, h) = \int \big(\tfrac{1}{2\tau}\|y-x\|_2^2 + h(y)\big)\ \mathrm{d}\gamma(x,y) - \int f^*\big(h(y)\big)\ \mathrm{d}\nu(y).
\end{equation}
$\Tilde{\mathcal{L}}$ is convex in $\gamma$ (as linear) and concave in $h$ (as $f^*$ is convex), hence applying the minimax theorem (see \emph{e.g.} \citep[Theorem 2.4]{liero2018optimal}), we get that 
\begin{equation} \label{eq:inf_sup}
    \inf_{\gamma\in\cP_2(\R^d\times \R^d),\ \pi^1_\#\gamma=\mu} \ \sup_{h} \ \Tilde{\mathcal{L}}(\gamma, h) = \sup_h \ \inf_{\gamma\in\cP_2(\R^d\times \R^d),\ \pi^1_\#\gamma=\mu}\ \Tilde{\mathcal{L}}(\gamma, h).
\end{equation}
The left-hand side of \eqref{eq:inf_sup} implies that the optimal $\gamma$ is necessary of the form $\gamma=(\id,\T)_\#\mu$ with $\T$ an optimal transport map between $\mu$ and $\pi^2_\#\gamma$ for the cost $c(x,y)=\tfrac{1}{2\tau}\|x-y\|_2^2$. Indeed, the left-hand side is of the form
\begin{equation}
    \inf_{\gamma\in\cP_2(\R^d\times \R^d),\ \pi^1_\#\gamma=\mu} \ \sup_h\ \Tilde{\mathcal{L}}(\gamma, h) = \inf_{\gamma\in\cP_2(\R^d\times \R^d),\ \pi^1_\#\gamma=\mu}\ \int \tfrac{1}{2\tau}\|x-y\|_2^2\ \mathrm{d}\gamma(x,y) + \Df(\pi^2_\#\gamma||\nu),
\end{equation}
and $\mu\in\cPa$, thus by \citep[Proposition 3.1]{eyring2024unbalancedness}, the optimal plan $\gamma$ is given by an OT map between $\mu$ and $\pi^2_\#\gamma$. Hence, we have
\begin{equation}
    \inf_\T\ \sup_h\ \mathcal{L}(\T,h) = \inf_{\gamma\in\cP_2(\R^d\times \R^d), \pi^1_\#\gamma=\mu}\ \sup_h\ \Tilde{\mathcal{L}}(\gamma, h) = \sup_h \ \inf_{\gamma\in\cP_2(\R^d\times \R^d),\ \pi^1_\#\gamma=\mu}\ \Tilde{\mathcal{L}}(\gamma, h).
\end{equation}

For the other side, note that we always have on one hand
\begin{equation} \label{eq:ineq1}
    \sup_h \ \inf_\T\ \mathcal{L}(\T,h) \le \inf_\T\ \sup_h\ \mathcal{L}(\T,h) = \sup_h \ \inf_{\gamma\in\cP_2(\R^d\times \R^d),\ \pi^1_\#\gamma=\mu}\ \Tilde{\mathcal{L}}(\gamma, h).
\end{equation}
On the other hand, $\{(\id,\T)_\#\mu,\ \T:\R^d\to\R^d\ \text{measurable}\}$ only gives a subclass of couplings with first marginal $\mu$, hence
\begin{equation}
    \inf_{\gamma\in\cP_2(\R^d\times \R^d),\ \pi^1_\#\gamma=\mu}\ \Tilde{\mathcal{L}}(\gamma, h) \le \inf_\T\ \mathcal{L}(\T, h),
\end{equation}
and thus 
\begin{equation} \label{eq:ineq2}
    \sup_h\ \inf_{\gamma\in\cP_2(\R^d\times \R^d),\ \pi^1_\#\gamma=\mu}\ \Tilde{\mathcal{L}}(\gamma, h) \le \sup_h\ \inf_\T\ \mathcal{L}(\T, h).
\end{equation}
Therefore, we can conclude using \eqref{eq:ineq1} and \eqref{eq:ineq2} that
\begin{equation}
    \sup_h\ \inf_{\gamma\in\cP_2(\R^d\times \R^d),\ \pi^1_\#\gamma=\mu}\ \Tilde{\mathcal{L}}(\gamma, h) = \sup_h\ \inf_\T\ \mathcal{L}(\T, h),
\end{equation}
and thus
\begin{equation}
    \sup_h\ \inf_\T\ \mathcal{L}(\T,h) = \inf_\T\ \sup_h\ \mathcal{L}(\T, h).
\end{equation}

\subsection{Proof of \Cref{thm:EquivalenceForwardBackward}} \label{Appendix:Forward=Backward}

Let $\nu\in\cPa$, recall that \citet{baptista2025proximal} introduced the proximal OT divergence as, for all $\mu\in\cP_2(\R^d)$, 
\begin{equation} \label{eq:prox_ot_div}
    \cF_\varepsilon(\mu) = \inf_{\eta\in\cP_2(\R^d)}\ \W_2^2(\mu,\eta)+ \varepsilon\D_f(\eta||\nu).
\end{equation}
We claim in \Cref{thm:EquivalenceForwardBackward} that the forward discretization of \eqref{eq:prox_ot_div} coincides with the JKO scheme of $\cF(\mu)=\Df(\mu||\nu)$ for $\varepsilon=2\tau$. We detail this equivalence below, by first describing the JKO update of $\cF$, and then the forward update of $\cF_\varepsilon$.

\paragraph{Minimization of $\Df$ via JKO scheme.}

We recall that the JKO scheme to minimize $\cF(\mu)=\Df(\mu||\nu)$ is, for all $\iter\ge 0$,
\begin{equation}
    \mu_{\iter+1} = \argmin_{\mu\in\cPR}\ \frac{1}{2\tau} \W_2^2(\mu_\iter, \mu) + \Df(\mu||\nu),
\end{equation}
As $\nu\in\cPa$, $\Df(\mu||\nu)<\infty$ if and only if $\mu\in\cPa$. Thus, leveraging Brenier's theorem, the JKO scheme is equivalent to
\begin{equation} \label{eq:jko_kl}
    \begin{cases}
        \T_{\iter+1}= \argmin_{\T\in L^2(\mu_\iter)}\ \frac{1}{2\tau} \int \|x-\T(x)\|_2^2\ \mathrm{d}\mu_\iter(x) + \Df(\T_\#\mu_\ell||\nu) \\
        \mu_{\iter+1} = (\T_{\iter+1})_\#\mu_\ell.
    \end{cases}
\end{equation}

The following lemma states that $\T_{\iter+1}$ is necessarily the OT map between $\mu_\iter$ and $\mu_{\iter+1}$.

\begin{lemma} \label{lemma:jko_df}
    $\T_{\iter+1}$ in \eqref{eq:jko_kl} is the optimal transport map between $\mu_\iter$ and 
    \begin{equation}
        \mu_{\iter+1} = \argmin_\eta\ \W_2^2(\mu_\iter, \eta) + 2\tau \Df(\eta ||\nu).
    \end{equation}
\end{lemma}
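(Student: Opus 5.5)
The argument compares the map problem \eqref{eq:jko_kl} with the measure problem $\min_\eta \W_2^2(\mu_\iter,\eta)+2\tau\Df(\eta\|\nu)$. The latter is exactly $2\tau$ times the JKO objective, so both have the same minimizers. Write $J(\T)=\frac{1}{2\tau}\|\T-\id\|_{L^2(\mu_\iter)}^2+\Df(\T_\#\mu_\iter\|\nu)$. For any measurable $\T$ we have $\W_2^2(\mu_\iter,\T_\#\mu_\iter)\le\|\T-\id\|_{L^2(\mu_\iter)}^2$, because $(\id,\T)_\#\mu_\iter$ is a coupling of $\mu_\iter$ and $\T_\#\mu_\iter$. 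Hence
\[
J(\T)\ \ge\ \tfrac{1}{2\tau}\W_2^2(\mu_\iter,\T_\#\mu_\iter)+\Df(\T_\#\mu_\iter\|\nu)\ \ge\ \inf_\eta \tfrac{1}{2\tau}\W_2^2(\mu_\iter,\eta)+\Df(\eta\|\nu).
\]
Conversely, let $\eta$ be a competitor with finite value. Since $\mu_\iter\in\cPa$ (as noted, because $\Df(\cdot\|\nu)<\infty$ forces absolute continuity and $\nu\in\cPa$), Brenier's theorem gives an OT map $\T^\eta$ with $\T^\eta_\#\mu_\iter=\eta$ and $\|\T^\eta-\id\|^2_{L^2(\mu_\iter)}=\W_2^2(\mu_\iter,\eta)$. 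Thus $J(\T^\eta)$ equals the measure objective at $\eta$, and the two infima coincide.

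Now take a minimizer $\T_{\iter+1}$ of $J$ and set $\mu_{\iter+1}=(\T_{\iter+1})_\#\mu_\iter$. From the chain above, $J(\T_{\iter+1})=\inf J$ equals the infimum of the measure problem. Both inequalities in the chain must therefore be equalities. The second equality says $\mu_{\iter+1}$ is a minimizer of $\W_2^2(\mu_\iter,\cdot)+2\tau\Df(\cdot\|\nu)$, which is the claimed characterization of $\mu_{\iter+1}$. The first equality says $\|\T_{\iter+1}-\id\|^2_{L^2(\mu_\iter)}=\W_2^2(\mu_\iter,\mu_{\iter+1})$, so the plan $(\id,\T_{\iter+1})_\#\mu_\iter$ is optimal between $\mu_\iter$ and $\mu_{\iter+1}$.

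The main obstacle is the last step: turning optimality of the plan into the statement that $\T_{\iter+1}$ \emph{is} the OT map. I will use the uniqueness part of Brenier's theorem: since $\mu_\iter\in\cPa$, the optimal plan is unique and equals $(\id,\nabla\varphi)_\#\mu_\iter$. Therefore $\T_{\iter+1}=\nabla\varphi$ $\mu_\iter$-a.e., so $\T_{\iter+1}$ is the gradient of a convex function.

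If uniqueness of $\mu_{\iter+1}$ as the $\argmin$ is also needed, it will follow from strict convexity of $\Df(\cdot\|\nu)$ along linear interpolations when $f$ is strictly convex, combined with convexity of $\W_2^2(\mu_\iter,\cdot)$ under linear interpolation. Otherwise the statement should be read with $\argmin$ as set membership. Existence of minimizers is assumed, as in the paper.
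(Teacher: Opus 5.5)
Your proof is correct and rests on the same idea as the paper's. The divergence term depends only on the pushforward $\T_\#\mu_\iter$, so a minimizer must attain the minimal transport cost $\W_2^2(\mu_\iter,\mu_{\iter+1})$ among maps with that image, and Brenier's theorem then identifies it with the OT map. The paper argues by contradiction: it replaces $\T_{\iter+1}$ by a cheaper map $\tilde\T$ with the same pushforward. Your sandwich argument instead makes explicit two points the paper leaves implicit: that $\mu_{\iter+1}$ actually solves the measure-level problem, and that uniqueness in Brenier's theorem is what upgrades ``optimal plan'' to ``the OT map''.
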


\begin{proof}
    Suppose $\T_{\iter+1}$ is not the OT map, then there exists $\Tilde{\T}$ such that  $
    \| \id - \Tilde{\T} \|_{L^2(\mu_\iter)}^2 < \| \id - \T_{\iter+1} \|_{L^2(\mu_\iter)}^2$ and $\Tilde{\T}_\# \mu_\iter = \mu_{\iter+1}$. Then we would have:
    \begin{equation}
    \| \id - \Tilde{\T} \|_{L^2(\mu_\iter)}^2 + 2\tau \Df(\Tilde{\T}_\# \mu_\iter || \nu) < \| \id - \T_{\iter+1} \|_{L^2(\mu_\iter)}^2 + 2\tau \Df(\T_{\iter+1 \, \#} \mu_\iter || \nu),
    \end{equation}
    contradicting the optimality of $\T_{\iter+1}$, since $\T_{\iter+1 \, \#} \mu_\iter = \mu_{\iter+1}$.
\end{proof}

We can now apply the second part of Theorem 1.17 of \cite{santambrogio2015optimal} with the cost $c_\tau(x,y)=\frac{1}{2\tau}\|x-y\|_2^2$. Since:
\begin{equation}
h(z) = \frac{1}{2\tau} \|z\|^2 \Rightarrow \nabla h(z) = \frac{1}{\tau} z \Rightarrow (\nabla h)^{-1}(u) = \tau u,
\end{equation}
we obtain:
\begin{equation}
    \T_{\iter+1}(x) = x - (\nabla h)^{-1} \big(\nabla \phi_{\mu_\iter, \mu_{\iter+1}}^{c_\tau}(x)\big) = x - \tau \nabla \phi_{\mu_\iter, \mu_{\iter+1}}^{c_\tau}(x),
\end{equation}
where $\phi_{\mu_\iter, \mu_{\iter+1}}^{c_\tau}$ is the Kantorovich potential between $\mu_\iter$ and $\mu_{\iter+1}$ for $c_\tau$. Therefore, the JKO update becomes:
\begin{equation}
\label{eq:updateJKOwgrad}
    \mu_{\iter+1} = \T_{\iter+1 \, \#} \mu_\iter = (\id - \tau \nabla \phi_{\mu_\iter, \mu_{\iter+1}}^{c_\tau})_\# \mu_\iter.
\end{equation}

We now show that the Wasserstein Gradient Descent scheme of \eqref{eq:prox_ot_div} coincides with this update.

\paragraph{Wasserstein Gradient Descent of ProxOT \eqref{eq:prox_ot_div}.}

A Wasserstein gradient descent step for \eqref{eq:prox_ot_div} is, for all $\iter\ge 0$, $\gamma>0$,
\begin{equation}
    \mu_{\iter+1} = \big(\id - \gamma \gW\cF_\varepsilon(\mu_\iter)\big)_\#\mu_\iter.
\end{equation}
Let us first compute the Wasserstein gradient of \eqref{eq:prox_ot_div}.

\begin{lemma}
\label{lemma:WgradofLiero}
    Let $\mu,\nu\in\cPa$. Let us define 
    \begin{equation} \label{eq:argmin_suot}
        \eta_{\mu,\nu}^\star = \argmin_{\eta\in\cPR}\ \W_2^2(\eta,\mu) + \varepsilon \Df(\eta||\nu).
    \end{equation}
    Then, the Wasserstein gradient of $\cF_\varepsilon$ is given by $\gW\cF_\varepsilon(\mu)=\varepsilon\nabla\phi_{\mu, \eta^\star}^{c_\varepsilon}$, with $\phi_{\mu,\eta^\star}$ the Kantorovich potential between $\mu$ and $\eta_{\mu,\nu}^\star$ for $c_\varepsilon(x,y)=\tfrac{1}{\varepsilon}\|x-y\|_2^2$.
\end{lemma}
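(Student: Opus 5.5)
The plan is an envelope (Danskin-type) argument in Wasserstein space. Since $\cF_\varepsilon(\mu)=\inf_\eta \W_2^2(\mu,\eta)+\varepsilon\Df(\eta\|\nu)$ and the inner minimizer $\eta^\star=\eta^\star_{\mu,\nu}$ exists, we expect
\[
\gW\cF_\varepsilon(\mu)=\gW\big[\W_2^2(\cdot,\eta^\star)\big](\mu),
\]
which we then identify in terms of the Kantorovich potential. First rescale the cost: with $c_\varepsilon(x,y)=\tfrac1\varepsilon\|x-y\|_2^2$ we have $\W_2^2(\mu,\eta)=\varepsilon\,\mathrm{OT}_{c_\varepsilon}(\mu,\eta)$. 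The semi-dual then gives
\[
\W_2^2(\mu,\eta)=\varepsilon\sup_{\phi}\Big\{\int\phi^{c_\varepsilon}\,\mathrm{d}\mu+\int\phi\,\mathrm{d}\eta\Big\}.
\]
The first variation of $\mu\mapsto \W_2^2(\mu,\eta^\star)$ is therefore $\varepsilon\phi^{c_\varepsilon}_{\mu,\eta^\star}$, and its Wasserstein gradient is $\varepsilon\nabla\phi^{c_\varepsilon}_{\mu,\eta^\star}$. This is consistent with the formula $\T=\id-\tfrac{\varepsilon}{2}\nabla\phi^{c_\varepsilon}$ obtained from Theorem 1.17 of Santambrogio with $h(z)=\|z\|^2/\varepsilon$, which gives $\gW\W_2^2(\cdot,\eta^\star)(\mu)=2(\id-\T)$.

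To make the envelope step rigorous, I would prove a first-order expansion in the sense of the Wasserstein gradient definition recalled above. Fix $\mu'\in\cPR$ and an optimal coupling $\gamma\in\Pi_o(\mu,\mu')$.

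\emph{Upper bound.} Using $\eta^\star$ as a competitor at $\mu'$ gives
\[
\cF_\varepsilon(\mu')-\cF_\varepsilon(\mu)\le \W_2^2(\mu',\eta^\star)-\W_2^2(\mu,\eta^\star).
\]
The right-hand side is bounded above by $\int\langle 2(x-\T(x)),y-x\rangle\,\mathrm{d}\gamma+\W_2^2(\mu,\mu')$. This follows by gluing $\gamma$ with $(\id,\T)_\#\mu$ and expanding the square, which is the standard superdifferentiability of $\W_2^2(\cdot,\eta^\star)$ (Ambrosio--Gigli--Savaré, Thm.~10.2.2).

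\emph{Lower bound.} Let $\eta'^\star$ be the minimizer at $\mu'$. Then
\[
\cF_\varepsilon(\mu')-\cF_\varepsilon(\mu)\ge \W_2^2(\mu',\eta'^\star)-\W_2^2(\mu,\eta'^\star),
\]
and the dual formulation with the potential of $(\mu,\eta'^\star)$ evaluated at $\mu'$ gives a matching lower bound. To replace the potential of $\eta'^\star$ by that of $\eta^\star$, I need stability: $\eta'^\star\to\eta^\star$ as $\mu'\to\mu$, which follows from uniqueness of the minimizer (strict convexity of $\Df$ along linear interpolation, plus convexity of $\W_2^2(\mu,\cdot)$), and then convergence of the associated transport maps in $L^2(\mu)$.

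Combining both bounds yields
\[
\cF_\varepsilon(\mu')=\cF_\varepsilon(\mu)+\int\langle\varepsilon\nabla\phi^{c_\varepsilon}_{\mu,\eta^\star}(x),y-x\rangle\,\mathrm{d}\gamma+o\big(\W_2(\mu,\mu')\big).
\]
Since $\mu\in\cPa$, Brenier's theorem together with the Danskin computation from the OT background section shows that $\phi^{c_\varepsilon}$ is differentiable $\mu$-a.e., so $\varepsilon\nabla\phi^{c_\varepsilon}$ lies in $L^2(\mu)$ and in the tangent space $T_\mu\cPR$. This identifies it as the Wasserstein gradient.

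The main obstacle is the lower bound, specifically making the error $o(\W_2(\mu,\mu'))$ uniform when the inner minimizer moves. If the stability argument is too delicate, a lighter alternative is to compute only the first variation. Writing $\cF_\varepsilon(\mu)=\sup_\phi\{\int\varepsilon\phi^{c_\varepsilon}\,\mathrm{d}\mu-\varepsilon\int f^*(-\phi)\,\mathrm{d}\nu\}$, which is the sUOT semi-dual recalled above (the $\lambda_2$ scaling needs care here), Danskin's theorem on this supremum of functionals linear in $\mu$ gives first variation $\varepsilon\phi^{c_\varepsilon}$ at the optimal $\phi$. The optimal sUOT potential coincides with the Kantorovich potential between $\mu$ and $\eta^\star$ by Eyring et al., Prop.~3.1. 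Taking the gradient via $\gW=\nabla\cF'$ then gives the claim.
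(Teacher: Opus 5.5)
Your upper bound is the paper's: you freeze $\eta^\star$ as a competitor at the perturbed measure and glue with $(\id,\T)_\#\mu$, $\T=\id-\tfrac{\varepsilon}{2}\nabla\phi^{c_\varepsilon}$. The paper does exactly this with the coupling $(\id+s\xi,\T)_\#\mu$ for $\mu'=(\id+s\xi)_\#\mu$.

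\textbf{The gap is the lower bound.} Comparing with the minimizer $\eta'^\star$ at $\mu'$ forces you to control how $\eta'^\star$, its potential, and the $o(\W_2(\mu,\mu'))$ remainder depend on $\mu'$, and you leave this open. The missing observation is that no stability is needed. Weak duality for the sUOT semi-dual, which you write down in your fallback, gives $\cF_\varepsilon(\mu')\ge\varepsilon\big(\int\phi^{c_\varepsilon}\,\mathrm{d}\mu'-\int f^*(-\phi)\,\mathrm{d}\nu\big)$ for \emph{every} $\mu'$ and every $\phi$. The $\nu$-term does not depend on $\mu'$, and equality holds at $\mu'=\mu$ for the optimal potential $\phi=\phi_{\mu,\eta^\star}$. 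Subtracting gives a minorant with a fixed potential,
\[
\cF_\varepsilon(\mu')-\cF_\varepsilon(\mu)\ \ge\ \varepsilon\int\big(\phi^{c_\varepsilon}(y)-\phi^{c_\varepsilon}(x)\big)\,\mathrm{d}\gamma(x,y),
\]
valid for any coupling $\gamma$ of $\mu$ and $\mu'$. This is exactly the paper's lower bound. With $\mu'=(\id+s\xi)_\#\mu$, the $\mu$-a.e.\ differentiability of $\phi^{c_\varepsilon}$ (since $\mu\in\cPa$) and dominated convergence yield $\liminf_{s\to0}\frac{1}{s}\big(\cF_\varepsilon(\mu')-\cF_\varepsilon(\mu)\big)\ge\varepsilon\int\langle\nabla\phi^{c_\varepsilon},\xi\rangle\,\mathrm{d}\mu$, which matches your upper bound.

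Your fallback does not close the gap by itself. For a supremum of functionals linear in $\mu$, plugging in the maximizer gives only the inequality above. The reverse inequality is the half of Danskin's theorem that needs compactness or stability of the maximizers, which you do not have. You would then also need $\gW=\nabla\cF'$, which holds only under extra regularity.

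The actual proof combines the two halves you already have, carried out directly in the Wasserstein sense: the primal problem with $\eta^\star$ frozen gives the upper bound, and the dual problem with $\phi$ frozen gives the lower bound.

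One caution if you want the full definition (arbitrary $\mu'$ with optimal $\gamma$) rather than pushforwards by a fixed smooth $\xi$. Semiconcavity of $\phi^{c_\varepsilon}$ bounds $\phi^{c_\varepsilon}(y)-\phi^{c_\varepsilon}(x)-\langle\nabla\phi^{c_\varepsilon}(x),y-x\rangle$ from above, not from below. So turning the display into $\int\langle\varepsilon\nabla\phi^{c_\varepsilon}(x),y-x\rangle\,\mathrm{d}\gamma-o(\W_2(\mu,\mu'))$ needs an additional argument, e.g.\ expanding at $y$ with a supergradient and using $\mu$-a.e.\ continuity of $\nabla\phi^{c_\varepsilon}$. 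The paper avoids this by testing only perturbations of the form $(\id+s\xi)_\#\mu$.
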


\begin{proof}
    \eqref{eq:argmin_suot} is equivalent to \citep{liero2018optimal}
    \begin{equation} \label{eq:semi_uot_coupling}
        \begin{cases}
            \gamma^\star\in \argmin_{\gamma\in\cP_2(\R^d\times \R^d),\ \pi^1_\#\gamma=\mu}\ \int \tfrac{1}{\varepsilon}\|x-y\|_2^2\ \mathrm{d}\gamma(x,y) + \Df(\pi^2_\#\gamma || \nu) \\
            \eta_{\mu,\nu}^\star = \pi^2_\#\gamma^\star,
        \end{cases}
    \end{equation}
    with $\pi^2:(x,y)\mapsto y$. Moreover, $\gamma^\star$ is necessarily an optimal coupling between $\mu$ and $\pi^2_\#\gamma^\star$, as otherwise, we could find another coupling with the same second marginal, but with lower cost (same argument as \Cref{lemma:jko_df}). Let $c_\varepsilon(x,y)=\tfrac{1}{\varepsilon}\|x-y\|_2^2$, and denote by $\phi_{\mu,\eta}$ the optimal Kantorovich potential between $\mu$ and $\eta_{\mu,\nu}^\star$ for the cost $c_\varepsilon$, which satisfies for $\gamma^\star$-almost every $(x,y)$, $\phi_{\mu,\eta}(y)+\phi_{\mu,\eta}^{c_\varepsilon}(x) = c_\varepsilon(x,y)$. Hence, we have the equality
    \begin{equation}
        \begin{aligned}
            \cF_\varepsilon(\mu) &= \varepsilon \cdot \left(\int \tfrac{1}{\varepsilon}\|x-y\|_2^2\ \mathrm{d}\gamma^\star(x,y) + \Df(\pi^2_\#\gamma^\star || \nu)\right) \\
            &= \varepsilon \cdot \left( \int \phi_{\mu,\eta}^{c_\varepsilon}(x)\ \mathrm{d}\mu(x) - \int f^*\big( -\phi_{\mu,\eta}(y)\big)\ \mathrm{d}\nu(y)\right),
        \end{aligned}
    \end{equation}
    using the dual formulation and optimality conditions (see \eg~\citep[Proposition 3.1]{eyring2024unbalancedness}.

    Let $\xi:\R^d\to\R^d$ be a diffeomorphism, $s>0$. Using the dual formulation, we get
    \begin{equation}
        \begin{aligned}
            \cF_\varepsilon\big((\id+s\xi)_\#\mu\big) &\ge \varepsilon \left( \int \phi_{\mu,\eta}^{c_\varepsilon}(x)\ \mathrm{d}\big((\id + s\xi)_\#\mu\big)(x) - \int f^*\big(-\phi_{\mu,\eta}(y)\big)\ \mathrm{d}\nu(y) \right) \\
            &= \varepsilon \left( \int \phi_{\mu,\eta}^{c_\varepsilon}\big(x+s\xi(x)\big)\ \mathrm{d}\mu(x) -  \int f^*\big(-\phi_{\mu,\eta}(y)\big)\ \mathrm{d}\nu(y)\right).
        \end{aligned}
    \end{equation}
    Hence,
    \begin{equation}
        \frac{\cF_\varepsilon\big((\id+s\xi)_\#\mu\big) - \cF_\varepsilon(\mu)}{\varepsilon s} \ge \int \frac{\phi_{\mu,\eta}^{c_\varepsilon}\big(x+s\xi(x)\big) - \phi_{\mu,\eta}^{c_\varepsilon}(x)}{s}\ \mathrm{d}\mu(x).
    \end{equation}
    
    Taking the limit $s\to 0$ by Lebesgue's dominated convergence theorem,
    \begin{equation}
        \liminf_{s\to 0}\ \frac{\cF_\varepsilon\big((\id+s\xi)_\#\mu\big) - \cF_\varepsilon(\mu)}{\varepsilon s} \ge \int \langle \nabla \phi_{\mu,\eta}^{c_\varepsilon}(x), \xi(x)\rangle\ \mathrm{d}\mu(x).
    \end{equation}

    On the other hand, observe that $\T = \id-\tfrac{\varepsilon}{2}\nabla\phi_{\mu,\eta}^{c_\varepsilon}$ is an OT map between $\mu$ and $\eta_{\mu,\nu}^\star$ (by \citep[Theorem 1.17]{santambrogio2015optimal}. Hence $\gamma=(\id+s\xi, \id-\tfrac{\varepsilon}{2}\nabla\phi_{\mu,\eta}^{c_\varepsilon})_\#\mu \in \Pi\big((\id +s\xi)_\#\mu, \eta_{\mu,\nu}^\star\big)$. Thus,
    \begin{equation}
        \begin{aligned}      \cF_\varepsilon\big((\id+s\xi)_\#\mu\big) &\le \int \|x-y\|_2^2\ \mathrm{d}\gamma(x,y) + \varepsilon \Df(\pi^2_\#\gamma|| \nu) \\
            &= \int \|x + s\xi(x) - x + \tfrac{\varepsilon}{2}\nabla\phi_{\mu,\nu}^{c_\varepsilon}(x)\|_2^2\ \mathrm{d}\mu(x) + \varepsilon \Df(\eta_{\mu,\nu}^\star||\nu) \\
            &= \int \|x- (x-\tfrac{\varepsilon}{2}\nabla\phi_{\mu,\nu}^{c_\varepsilon}(x))\|_2^2\ \mathrm{d}\mu(x) 
            \\
            &\qquad+ \varepsilon\Df(\eta_{\mu,\nu}^\star||\nu) + s \varepsilon \int \langle \xi(x), \nabla\phi_{\mu,\nu}^{c_\varepsilon}(x)\rangle\ \mathrm{d}\mu(x) + s^2 \int \|\xi(x)\|_2^2\ \mathrm{d}\mu(x) \\
            &= \W_2^2(\mu, \eta_{\mu,\nu}^\star) + \varepsilon\Df(\eta_{\mu,\nu}^\star||\nu) + s \varepsilon \int \langle \xi(x), \nabla\phi_{\mu,\nu}^{c_\varepsilon}(x)\rangle\ \mathrm{d}\mu(x) + s^2 \int \|\xi(x)\|_2^2\ \mathrm{d}\mu(x) \\
            &= \cF_\varepsilon(\mu) + s \varepsilon \int \langle \xi(x), \nabla\phi_{\mu,\nu}^{c_\varepsilon}(x)\rangle\ \mathrm{d}\mu(x) + s^2 \int \|\xi(x)\|_2^2\ \mathrm{d}\mu(x).
        \end{aligned}
    \end{equation}

    Hence,
    \begin{equation}
        \limsup_{s\to 0}\ \frac{\cF_\varepsilon\big((\id + s\xi)_\#\mu\big) - \cF_\varepsilon(\mu)}{\varepsilon s} \le \int \langle \nabla\phi_{\mu,\nu}^{c_\varepsilon}(x), \xi(x)\rangle\ \mathrm{d}\mu(x).
    \end{equation}

    Thus, we can conclude that $\gW\cF_\varepsilon(\mu) = \varepsilon\nabla \phi_{\mu,\eta}^{c_\varepsilon}$.
\end{proof}

Hence, applying \cref{lemma:WgradofLiero} with $\mu=\mu_\iter$ and $\eta_{\mu_\iter,\nu}^\star=\argmin_\eta\ \W_2^2(\eta, \mu_\iter) + \varepsilon \Df(\eta||\nu)$, we get
\begin{equation}
    \mu_{\iter+1} = (\id - \gamma  \varepsilon \nabla \phi_{\mu_\iter,\eta_{\mu_\iter, \nu}^\star}^{c_\varepsilon})_\#\mu_\iter.
\end{equation}
Taking $\varepsilon=2\tau$ and $\gamma=\tfrac12$, and observing that in that case, $\phi^{c_\varepsilon}=\phi^{c_\tau}$, we recover the update rule of the JKO scheme \eqref{eq:updateJKOwgrad}.

\subsection{Proof of \Cref{prop:orientation_equivalence}}
\label{proof:orientation_equivalence}

The standard VWGF derivation in \cref{sec:jko_fdiv} is written for the functional
$
\tilde \cF(\mu)=\Df(\mu \| \nu),
$
which leads, after the reparametrization trick, to the update rule given in
\cref{eq:VWGFwReparam}.

We now apply the same derivation to the functional
$
\cF(\mu)=\Df(\nu \| \mu).
$
Using the variational representation of the $f$-divergence in this orientation gives
\begin{equation}
\Df(\nu\|\mu)
=
\sup_h
\left\{
\mathbb E_\nu[h(y)]
-
\mathbb E_{\mu}[f^*(h(x))]
\right\}.
\end{equation}
The corresponding VWGF step reads
\begin{equation}
\inf_{\tilde T}
\frac{1}{2\tau}
\mathbb{E}_{\mu_\ell}
\!\left[
\|\tilde T - \mathrm{Id}\|_2^2
\right]
+
\sup_h
\left\{
\mathbb E_\nu[h(y)]
-
\mathbb E_{\mu_\ell}
\!\left[
f^*(h(\tilde T(x)))
\right]
\right\}.
\end{equation}

Using the reparameterization trick described in \cref{paragraph:reparamTrick} yields
\begin{equation}
\inf_T
\sup_h
\left\{
\mathbb E_\nu[h(y)]
-
\mathbb E_{\mu_0}
\!\left[
f^*(h(T(x)))
\right]
+
\frac{1}{2\tau}
\mathbb{E}_{\mu_0}
\!\left[
\| T - T_\ell\|_2^2
\right]
\right\},
\end{equation}

which coincides with the RWP-$f$-GAN objective introduced by
\citet{lin2021wasserstein}.

Therefore,
\[
\text{VWGF applied to } D_f(\nu \| \mu)
\equiv
\text{RWP-}f\text{-GAN applied to } D_f(\nu\|\mu).
\]

We now relate this formulation to the orientation used in the paper.
Defining the adjoint generator
$
\tilde f(r):= r f(1/r),
$
we have the standard identity
\begin{equation}
D_f(\nu\|\mu)
=
D_{\tilde f}(\mu \| \nu).
\end{equation}

Consequently,
\[
\text{RWP-}f\text{-GAN on } \Df(\nu \| \mu)
\equiv
\text{VWGF on } D_{\tilde f}(\mu \| \nu).
\]

This shows that the two schemes correspond to the same proximal update once applied to the same orientation of the divergence, and therefore differ only by the choice of orientation in their original formulations.

\subsection{Proof of \Cref{prop:conditioner}} \label{proof:prop_conditioner}

We first compute the gradient of $\theta\mapsto \cF(\mu_\theta)$ where $\mu_\theta = \F_{\theta\#}\mu$.

\begin{proposition} \label{TheoremGradPushFor}
    Let \( \cF : \cPR \to \R \) be a functional that is Wasserstein differentiable. Let \( \F_\theta : \R^d \to \R^d \) be a smooth map parametrized by \( \theta \in \R^p \), and define \( \mu_\theta = \F_{\theta \#} \mu \) for some fixed reference measure \( \mu\in\cPR \). Then the gradient of \( \theta \mapsto \cF(\F_{\theta \#} \mu) \) is
    \begin{equation}
        \nabla_\theta \cF(\F_{\theta \#} \mu)= \int \partial_\theta \F_\theta(x)^\top \, \gW\cF(\F_{\theta \#} \mu)\big(\F_\theta(x)\big) \, \mathrm{d}\mu(x).
    \end{equation}
\end{proposition}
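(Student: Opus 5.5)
The plan is a chain rule argument that uses the ``strong Fréchet differential'' property of the Wasserstein gradient, i.e.\ Proposition~2.12 of \citep{lanzetti2025first} recalled in \Cref{sec:wass_gradient}. That result gives a first-order expansion of $\cF$ along \emph{any} coupling, not only optimal ones. Fix $\theta\in\R^p$ and a direction $u\in\R^p$. For small $s\in\R$, I compare $\mu_{\theta+su}=(\F_{\theta+su})_\#\mu$ with $\mu_\theta$ through the (generally non-optimal) coupling
\[
\gamma_s=\big(\F_\theta,\F_{\theta+su}\big)_\#\mu\in\Pi(\mu_\theta,\mu_{\theta+su}).
\]
Proposition~2.12 then gives
\[
\cF(\mu_{\theta+su})-\cF(\mu_\theta)=\int\big\langle \gW\cF(\mu_\theta)(\F_\theta(z)),\,\F_{\theta+su}(z)-\F_\theta(z)\big\rangle\,\mathrm{d}\mu(z)+o\Big(\|\F_{\theta+su}-\F_\theta\|_{L^2(\mu)}\Big).
\]

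Next I linearize the map increment. By differentiability in $\theta$, $\F_{\theta+su}(z)-\F_\theta(z)=s\,\partial_\theta\F_\theta(z)u+r_s(z)$, with $r_s(z)=o(s)$ pointwise. I will assume a regularity condition, implicit in ``smooth'': a local $L^2(\mu)$ bound on $\partial_\theta\F_\theta$, say $\sup_{|\theta'-\theta|\le\delta}\|\partial_\theta\F_{\theta'}(z)\|\le M(z)$ with $M\in L^2(\mu)$. Under it, the mean value theorem and dominated convergence give $\|r_s\|_{L^2(\mu)}=o(s)$ and $\|\F_{\theta+su}-\F_\theta\|_{L^2(\mu)}=O(s)$, so the remainder in the expansion is $o(s)$.

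Next I handle the linear term. Since $\gW\cF(\mu_\theta)\in L^2(\mu_\theta)$, the function $z\mapsto\gW\cF(\mu_\theta)(\F_\theta(z))$ lies in $L^2(\mu)$ by the change of variables $\mu_\theta=(\F_\theta)_\#\mu$. Cauchy--Schwarz then bounds the contribution of $r_s$ by $o(s)$. Dividing by $s$ and letting $s\to0$ yields
\[
\frac{\mathrm{d}}{\mathrm{d}s}\cF(\mu_{\theta+su})\Big|_{s=0}=\Big\langle u,\int\partial_\theta\F_\theta(z)^\top\gW\cF(\mu_\theta)(\F_\theta(z))\,\mathrm{d}\mu(z)\Big\rangle.
\]
This holds for every $u$, and the right-hand side is linear and continuous in $u$. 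Hence the vector in the second slot is the gradient, which is the claimed formula, i.e.\ $\nabla_\theta\cF(\mu_\theta)=\cG_\theta(\gW\cF(\mu_\theta))$.

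The main obstacle is justifying that the $o(\cdot)$ remainders are uniform enough. This comes down to the domination hypothesis on $\partial_\theta\F_\theta$, which I will state explicitly as the meaning of ``smooth''. One more point needs care: Proposition~2.12 requires the gradient to lie in the tangent space $T_{\mu_\theta}\cPR$. As noted in \Cref{sec:wass_gradient}, we may always choose that representative, and it yields the same formula.
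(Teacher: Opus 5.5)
Your proposal follows the paper's proof essentially step for step. Both use the same non-optimal coupling $(\F_\theta,\F_{\theta+h})_\#\mu$ in the strong Fréchet property of \citep{lanzetti2025first}, then expand $\F_{\theta+h}-\F_\theta$ to first order and extract the linear term. Your explicit $L^2(\mu)$ domination of $\partial_\theta\F_\theta$ and the Cauchy--Schwarz step make rigorous the remainder control that the paper only sketches; the paper integrates a pointwise $o(\|h\|)$ Taylor remainder as if it were uniform in $x$.

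One small difference: you work along lines $\theta+su$, which only gives directional (Gateaux) derivatives. Running the same dominated-convergence estimate with a general increment $h\to 0$ gives the Fréchet differentiability that the paper's version, and the word ``gradient'', implicitly claim.
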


\begin{proof}
Let \( h \in \mathbb{R}^p \). As Wasserstein gradients are ``strong gradients'' \citep[Proposition 2.12]{lanzetti2025first}, using the coupling \( \gamma = (\F_\theta, \F_{\theta+h})_\# \mu \in \Pi(\F_{\theta \#} \mu, \F_{\theta+h \#} \mu) \), we obtain
\begin{align}
    \cF(\F_{\theta+h \#} \mu) - \cF(\F_{\theta \#} \mu) &= \int \!\! \langle \gW \cF(\F_{\theta \#} \mu)(x), y - x \rangle \, \mathrm{d}\big((\F_\theta, \F_{\theta+h})_\# \mu\big)(x, y) \notag \\
    &\quad + o\!\left( \sqrt{\int \|x - y\|^2 \,
         \mathrm{d}\big((\F_\theta, \F_{\theta+h})_\# \mu\big)(x,y)} \right) \label{eq:StrongDiff} \\
    &= \int \!\! \langle \gW\cF(\F_{\theta \#} \mu)\big(\F_\theta(x)\big), \F_{\theta+h}(x) - \F_\theta(x) \rangle \, \mathrm{d}\mu(x)+ o(\|h\|), \label{eq:pushforward}
\end{align}
where we justify the $o(\|h\|)$ after.

Now apply the first-order Taylor expansion for vector-valued functions \citep{Nasirzadeh2011new}:
\begin{equation} \label{eq:Taylor}
    \F_{\theta+h}(x) = \F_\theta(x) + \partial_\theta \F_\theta(x)\, h + o(\|h\|).
\end{equation}
Substituting \eqref{eq:Taylor} into \eqref{eq:pushforward}, we get
\begin{align}
    \cF(\F_{\theta+h \#} \mu) - \cF(\F_{\theta \#} \mu) &= \int \langle \gW\cF(\F_{\theta \#} \mu)\big(\F_\theta(x)\big), \partial_\theta \F_\theta(x) h \rangle \, \mathrm{d}\mu(x) + o(\|h\|). \label{eq:almostdone}
\end{align}
Extracting the linear term in \( h \) yields
\begin{equation}
    \cF(\F_{\theta+h\#}  \mu) - \cF(\F_{\theta \#} \mu)= \left\langle \int \partial_\theta \F_\theta(x)^\top \gW \cF(\F_{\theta \#} \mu)\big(\F_\theta(x)\big) \, \mathrm{d}\mu(x),
  h \right\rangle + o(\|h\|).
\end{equation}
Thus, by definition of the gradient,
\begin{equation}
    \nabla_\theta \cF(\F_{\theta \#} \mu) = \int \partial_\theta \F_\theta(x)^\top \, \gW\cF(\F_{\theta \#} \mu)\big(\F_\theta(x)\big) \, \mathrm{d}\mu(x).
\end{equation}

\paragraph{Justification of the \( o(\|h\|) \) control.}

We verify that the remainder term in \eqref{eq:StrongDiff} is indeed \( o(\|h\|) \). From \eqref{eq:Taylor}, we compute
\begin{align}
    I(h) := \int \|\F_\theta(x) - \F_{\theta+h}(x)\|^2 \, \mathrm{d}\mu(x) &= \int \|\partial_\theta \F_\theta(x) h + o(\|h\|)\|^2 \, \mathrm{d}\mu(x) \\
    &\leq 2\Big( \int \|\partial_\theta \F_\theta(x) h\|^2 \, \mathrm{d}\mu(x) + o(\|h\|^2)\Big) \\
    &= 2 h^\top \Big( \int \partial_\theta \F_\theta(x)^\top 
                        \partial_\theta \F_\theta(x) \, \mathrm{d}\mu(x) \Big) h
   + o(\|h\|^2).
\end{align}
Let \( G(\theta) := \int \partial_\theta \F_\theta(x)^\top \partial_\theta \F_\theta(x) \, \mathrm{d}\mu(x) \) then $ G(\theta)\succeq 0 $.  
Then $
I(h) \leq (\lambda_{\max}(G(\theta)) + o(1)) \|h\|^2.$ 
Thus \( \sqrt{I(h)} \leq C\|h\| \) for some \(C>0\) and small enough \(\|h\|\).  
If the remainder satisfies \( r(h) = o(\sqrt{I(h)}) \), then
\begin{equation}
\frac{r(h)}{\|h\|}
= \frac{r(h)}{\sqrt{I(h)}} \cdot \frac{\sqrt{I(h)}}{\|h\|}
\xrightarrow[h\to 0]{} 0 \cdot C = 0,
\end{equation}
so indeed \( r(h) = o(\|h\|) \).  
This justifies the step in \eqref{eq:pushforward} and completes the proof.
\end{proof}

We are now ready to show \Cref{prop:conditioner}.

\begin{proof}[Proof of \Cref{prop:conditioner}]
    From \Cref{TheoremGradPushFor}, the parameter dynamics are
    \begin{equation}
       \dot{\theta_t}= -\nabla_\theta\cF(\mu_{\theta_t}) = - \int \partial_\theta \F_{\theta_t}(y)^\top \gW\cF(\mu_{\theta_t})\big(\F_{\theta_t}(y)\big)\, \mathrm{d}\mu(y).
    \end{equation}

    For $x_t\sim \mu_{\theta_t}$, there exists $z\sim \mu$ such that $x_t = \F_{\theta_t}(z)$. Since $\dot{x}_t=v_{\theta_t}(x_t)$, using the chain rule, we identify $v_{\theta_t}$ with 
    \begin{equation}
        v_{\theta_t}(x_t) = \partial_\theta\F_{\theta_t}(z)\cdot \dot{\theta_t} = - \partial_\theta \F_{\theta_t}(z) \cdot \int \partial_\theta \F_{\theta_t}(y)^\top \gW\cF(\mu_{\theta_t})\big(\F_{\theta_t}(y)\big)\ \mathrm{d}\mu(y).
    \end{equation}   
    Consequently, $(\mu_{\theta_t})_t$ solves (weakly) the continuity  equation
    \begin{equation}
        \partial_t \mu_{\theta_t} = \nabla \cdot \Big(\mu_{\theta_t}\, \cH_{\theta_t}[\gW\cF(\mu_{\theta_t})]\Big),
    \end{equation}
    where the operator $\cH_{\theta_t}$ is defined for any $f\in L^2(\mu_\theta)$ and $x\in \R^d$ as
    \begin{equation}
    \big[\cH_{\theta} f\big](x)
    := \partial_\theta \F_{\theta}\big(\F_{\theta}^{-1}(x)\big) \, \cG_\theta(f)\qedhere.
    \end{equation}
\end{proof}

\subsection{Proof of \Cref{prop:orthogonal_projector}} \label{proof:prop_orthogonal projector}

We consider the flow induced by \begin{equation}
\label{eq:NaturalGF}
    \dot{\theta}_t = -G(\theta_t)^{-1}\nabla_\theta\cF(\mu_{\theta_t}),
\end{equation}
where $G(\theta)= \int \partial_\theta\F_\theta(z)^\top\partial\F_\theta(z)\ \mathrm{d}\mu(z)$.

First, we show that $v_{\theta}=-\Tilde{\cH}_{\theta_t}\gW\cF(\mu_{\theta_t})$. For $x_t\sim \mu_{\theta_t}$, there exists $z\sim \mu$ such that $x_t=\F_{\theta_t}(z)$. Since $\dot{x}_t = v_{\theta_t}(x_t)$, using the chain rule, we identify $v_{\theta_t}$ with
\begin{equation}
    v_{\theta_t}(x_t) = \partial_\theta\F_{\theta_t}(z) \dot{\theta}_t = -\partial_\theta \F_{\theta_t}(z) G(\theta)^{-1}\nabla_\theta\cF(\mu_{\theta_t}).
\end{equation}
Hence, by \Cref{TheoremGradPushFor}, 
\begin{equation}
    v_{\theta_t}(x_t) = - \partial_\theta\F_{\theta_t}(z)  G(\theta_t)^{-1} \int \partial_\theta\F_{\theta_t}(y)^\top \gW\cF(\F_{\theta_t\#}\mu)\big(\F_{\theta_t}(y)\big)\ \mathrm{d}\mu(y) = -\big[\Tilde{\cH}_{\theta_t}\gW\cF(\mu_{\theta_t})\big]\big(\F_{\theta_t}(z)\big).
\end{equation}

In particular $v_{\theta}$ belongs to $ V_\theta \;:=\; \{\, \F_\theta(x) \mapsto \partial_\theta\F_\theta(x)\,u \;:\; u \in \mathbb{R}^p \,\}.$ 
Let us now show that $\Tilde{H}_\theta$ defines a projection operator. For any \(f\in L^2(\mu_\theta)\), $x\in\R^d$,
\begin{equation}
    \begin{aligned}
    \Tilde{\cH}_\theta\big(\Tilde{\cH}_\theta f\big)\big(\F_\theta(x)\big) 
    &= \partial_\theta\F_\theta(x) G(\theta)^{-1} \int \partial_\theta\F_\theta(z)^\top \Tilde{\cH}_\theta f\big(\F_\theta(z)\big)\ \mathrm{d}\mu(z) \\
    &= \partial_\theta\F_\theta(x)\,G(\theta)^{-1}\int
         \partial_\theta\F_\theta(z)^\top\Big(\partial_\theta\F_\theta(z)\,G(\theta)^{-1}\!\int \partial_\theta\F_\theta(y)^\top f\big(\F_\theta(y)\big)\, \mathrm{d}\mu(y)\Big)\, \mathrm{d}\mu(z)\\
    &= \partial_\theta\F_\theta(x)\,G(\theta)^{-1}\Big(\int \partial_\theta\F_\theta(z)^\top \partial_\theta\F_\theta(z)\,\mathrm{d}\mu(z)\Big)\,G(\theta)^{-1}
        \int \partial_\theta\F_\theta(y)^\top f\big(\F_\theta(y)\big)\,\mathrm{d}\mu(y)\\
    &= \partial_\theta\F_\theta(x)\,G(\theta)^{-1}G(\theta)G(\theta)^{-1}
        \int \partial_\theta\F_\theta(y)^\top f\big(\F_\theta(y)\big)\,\mathrm{d}\mu(y)\\
    &= \partial_\theta\F_\theta(x)\,G(\theta)^{-1}
        \int \partial_\theta\F_\theta(y)^\top f\big(\F_\theta(y)\big)\,\mathrm{d}\mu(y)
    = \Tilde{\cH}_\theta f\big(\F_\theta(x)\big).\qedhere
    \end{aligned}
\end{equation}

Hence, $\Tilde{\cH}_\theta$ is a projection. Now let us show that it is a self adjoint operator. Let $f,g\in L^2(\mu_\theta)$, then
\begin{equation}
    \begin{aligned}
        \langle \Tilde{\cH}_\theta f, g\rangle_{L^2(\mu_\theta)} &= \int \langle \Tilde{\cH}_\theta f\big(\F_\theta(y)\big), g\big(\F_\theta(y)\big)\rangle\ \mathrm{d}\mu(y) \\
        &= \int \langle \partial_\theta\F_\theta(y) G(\theta)^{-1} \int \partial_\theta\F_\theta(z)^\top f\big(\F_\theta(z)\big)\ \mathrm{d}\mu(z), g\big(\F_\theta(y)\big)\rangle\ \mathrm{d}\mu(y) \\
        &= \int \left\langle \int \partial_\theta\F_\theta(z)^\top f\big(\F_\theta(z)\big)\ \mathrm{d}\mu(z), G(\theta)^{-1} \partial_\theta\F_\theta(y)^\top g\big(\F_\theta(y)\big) \right\rangle\ \mathrm{d}\mu(y) \\
        &= \left\langle \int \partial_\theta\F_\theta(z)^\top f\big(\F_\theta(z)\big)\ \mathrm{d}\mu(z), \int G(\theta)^{-1} \partial_\theta\F_\theta(y)^\top g\big(\F_\theta(y)\big)\ \mathrm{d}\mu(y) \right\rangle \\
        &= \int \langle f\big(\F_\theta(z)\big), \partial_\theta\F_\theta(z) \int G(\theta)^{-1} \partial_\theta\F_\theta(y)^\top g\big(\F_\theta(y)\big)\ \mathrm{d}\mu(y) \rangle\ \mathrm{d}\mu(z) \\
        &= \int \langle f\big(\F_\theta(z)\big), \Tilde{\cH}_\theta g\big(\F_\theta(z)\big) \rangle\ \mathrm{d}\mu(z) \\
        &= \langle f, \Tilde{\cH}_\theta g \rangle_{L^2(\mu_\theta)}.
    \end{aligned}
\end{equation}
Hence, by \citep[Theorem 12.14]{rudin1991functional}, $\Tilde{\cH}_\theta$ is an orthogonal projection operator onto $ V_\theta \;:=\; \{\, \F_\theta(x) \mapsto \partial_\theta\F_\theta(x)\,u \;:\; u \in \mathbb{R}^p \,\}.$

\section{Parametric Flows} \label{appendix:param_flows}

In this section, we clarify the theory presented in \cref{sec:theory} and derive the geometric structure induced by the
parametrization of probability distributions we consider.
We define the associated metric tensor and Riemannian gradient in
parameter space and relate this construction to the framework of \citep{li2019wasserstein, chen2020optimal}.
We use this geometric perspective to clarify the practical implementation of JKO-type schemes in parametrized models and to refine \cref{prop:JKOandprecondGF} by making its assumptions explicit.
Finally, as an illustrative example, we apply this construction to Gaussian
distributions and show that it recovers the classical Bures-Wasserstein gradient.

\subsection{Geometry induced by the parametrization}
\label{appendix:parametric_geometry}

In this subsection, we define a metric induced by the parametrization
and connect it to the operators $\mathcal{H}$ and $\tilde{\mathcal{H}}$ introduced in \cref{sec:theory}. 

Let $\mu \in \cP_2(\R^d)$ be a reference distribution, and let
\begin{equation}
    F_\theta : \R^d \to \R^d,
    \qquad
    \theta \in \Theta \subset \R^p.
\end{equation}
We consider the parametrized family of probability measures
\begin{equation}
    \mu_\theta := F_{\theta\#}\mu,
\end{equation}
and define the associated space
\begin{equation}
    \mathcal M_\mu
    :=
    \big\{
    \mu_\theta = F_{\theta\#}\mu
    :
    \theta \in \Theta
    \big\}
    \subset \cP_2(\R^d).
\end{equation}
We endow this family with a finite-dimensional geometric structure induced by the parametrization and view $\mathcal M_\mu$ as a parametrized manifold embedded in the Wasserstein space $(\cP_2(\R^d),\W_2)$.

\paragraph{Admissible velocity fields.}

Let $(\theta_t)_{t}$ be a smooth curve in $\Theta$ such that $\theta_{t=0}=\theta$, and denote
\begin{equation}
    \mu_{\theta_t} := F_{\theta_t\#}\mu.
\end{equation}
Assuming that $F_{\theta_t}$ is invertible, and proceeding as in \cref{sec:theory}, differentiating with respect to time yields the continuity equation
\begin{equation}
    \partial_t \mu_{\theta_t}
    +
    \nabla \cdot
    \big(
    \mu_{\theta_t}\, v_t
    \big)
    =
    0,
\end{equation}
where the associated velocity field is given by
\begin{equation}
    v_t(x)
    =
    \partial_\theta F_{\theta_t}
    \!\big(
    F_{\theta_t}^{-1}(x)
    \big)
    \dot\theta_t.
\end{equation}
Evaluating at time $t=0$, we obtain the admissible velocity field in $\mu_\theta$:
\begin{equation}
    v_{0}(x)
    =
    \partial_\theta F_\theta
    \!\big(
    F_\theta^{-1}(x)
    \big)
    \dot\theta_0.
\end{equation}
More generally, for any $h\in\R^p$, we define the associated admissible velocity field by
\begin{equation}\label{eq:admissible_vf}
    v_h(x)
    :=
    \partial_\theta F_\theta
    \!\big(
    F_\theta^{-1}(x)
    \big)
    h.
\end{equation}

\paragraph{Tangent space.}

The tangent space to $\mathcal M_\mu$ at $\mu_\theta$ is given by
\begin{equation}
    T_{\mu_{\theta}}\cM_\mu
    =
    \left\{
    -\nabla\cdot(\mu_\theta v_h)
    :
    h \in \R^p
    \right\},
\end{equation}
where the associated velocity field satisfies $v_h \in L^2(\mu_\theta;\R^d)$ and write as \eqref{eq:admissible_vf}.
We therefore introduce the space of realizable velocity fields
\begin{equation}
    V_\theta
    :=
    \{\, v_h :  h \in \R^p \,\}
    \subset
    L^2(\mu_\theta;\R^d),
\end{equation}
which is a finite-dimensional subspace of the Hilbert space
$L^2(\mu_\theta;\R^d)$.
We endow this space with the inner product induced by
$L^2(\mu_\theta)$.

\paragraph{Riemannian metric.}

As in \cref{sec:theory}, we define the matrix
\begin{equation}
    G(\theta)
    :=
    \int
    \partial_\theta F_\theta(z)^\top
    \partial_\theta F_\theta(z)
    \, \mathrm{d}\mu(z)
    \in \R^{p\times p}.
\end{equation}
Observe that this matrix naturally appears when computing the squared
\(L^2(\mu_\theta)\)-norm of realizable velocity fields. Indeed, applying the transfer lemma, 
\begin{equation}
    \|v_h\|_{L^2(\mu_\theta)}^2
    =
    \int
    \left\|
    \partial_\theta F_\theta
    \!\big(
    F_\theta^{-1}(x)
    \big)
    h
    \right\|^2
    \mathrm{d}\mu_\theta(x)
    =
    h^\top G(\theta) h.
\end{equation}

Assuming that $G(\theta)$ is positive definite, it defines a Riemannian metric on the parameter space $\Theta$ and, equivalently, on the manifold $\mathcal M_\mu$. More precisely, for two tangent vectors associated with parameters $h_1,h_2\in\R^p$, we define
\begin{equation}
    g_{\mu_\theta}
    \bigl(
    v_{h_1},
    v_{h_2}
    \bigr)
    := \langle v_{h_1},
    v_{h_2} \rangle_{L^2(\mu_\theta)}=
    h_1^\top
    G(\theta)
    h_2.
\end{equation}

\paragraph{Riemannian gradient.}

We define the operator
\begin{equation}
    (\widetilde{\cH}_\theta f)(x)
    :=
    \partial_\theta F_\theta
    \!\big(
    F_\theta^{-1}(x)
    \big)
    G(\theta)^{-1}
    \int
    \partial_\theta F_\theta(z)^\top
    f(z)
    \, \mathrm{d}\mu_\theta(z).
\end{equation}
By \cref{prop:orthogonal_projector}, $\widetilde{\cH}_\theta$ is the orthogonal projection from
$L^2(\mu_\theta;\R^d)$ onto the subspace $V_\theta$.

\begin{proposition}
\label{prop:riemaniangradManifold}
Let $\cF:\cP_2(\R^d)\to\R$ be a smooth functional.
Then the Riemannian gradient of $\cF$ on the manifold $\mathcal M_\mu$,
with respect to the metric $g_{\mu_\theta}$,
is given by
\begin{equation}
    \operatorname{grad}_{\mathcal M_\mu}
    \cF(\mu_\theta)
    =
    -
    \nabla\cdot
    \left(
    \mu_\theta
    \,
    \widetilde{\cH}_\theta
    \nabla
    \frac{\delta \cF}{\delta \mu}
    (\mu_\theta)
    \right).
\end{equation}
\end{proposition}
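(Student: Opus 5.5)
The plan is to characterize the Riemannian gradient through its defining property: it is the unique tangent vector $\xi\in T_{\mu_\theta}\cM_\mu$ such that, for every smooth curve $(\theta_t)_t$ with $\theta_0=\theta$ and $\dot\theta_0=h$,
\[
\frac{\mathrm{d}}{\mathrm{d}t}\cF(\mu_{\theta_t})\big|_{t=0} = g_{\mu_\theta}(\xi, v_h).
\]
Here tangent vectors $-\nabla\cdot(\mu_\theta v_{h'})$ are identified with their velocity fields $v_{h'}\in V_\theta$. Since the candidate $-\nabla\cdot(\mu_\theta \widetilde{\cH}_\theta \nabla \tfrac{\delta\cF}{\delta\mu}(\mu_\theta))$ has velocity field $\widetilde{\cH}_\theta\gW\cF(\mu_\theta)\in V_\theta$ (using $\gW\cF=\nabla\cF'$), it lies in the tangent space. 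So it suffices to verify the identity above with $\xi=\widetilde{\cH}_\theta\gW\cF(\mu_\theta)$. Uniqueness then follows from positive definiteness of $G(\theta)$, which makes $g_{\mu_\theta}$ nondegenerate on $V_\theta$.

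First, I will compute the left-hand side. By the chain rule and \Cref{TheoremGradPushFor}, $\frac{\mathrm{d}}{\mathrm{d}t}\cF(\mu_{\theta_t})|_{t=0}=\langle \nabla_\theta\cF(\mu_\theta),h\rangle$. This equals
\[
\int\langle \gW\cF(\mu_\theta)(\F_\theta(z)),\partial_\theta\F_\theta(z)h\rangle\,\mathrm{d}\mu(z),
\]
which by the transfer lemma is $\langle \gW\cF(\mu_\theta), v_h\rangle_{L^2(\mu_\theta)}$.

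Second, I will use \Cref{prop:orthogonal_projector}. Since $\widetilde{\cH}_\theta$ is the orthogonal projector onto $V_\theta$ and $v_h\in V_\theta$,
\[
\langle \gW\cF(\mu_\theta), v_h\rangle_{L^2(\mu_\theta)}=\langle \widetilde{\cH}_\theta\gW\cF(\mu_\theta), v_h\rangle_{L^2(\mu_\theta)}=g_{\mu_\theta}(\widetilde{\cH}_\theta\gW\cF(\mu_\theta),v_h).
\]
This is exactly the defining identity. Alternatively, one can check it in coordinates. The projected field equals $v_{u}$ with $u=G(\theta)^{-1}\nabla_\theta\cF(\mu_\theta)$, and $g(v_u,v_h)=u^\top G h=\nabla_\theta\cF^\top h$, which recovers the same conclusion.

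The main obstacle is the bookkeeping of identifications rather than any estimate. One must be careful that the operator in the appendix is written with $\int\partial_\theta\F_\theta(z)^\top f(z)\,\mathrm{d}\mu_\theta(z)$. This must be read, consistently with $\cG_\theta$, as $\int\partial_\theta\F_\theta(z)^\top f(\F_\theta(z))\,\mathrm{d}\mu(z)$. One must also fix the sign convention: the gradient is represented by the divergence form $-\nabla\cdot(\mu_\theta\,\cdot)$ of its velocity field. I would state these conventions explicitly, together with the standing assumptions (invertibility of $\F_\theta$, differentiability, $G(\theta)\succ0$), before running the two-line computation.
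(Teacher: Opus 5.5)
Your proof is correct and follows the paper's argument. Both characterize the gradient by $g_{\mu_\theta}(\xi, v_h)=\frac{\mathrm{d}}{\mathrm{d}t}\cF(\mu_{\theta_t})|_{t=0}$, rewrite that derivative as $\langle \nabla\frac{\delta\cF}{\delta\mu}(\mu_\theta), v_h\rangle_{L^2(\mu_\theta)}$, insert the orthogonal projector $\widetilde{\cH}_\theta$ of \Cref{prop:orthogonal_projector} using $v_h\in V_\theta$, and conclude by nondegeneracy. The one minor difference is that you get the directional derivative from the parametric chain rule of \Cref{TheoremGradPushFor} plus the transfer lemma, whereas the paper uses the first-variation chain rule, the continuity equation and an integration by parts; your reading of the $\mathrm{d}\mu_\theta$ integral in the appendix's definition of $\widetilde{\cH}_\theta$ as $\int\partial_\theta\F_\theta(z)^\top f(\F_\theta(z))\,\mathrm{d}\mu(z)$ is the right one.
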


\begin{proof}
By definition of the Riemannian gradient, for every smooth curve
$(\mu_t)_t$ in $\cM_\mu$ such that $\mu_{t=0}=\mu_{\theta_0}$, we have
\begin{equation}
    g_{\mu_{\theta_0}}
    \bigl(
    \operatorname{grad}_{\cM_\mu}\mathcal{F}(\mu_{\theta_0}),
    \partial_t\mu_t|_{t=0}
    \bigr)
    =
    \frac{\mathrm{d}}{\mathrm{d}t}\mathcal{F}(\mu_t)\Big|_{t=0}.
\end{equation}
Moreover, if
\begin{equation}
    \partial_t\mu_t|_{t=0}\in T_{\mu_{\theta_0}}\cM_\mu,
\end{equation}
then there exists $h\in\R^p$ and $v_h\in V_{\theta_0}$ such that
\begin{equation}
    \partial_t\mu_t\big|_{t=0}
    =
    -\nabla\cdot(\mu_{\theta_0}v_h).
\end{equation}

Using the chain rule for functionals, we obtain
\begin{equation}
    \frac{\mathrm{d}}{\mathrm{d}t}\mathcal{F}(\mu_t)\Big|_{t=0}
    =
    \int
    \frac{\delta \mathcal{F}}{\delta \mu}(\mu_{\theta_0})(x)\,
    \partial_t\mu_t(x)\Big|_{t=0}\,\mathrm{d}x.
\end{equation}
Substituting the continuity equation yields
\begin{equation}
    \frac{\mathrm{d}}{\mathrm{d}t}\mathcal{F}(\mu_t)\Big|_{t=0}
    =
    -\int
    \frac{\delta \mathcal{F}}{\delta \mu}(\mu_{\theta_0})(x)\,
    \nabla\cdot\big(\mu_{\theta_0}(x)v_h(x)\big)\,\mathrm{d}x.
\end{equation}
An integration by parts gives
\begin{equation}
    \frac{\mathrm{d}}{\mathrm{d}t}\mathcal{F}(\mu_t)\Big|_{t=0}
    =
    \int
    \left\langle
    \nabla \frac{\delta \mathcal{F}}{\delta \mu}(\mu_{\theta_0})(x),
    \,v_h(x)
    \right\rangle
    \, \mathrm{d}\mu_{\theta_0}(x).
\end{equation}
Since $\widetilde{\cH}_{\theta_0}$ is the orthogonal projection onto $V_{\theta_0}$ and $v_h\in V_{\theta_0}$, we may write
\begin{equation}
    \frac{\mathrm{d}}{\mathrm{d}t}\mathcal{F}(\mu_t)\Big|_{t=0}
    =
    \int
    \left\langle
    \widetilde{\cH}_{\theta_0}
    \Bigl(
    \nabla \frac{\delta \mathcal{F}}{\delta \mu}(\mu_{\theta_0})
    \Bigr)(x),
    \,v_h(x)
    \right\rangle
    \, \mathrm{d}\mu_{\theta_0}(x).
\end{equation}
Therefore,
\begin{equation}
    \frac{\mathrm{d}}{\mathrm{d}t}\mathcal{F}(\mu_t)\Big|_{t=0}
    =
    \left\langle
    \widetilde{\cH}_{\theta_0}
    \Bigl(
    \nabla \frac{\delta \mathcal{F}}{\delta \mu}(\mu_{\theta_0})
    \Bigr),
    \,v_h
    \right\rangle_{L^2(\mu_{\theta_0})}.
\end{equation}

On the other hand, by definition of the Riemannian metric,
\begin{equation}
    g_{\mu_{\theta_0}}
    \bigl(
    \operatorname{grad}_{\cM_\mu}\mathcal{F}(\mu_{\theta_0}),
    \partial_t\mu_t|_{t=0}
    \bigr)
    =
    \langle v, v_h\rangle_{L^2(\mu_{\theta_0})},
\end{equation}
where $v\in V_{\theta_0}$ is the velocity field associated with
$\operatorname{grad}_{\cM_\mu}\mathcal{F}(\mu_{\theta_0})$, namely
\begin{equation}
    \operatorname{grad}_{\cM_\mu}\mathcal{F}(\mu_{\theta_0})
    =
    -\nabla\cdot(\mu_{\theta_0}v).
\end{equation}
Since the above identity holds for every $h\in\R^p$, hence for every
$v_h\in V_{\theta_0}$, we conclude that
\begin{equation}
    v
    =
    \widetilde{\cH}_{\theta_0}
    \Bigl(
    \nabla \frac{\delta \mathcal{F}}{\delta \mu}(\mu_{\theta_0})
    \Bigr), \quad \text{hence }\operatorname{grad}_{\cM_\mu}\mathcal{F}(\mu_{\theta_0})
    =
    -\nabla \cdot \Bigl(
    \mu_{\theta_0}\,
    \widetilde{\cH}_{\theta_0}
    \bigl(
    \nabla \tfrac{\delta \mathcal{F}}{\delta \mu}(\mu_{\theta_0})
    \bigr)
    \Bigr),
\end{equation}
which concludes the proof.
\end{proof}

This computation highlights the necessity of considering the preconditioned flow
\begin{equation}
    \dot{\theta}_t
    =
    -G(\theta_t)^{-1}\nabla_\theta\cF(\mu_{\theta_t}),
\end{equation}
as introduced in \cref{sec:theory}. Without this correction, the operator
$\cH_{\theta}$ defined in \cref{prop:conditioner} is not, in general, a projection operator, and the associated velocity field $\cH_{\theta}\bigl(\nabla \frac{\delta \cF}{\delta \mu}(\mu_\theta)\bigr)$ does not correspond to the opposite of a proper Riemannian gradient on $\cM_\mu$. This distinction reflects the difference between the Euclidean gradient in parameter coordinates and the intrinsic Riemannian gradient associated with the metric tensor $G(\theta)$.

\begin{proposition}
\label{prop:Hnotaprojection}
The operator $\cH_\theta$, defined in \cref{prop:conditioner}, is a projection if and only if
\begin{equation}
    G(\theta)=I_p.
\end{equation}
In general, $\cH_\theta$ is not an orthogonal projector.
\end{proposition}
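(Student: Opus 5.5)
The plan is to compute $\cH_\theta\circ\cH_\theta$ explicitly, exactly as was done for $\Tilde{\cH}_\theta$ in the proof of \Cref{prop:orthogonal_projector}, but without the factors $G(\theta)^{-1}$. Write $J(z):=\partial_\theta\F_\theta(z)\in\R^{d\times p}$. For $f\in L^2(\mu_\theta)$ and $x\in\R^d$, the defining formula \eqref{eq:preconditioner} gives $[\cH_\theta f](\F_\theta(x)) = J(x)\,\cG_\theta(f)$. Plugging $\cH_\theta f$ into $\cG_\theta$ and pulling the constant vector $\cG_\theta(f)$ out of the integral gives
\begin{equation*}
    \cG_\theta(\cH_\theta f) = \int J(z)^\top J(z)\,\mathrm{d}\mu(z)\,\cG_\theta(f) = G(\theta)\,\cG_\theta(f).
\end{equation*}
Hence $[\cH_\theta^2 f](\F_\theta(x)) = J(x)\,G(\theta)\,\cG_\theta(f)$. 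Idempotence $\cH_\theta^2=\cH_\theta$ is therefore equivalent to
\begin{equation*}
    J(x)\big(G(\theta)-I_p\big)\cG_\theta(f)=0 \quad \text{for } \mu\text{-a.e. } x \text{ and all } f\in L^2(\mu_\theta).
\end{equation*}

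Sufficiency is immediate: if $G(\theta)=I_p$ the bracket vanishes. In that case $\cH_\theta=\Tilde{\cH}_\theta$, so by \Cref{prop:orthogonal_projector} it is even an orthogonal projector onto $V_\theta$.

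For necessity I would proceed in two steps.

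\textbf{Step 1: $\cG_\theta$ is onto $\R^p$.} For $u\in\R^p$, take $f=v_u$, i.e. $f(\F_\theta(z))=J(z)u$, which lies in $L^2(\mu_\theta)$ because $\|v_u\|^2_{L^2(\mu_\theta)}=u^\top G(\theta)u<\infty$. Then $\cG_\theta(v_u)=G(\theta)u$. Since $G(\theta)$ is invertible (the standing assumption of this section), $\cG_\theta$ is surjective. The condition above thus becomes $J(x)(G(\theta)-I_p)w=0$ for $\mu$-a.e. $x$ and all $w\in\R^p$.

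\textbf{Step 2: conclude $G(\theta)=I_p$.} Integrate the squared norm of $J(x)(G(\theta)-I_p)w$ against $\mu$. This gives
\begin{equation*}
    w^\top (G(\theta)-I_p)\,G(\theta)\,(G(\theta)-I_p)\,w = 0 \quad \text{for all } w\in\R^p.
\end{equation*}
Positive definiteness of $G(\theta)$ then forces $(G(\theta)-I_p)w=0$ for every $w$, i.e. $G(\theta)=I_p$.

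The main obstacle is Step 1, which relies on invertibility of $G(\theta)$. If $G(\theta)$ is only positive semidefinite, the right statement is a restricted one: with $J(x)$ injective on $\mathrm{range}(G(\theta))$ in the averaged sense, one gets $G(\theta)^3=G(\theta)^2$, so $G(\theta)$ acts as the identity on its range. I would state the result under the invertibility assumption already adopted in \Cref{sec:theory}.

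Finally, for the remark "in general not an orthogonal projector," I would combine two observations. First, the adjoint computation from the proof of \Cref{prop:orthogonal_projector} goes through verbatim without $G(\theta)^{-1}$, so $\cH_\theta$ is always self-adjoint. Second, for any self-adjoint $\cH_\theta$, being a projection coincides with being an orthogonal projection. Hence $\cH_\theta$ is an orthogonal projector exactly when $G(\theta)=I_p$, which generically fails. A concrete witness is $\F_\theta(x)=\theta x$ with $p=d=1$ and $\mu=\mathcal N(0,\sigma^2)$: then $G=\sigma^2\neq1$, and $\cH_\theta$ is multiplication by $\sigma^2$ on $V_\theta$, which is not idempotent.
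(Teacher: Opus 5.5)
Your proposal is correct and follows the paper's proof: the core step is the same direct computation $[\cH_\theta^2 f](\F_\theta(x)) = J(x)\,G(\theta)\,\cG_\theta(f)$, after which the paper simply asserts the equivalence with $G(\theta)=I_p$. Your surjectivity and positive-definiteness argument supplies the necessity direction the paper leaves implicit, and you are right that it needs the standing invertibility of $G(\theta)$: without it the correct condition is $G(\theta)^2=G(\theta)$, so the stated equivalence can fail. Your observation that $\cH_\theta$ is always self-adjoint also explains cleanly why "projection" and "orthogonal projector" coincide for $\cH_\theta$.
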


\begin{proof}
For any $f\in L^2(\mu_\theta;\R^d)$, a direct computation yields
\begin{align}
    \mathcal H_\theta\big(\mathcal H_\theta f\big)(x)
&=\partial_\theta F_\theta\!\big(F_\theta^{-1}(x)\big)\int \partial_\theta F_\theta\!\big(F_\theta^{-1}(z)\big)^\top
    \Bigg(
    \partial_\theta F_\theta\!\big(F_\theta^{-1}(z)\big)\int \partial_\theta F_\theta\!\big(F_\theta^{-1}(y)\big)^\top f(y)\,\mathrm{d}\mu_\theta(y)
    \Bigg)\,\mathrm{d}\mu_\theta(z)\\
    &=
    \partial_\theta F_\theta\!\big(F_\theta^{-1}(x)\big)\,
    G(\theta)
    \int \partial_\theta F_\theta\!\big(F_\theta^{-1}(y)\big)^\top f(y)\,\mathrm{d}\mu_\theta(y).
\end{align}
Therefore, $\mathcal H_\theta^2=\mathcal H_\theta$ if and only if $
    G(\theta)=I_p$.
\end{proof}

\subsection{Relation with Wasserstein Natural Gradient Flows}
\label{appendix:wuchen_li_metric}

In the previous subsection, the metric tensor,
\begin{equation}
    G(\theta)
    :=
    \int
    \partial_\theta F_\theta(z)^\top
    \partial_\theta F_\theta(z)
    \, \mathrm{d}\mu(z),
\end{equation}
was introduced as the
$L^2(\mu_\theta)$ inner product of velocity fields generated by the parametrization.

We now relate this construction to the metric tensor introduced in
\cite{li2018natural,li2019wasserstein, ParametricFPequation, chen2020optimal}, which is defined as the exact pullback of the
Wasserstein metric onto the parametric manifold.

\paragraph{Wasserstein Metric.} We briefly recall the definition of the Wasserstein Riemannian metric, see \emph{e.g.} \citep{li2019wasserstein} for more details. Let $\mu\in\cP_2(\R^d)$. For tangent vectors $\xi_1,\xi_2\in T_\mu\cP_2(\R^d)$ represented as $\xi_1=-\mathrm{div}(\mu \nabla\psi_1)$ and $\xi_2 = -\mathrm{div}(\mu \nabla \psi_2)$ with $\psi_1,\psi_2:\R^d\to\R \in C_c^\infty(\R^d)$, the Wasserstein metric tensor $g_\mu^\W:T_\mu\cP_2(\R^d)\times T_\mu\cP_2(\R^d)\to \R$ is defined as
\begin{equation}
    g_\mu^\W(\xi_1,\xi_2) = \int \nabla\psi_1(x)^\top \nabla\psi_2(x)\ \mathrm{d}\mu(x).
\end{equation}

\paragraph{Exact Pullback Metric.}
Let
\begin{equation}
    \Phi :
    \Theta
    \to
    \cP_2(\R^d),
    \qquad
    \theta
    \mapsto
    \mu_\theta=(F_\theta)_\#\mu.
\end{equation}

The Wasserstein information matrix 
\cite{li2019wasserstein} is defined as the pullback of the
Wasserstein metric $g_{\mu_\theta}^\W$:
\begin{equation}
    \big(G_{\mathrm{W}}(\theta)\big)_{ij}
    =
    g^\W_{\mu_\theta}
    \big(
    d\Phi_\theta[e_i],
    d\Phi_\theta[e_j]
    \big),
\end{equation}
where $d\Phi_\theta:T_\theta\Theta\to T_{\mu_\theta}\cP_2(\R^d)$ denotes the differential of the map
$\theta\mapsto\mu_\theta$ and $(e_i)_{i=1}^p$ the canonical basis of $\R^p$. Then $d\Phi_\theta[e_i]=\partial_{\theta_i}\mu_\theta$ denotes the tangent vector induced by varying the parameter $\theta_i$.
As shown in \cref{appendix:parametric_geometry}, a parameter variation in direction $h\in\R^p$ induces the velocity field $v_h(x)=\partial_\theta F_\theta\big(F_\theta^{-1}(x)\big)h.$
In particular, for $h=e_i$, the velocity field associated with $d\Phi_\theta[e_i]$ is  $x \mapsto
    \partial_{\theta_i}F_\theta
    \big(
    F_\theta^{-1}(x)
    \big).$
Hence $d\Phi_{\theta}[e_i] = \partial_{\theta_i}\mu_\theta = -\nabla\cdot \big(\mu_\theta \partial_{\theta_i}F_\theta
    \big(
    F_\theta^{-1}(x)
    \big)\big)$, and
\begin{equation}
    \big(G_{\mathrm{W}}(\theta)\big)_{ij}
    =
    \int
    \nabla\psi_i(x)^\top
    \nabla\psi_j(x)
    \,
    \mathrm{d}\mu_\theta(x),
\end{equation}
where $\nabla\psi_i$ is the Wasserstein representative of
$d\Phi_\theta[e_i]$, \emph{i.e.} $\psi_i$ solves
\begin{equation}
 \nabla\cdot
    \big(
    \mu_{\theta}
    \nabla\psi_i(x)
    \big)
    =
    \nabla\cdot
    \big(
    \mu_{\theta}
    \partial_{\theta_i}
    F_\theta
    \big(
    F_\theta^{-1}(x)
    \big)
    \big).
\end{equation}

\paragraph{Explicit form of the metric tensor.}

\citet{ParametricFPequation} state that, in dimension one, the metric tensor $G_{\mathrm{W}}$ coincides with $G$. We show below that, more generally, it coincides when the velocity fields generated by the parametrization are gradients of scalar potentials.

\begin{proposition}
\label{prop:coincidewithWL}
Assume that the velocity fields generated by the parametrization satisfy
\begin{equation}
    \partial_{\theta_i}
    F_\theta
    \big(
    F_\theta^{-1}(x)
    \big)
    =
    \nabla \phi_i(x)
\end{equation}
for some scalar potentials $\phi_i$, with
$\nabla \phi_i \in L^2(\mu_\theta)$, such that the integration by parts below is valid.
Then the metric tensor introduced in \cite{li2019wasserstein} coincides with the matrix defined in the previous subsection:
\begin{equation}
    G_{\mathrm{W}}(\theta)
    =
    G(\theta).
\end{equation}
\end{proposition}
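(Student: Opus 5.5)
The plan is to show that, under the gradient assumption, the Wasserstein representative $\nabla\psi_i$ of $d\Phi_\theta[e_i]$ can be taken equal to the parametric velocity field $\partial_{\theta_i}\F_\theta\circ\F_\theta^{-1}$. Once that holds, the two Gram matrices are literally the same integral. Recall that $\psi_i$ is defined as a solution of the elliptic equation
\begin{equation*}
\nabla\cdot\big(\mu_\theta\nabla\psi_i\big)=\nabla\cdot\big(\mu_\theta\,\partial_{\theta_i}\F_\theta\circ\F_\theta^{-1}\big).
\end{equation*}
By hypothesis the right-hand side equals $\nabla\cdot(\mu_\theta\nabla\phi_i)$, so $\psi_i=\phi_i$ is a solution. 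I would make the choice of representative unambiguous by recalling that the Wasserstein representative is the element of the tangent space $T_{\mu_\theta}\cPR$ (the $L^2(\mu_\theta)$-closure of gradients) satisfying this weak equation. Two solutions $\nabla\psi_i,\nabla\tilde\psi_i$ differ by a field $w$ in that closure with $\nabla\cdot(\mu_\theta w)=0$ weakly, i.e. $\int\langle w,\nabla\varphi\rangle\,\mathrm{d}\mu_\theta=0$ for all test $\varphi$. Hence $w$ is orthogonal to the tangent space while lying in it, so $w=0$ in $L^2(\mu_\theta)$. This shows $\nabla\psi_i=\nabla\phi_i$ $\mu_\theta$-a.e., using $\nabla\phi_i\in L^2(\mu_\theta)$ and the assumed validity of integration by parts (i.e. $\nabla\phi_i$ lies in the tangent space).

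Then I compute directly:
\begin{equation*}
\big(G_{\W}(\theta)\big)_{ij}=\int\nabla\phi_i^\top\nabla\phi_j\,\mathrm{d}\mu_\theta=\int\big(\partial_{\theta_i}\F_\theta(\F_\theta^{-1}(x))\big)^\top\partial_{\theta_j}\F_\theta(\F_\theta^{-1}(x))\,\mathrm{d}\mu_\theta(x).
\end{equation*}
By the transfer formula for $\mu_\theta=(\F_\theta)_\#\mu$ and invertibility of $\F_\theta$, this becomes $\int\partial_{\theta_i}\F_\theta(z)^\top\partial_{\theta_j}\F_\theta(z)\,\mathrm{d}\mu(z)=G(\theta)_{ij}$, since the $(i,j)$ entry of $\partial_\theta\F_\theta^\top\partial_\theta\F_\theta$ is exactly this product of columns. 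This is the same identity $\|v_h\|^2_{L^2(\mu_\theta)}=h^\top G(\theta)h$ already used in \cref{appendix:parametric_geometry}, now polarized.

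The main obstacle is the uniqueness/identification step: the elliptic equation only determines $\nabla\psi_i$ modulo $\mu_\theta$-divergence-free fields. The argument therefore relies on the convention that the representative lies in $T_{\mu_\theta}\cPR$, together with the hypothesis that $\nabla\phi_i$ belongs to that closure, which is what the stated integration-by-parts validity provides. I would state this convention explicitly and otherwise keep the proof at the formal level of the surrounding section.
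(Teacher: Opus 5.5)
Your proposal is correct and follows essentially the same route as the paper: identify $\nabla\psi_i$ with $\nabla\phi_i=\partial_{\theta_i}F_\theta\circ F_\theta^{-1}$ by uniqueness for the weighted equation $\nabla\cdot(\mu_\theta\nabla\,\cdot\,)$, then change variables through $\mu_\theta=(F_\theta)_\#\mu$ to recover $G(\theta)$. The paper gets uniqueness by testing $\nabla\cdot\big(\mu_\theta\nabla(\psi_i-\phi_i)\big)=0$ against $\psi_i-\phi_i$. Your orthogonality argument inside $T_{\mu_\theta}\cPR$ is the precise Hilbert-space form of that same energy identity, with the integration-by-parts hypothesis read as $\nabla\phi_i\in T_{\mu_\theta}\cPR$.
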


\begin{proof}

By definition of the pullback of the Wasserstein metric, the metric tensor reads
\begin{equation}
    \big(G_{\mathrm{W}}(\theta)\big)_{ij}
    =
    \int
    \nabla \psi_i(x)
    \cdot
    \nabla \psi_j(x)
    \,
    \mathrm{d}\mu_\theta(x),
\end{equation}
where the potentials $\psi_i$ satisfy
\begin{equation}
    \nabla \cdot
    \big(
    \mu_\theta
    \nabla \psi_i
    \big)
    =
    \nabla \cdot
    \Big(
    \mu_\theta
    \,
    \partial_{\theta_i}
    F_\theta
    \big(F_\theta^{-1}(x)\big)
    \Big).
\end{equation}
By assumption, there exists a scalar potential $\phi_i$ such that
\begin{equation}
    \partial_{\theta_i}
    F_\theta
    \big(F_\theta^{-1}(x)\big)
    =
    \nabla \phi_i(x).
\end{equation}

Therefore both potentials $\psi_i$ and $\phi_i$ satisfy the same equation and we have
\begin{equation}
    \nabla \cdot
    \big(
    \mu_\theta
    \nabla \psi_i
    \big)
    =
    \nabla \cdot
    \big(
    \mu_\theta
    \nabla \phi_i
    \big).
\end{equation}

Subtracting the two equations gives
\begin{equation}
    \nabla \cdot
    \big(
    \mu_\theta
    \nabla(\psi_i-\phi_i)
    \big)
    =
    0.
\end{equation}
Testing this equation against $\psi_i-\phi_i$ and integrating by parts, which is valid by assumption, yields
\begin{equation}
    \int
    \| \nabla(\psi_i-\phi_i)(x) \|^2
    \, \mathrm{d}\mu_\theta(x)
    =
    0,
\end{equation}
Hence $\nabla \psi_i=\nabla \phi_i= 
\partial_{\theta_i}
F_\theta
\big(F_\theta^{-1}(x)\big)$ $\mu_\theta$-a.e.

Therefore
\begin{equation}
    \big(G_{\mathrm{W}}(\theta)\big)_{ij}
    =
    \int
    \partial_{\theta_i}
    F_\theta
    \big(F_\theta^{-1}(x)\big)
    \cdot
    \partial_{\theta_j}
    F_\theta
    \big(F_\theta^{-1}(x)\big)
    \,
    \mathrm{d}\mu_\theta(x).
\end{equation}

Using a change of variable we obtain
\begin{equation}
    \big(G_{\mathrm{W}}(\theta)\big)_{ij}
    =
    \int
    \partial_{\theta_i}
    F_\theta(z)
    \cdot
    \partial_{\theta_j}
    F_\theta(z)
    \,
    \mathrm{d}\mu(z).
\end{equation}

In matrix form, this yields
\begin{equation}
    G_{\mathrm{W}}(\theta)
    =
    \int
    \partial_\theta F_\theta(z)^\top
    \partial_\theta F_\theta(z)
    \,
    \mathrm{d}\mu(z),
\end{equation}
which coincides with the matrix $G(\theta)$ defined in the previous subsection.
\end{proof}

\paragraph{Interpretation.}

The previous result shows that the matrix $G(\theta)$ introduced in the previous subsection coincides with the metric tensor $G_{\mathrm{W}}(\theta)$ only under an additional structural assumption on the parametrization.

Therefore, in the general case, the matrix $G(\theta)$ should be interpreted as the metric induced by the parametrization through the $L^2(\mu_\theta)$ inner product on realizable velocity fields, and it does not coincide with the Wasserstein information matrix.

\subsection{Proof of \Cref{prop:JKOandprecondGF}}
\label{appendix:calrifictionJKOpractice}

In practice, as discussed in \cref{paragraph:reparamTrick}, solving the parametrized
JKO scheme
\begin{equation}
\label{eq:paramJKO}
    \left\{
    \begin{aligned}
        \theta_{\iter+1}
        &\in \argmin_{\theta}\ 
        \cF\big((\F_{\theta})_\#\mu_\iter\big)
        + \frac{1}{2\tau}\|\F_\theta-\mathrm{Id}\|_{L^2(\mu_\iter)}^2, \\
        \mu_{\iter+1}
        &= (\F_{\theta_{\iter+1}})_\#\mu_\iter,
    \end{aligned}
    \right.
\end{equation}
is costly because it involves repeated compositions of transport maps.

To avoid this, a reparametrization trick (\cref{paragraph:reparamTrick}) allows us to rewrite the problem in terms
of a fixed base measure $\mu$, leading to the equivalent optimization problem
\begin{equation} \label{eq:paramJKOreparam}
    \theta_{\iter+1}
    \in
    \argmin_{\theta\in\R^p}\ \Phi_\iter(\theta) :=
    \frac{1}{2\tau}\|\F_\theta-\F_{\theta_\iter}\|_{L^2(\mu)}^2
    +
    \cF(\mu_\theta),
    \qquad
    \mu_\theta=(\F_\theta)_\#\mu.
\end{equation}

The following proposition clarifies the connection between the reparametrized JKO update and the preconditioned gradient flow associated with the functional $\mathcal{F}$.

\begin{proposition} \label{prop:jko_precond_gf}
    Let $\mu\in\cP_2(\R^d)$ and $\F_\theta:\R^d\to\R^d$ be a parametric map. Define the pushforward measure $\mu_\theta := \F_\theta\#\mu$ and consider the objective $\theta\mapsto \mathcal F(\mu_\theta)$. Let $(\theta_\iter)_{\iter \ge 0}$ be the solutions of \eqref{eq:paramJKOreparam}. Assume 
    \begin{enumerate}
        \item  \label{item:diff}\textbf{$L^2(\mu)$-Fréchet differentiability of $\theta \mapsto F_\theta$:}
        For every $\theta$ there exists a linear operator
        $
        h \mapsto \partial_\theta F_\theta h
        \in L^2(\mu;\mathbb{R}^d)
        $
        such that
        \[
        \|F_{\theta+h}-F_\theta
        -
        \partial_\theta F_\theta h\|_{L^2(\mu)}
        =
        o(\|h\|_2)
        \qquad\text{as }h\to 0.
        \]
        
        \item \label{item:jac} \textbf{Square-integrability of the Jacobian:} For every $\theta$ in a neighborhood of $\theta_\iter$,
        \[
        \partial_\theta F_\theta
        \in L^2(\mu),
        \qquad
        \|\partial_\theta F_\theta\|_{L^2(\mu)}
        <\infty.
        \]

        \item \label{item:theta} \textbf{Bound on increment of parameters:} For all $\iter\ge 0$, $\|\theta_{\iter+1}-\theta_\iter\|_2 = O(\tau)$ as $\tau\to 0$. 
    \end{enumerate}
    Then, we have the expansion
    \begin{equation}\label{eq:discrete_precond}
        G(\theta_{\iter+1})\,\frac{(\theta_{\iter+1}-\theta_\iter)}{\tau}
        =
        -\nabla_\theta \mathcal F(\mu_{\theta_{\iter+1}})
        + o(1)
        \qquad\text{as }\tau\to 0.
    \end{equation}
    Consequently, when $G$ is invertible and $(\theta_{\iter+1}-\theta_\iter)/\tau \to \dot{\theta}_t$ for $t=\iter \tau$ as $\tau\to 0$, the update rule defines, at first order,
    a backward (implicit) discretization of the
    preconditioned gradient flow
    \begin{equation}\label{eq:precond_gf}
    \dot\theta_t
    =
    -\,G(\theta_t)^{-1}\,\nabla_\theta \mathcal F(\mu_{\theta_t}).
    \end{equation}
\end{proposition}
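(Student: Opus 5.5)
The plan is to write the first-order optimality condition of \eqref{eq:paramJKOreparam} at $\theta_{\iter+1}$ and then expand it to first order around $\theta_{\iter+1}$. Since $\theta_{\iter+1}$ minimizes $\Phi_\iter$ over the open set $\R^p$, and both terms are differentiable, we have $\nabla_\theta\Phi_\iter(\theta_{\iter+1})=0$.

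The gradient of the transport term will be obtained from assumption \ref{item:diff}. Writing $Q(\theta)=\tfrac12\|\F_\theta-\F_{\theta_\iter}\|_{L^2(\mu)}^2$, we expand
\[
Q(\theta+h)-Q(\theta)=\langle \F_\theta-\F_{\theta_\iter},\partial_\theta\F_\theta h\rangle_{L^2(\mu)}+O\big(\|h\|\,o(1)\big)+\tfrac12\|\F_{\theta+h}-\F_\theta\|^2_{L^2(\mu)}.
\]
The last term is $O(\|h\|^2)$ by \ref{item:diff}--\ref{item:jac}. This gives
\[
\nabla_\theta Q(\theta)=\int \partial_\theta\F_\theta(z)^\top\big(\F_\theta(z)-\F_{\theta_\iter}(z)\big)\,\mathrm{d}\mu(z).
\]
For the objective term, \Cref{TheoremGradPushFor} already gives $\nabla_\theta\cF(\mu_\theta)=\cG_\theta(\gW\cF(\mu_\theta))$. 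The optimality condition therefore reads
\[
\frac1\tau\int \partial_\theta\F_{\theta_{\iter+1}}^\top\big(\F_{\theta_{\iter+1}}-\F_{\theta_\iter}\big)\,\mathrm{d}\mu=-\nabla_\theta\cF(\mu_{\theta_{\iter+1}}).
\]

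Next we linearize the increment around $\theta_{\iter+1}$. Let $\delta=\theta_{\iter}-\theta_{\iter+1}$. Applying assumption \ref{item:diff} at $\theta_{\iter+1}$ gives
\[
\F_{\theta_{\iter+1}}-\F_{\theta_\iter}=\partial_\theta\F_{\theta_{\iter+1}}(\theta_{\iter+1}-\theta_\iter)+R,\qquad \|R\|_{L^2(\mu)}=o(\|\delta\|).
\]
Substituting this, the left-hand side becomes $G(\theta_{\iter+1})\frac{\theta_{\iter+1}-\theta_\iter}{\tau}$ plus the error $\frac1\tau\int\partial_\theta\F_{\theta_{\iter+1}}^\top R\,\mathrm{d}\mu$. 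By Cauchy--Schwarz this error is bounded by $\frac1\tau\|\partial_\theta\F_{\theta_{\iter+1}}\|_{L^2(\mu)}\,o(\|\delta\|)$. Assumption \ref{item:jac} bounds the Jacobian norm, and assumption \ref{item:theta} gives $\|\delta\|=O(\tau)$. The error is therefore $o(\tau)/\tau=o(1)$, which yields \eqref{eq:discrete_precond}.

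The main obstacle is uniformity. The little-$o$ in \ref{item:diff} is pointwise in $\theta$, while both the base point $\theta_{\iter+1}$ and the Jacobian norm depend on $\tau$. I would handle this by requiring \ref{item:jac} to hold with a locally uniform bound, and by requiring the differentiability remainder to be locally uniform near $\theta_\iter$. This amounts to continuity of $\theta\mapsto\partial_\theta\F_\theta$ in $L^2(\mu)$, which is the ``sufficiently regular'' hypothesis. Under that hypothesis, $\theta_{\iter+1}$ stays in a fixed neighborhood of $\theta_\iter$ for small $\tau$, because its distance to $\theta_\iter$ is $O(\tau)$. As a fallback, one can expand around $\theta_\iter$ instead and then move $G$ to $\theta_{\iter+1}$ using continuity of $G$, which costs only $O(\tau)\cdot O(1)$.

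Finally, if $G$ is invertible and continuous and $(\theta_{\iter+1}-\theta_\iter)/\tau\to\dot\theta_t$, we multiply by $G(\theta_{\iter+1})^{-1}$ and pass to the limit. This recovers \eqref{eq:precond_gf} as the limit of an implicit Euler scheme.
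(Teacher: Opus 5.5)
Your proposal is correct and follows the paper's proof essentially step by step: the first-order optimality condition of $\Phi_\ell$ at $\theta_{\ell+1}$, linearization of $F_{\theta_{\ell+1}}-F_{\theta_\ell}$ at $\theta_{\ell+1}$, Cauchy--Schwarz on the remainder, and $\|\theta_{\ell+1}-\theta_\ell\|_2=O(\tau)$ to make that remainder $o(1)$.

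Your uniformity remark is a genuine refinement that the paper's proof passes over, since the paper applies the pointwise Fr\'echet remainder and the Jacobian bound at the $\tau$-dependent point $\theta_{\ell+1}$ without comment; assuming $\theta\mapsto\partial_\theta F_\theta$ continuous into $L^2(\mu)$ fixes this. Your fallback expansion at $\theta_\ell$ needs exactly this continuity too, not merely continuity of $G$, because it produces the mixed matrix $\int\partial_\theta F_{\theta_{\ell+1}}^\top\partial_\theta F_{\theta_\ell}\,\mathrm{d}\mu$.
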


\begin{proof}
    Fix $\ell\ge 0$ and define $\Delta_\iter := \theta_{\iter+1}-\theta_\iter\in \R^p$.
    Since $\theta_{\iter+1}$ minimizes $\Phi_\iter$ in \eqref{eq:paramJKOreparam}, the first-order optimality condition yields
    \begin{equation}\label{eq:foc}
    \nabla_\theta \Phi_\iter(\theta_{\iter+1}) = 0.
    \end{equation}
    
    Write
    \[
    \Phi_\iter(\theta)
    =
    \frac{1}{\tau} Q_\iter(\theta)
    +
    \mathcal F(\mu_\theta),
    \qquad
    Q_\iter(\theta)
    :=
    \frac12
    \|\F_\theta-\F_{\theta_\iter}\|_{L^2(\mu)}^2.
    \]
    
    Differentiating with respect to $\theta$ gives
    \begin{equation}\label{eq:grad_Q}
    \nabla_\theta Q_\iter(\theta)
    =
    \int
    \big(
    \partial_\theta \F_\theta(z)
    \big)^\top
    \big(
    \F_\theta(z)-\F_{\theta_\iter}(z)
    \big)
    \,\mathrm{d}\mu(z).
    \end{equation}
    
    Thus \eqref{eq:foc} becomes
    \begin{equation}\label{eq:foc_expanded}
    \frac{1}{\tau}
    \int
    \big(
    \partial_\theta \F_{\theta_{\iter+1}}(z)
    \big)^\top
    \big(
    \F_{\theta_{\iter+1}}(z)-\F_{\theta_\iter}(z)
    \big)
    \,\mathrm{d}\mu(z)
    +
    \nabla_\theta \mathcal F(\mu_{\theta_{\iter+1}})
    =
    0.
    \end{equation}

    \paragraph{Step 1: Linearization of the metric term.}
    
    Using the $L^2(\mu)$-Fr\'echet differentiability of the mapping
    $\theta \mapsto \F_\theta$, we can write, $\mu$-almost everywhere 
    \begin{equation}\label{eq:F_expand}
    \F_{\theta_{\iter+1}}
    -
    \F_{\theta_\iter}
    =
    \partial_\theta \F_{\theta_{\iter+1}} \, \Delta_\iter
    +
    r_\iter,
    \qquad
    \|r_\iter\|_{L^2(\mu)}
    =
    o(\|\Delta_\iter\|_2).
    \end{equation}

    Plugging \eqref{eq:F_expand} into the integral term in
    \eqref{eq:foc_expanded} yields
    \begin{align}
    \int
    (\partial_\theta \F_{\theta_{\iter+1}})^\top
    (\F_{\theta_{\iter+1}}-\F_{\theta_\iter})
    \,\mathrm{d}\mu
    =
    \int
    (\partial_\theta \F_{\theta_{\iter+1}})^\top
    (\partial_\theta \F_{\theta_{\iter+1}} \Delta_\iter)
    \,\mathrm{d}\mu
    +
    \int
    (\partial_\theta \F_{\theta_{\iter+1}})^\top
    r_\iter
    \,\mathrm{d}\mu.
    \end{align}
    
    The first term equals
    \[
    G(\theta_{\iter+1}) \Delta_\iter.
    \]
    
    The second term can be bounded using Cauchy--Schwarz:
    \[
    \left|
    \int
    (\partial_\theta \F_{\theta_{\iter+1}})^\top
    r_\iter
    \,\mathrm{d}\mu
    \right|
    \le
    \|\partial_\theta \F_{\theta_{\iter+1}}\|_{L^2(\mu)}
    \,
    \|r_\iter\|_{L^2(\mu)}
    =
    o(\|\Delta_\iter\|_2),
    \]
    using the square-integrability of the Jacobian.
    
    Therefore,
    \begin{equation}\label{eq:metric_lin}
    \int
    (\partial_\theta \F_{\theta_{\iter+1}})^\top
    (\F_{\theta_{\iter+1}}-\F_{\theta_\iter})
    \,\mathrm{d}\mu
    =
    G(\theta_{\iter+1}) \Delta_\iter
    +
    o(\|\Delta_\iter\|_2).
    \end{equation}
    
    Plugging \eqref{eq:metric_lin} into
    \eqref{eq:foc_expanded} yields
    \[
    \frac{1}{\tau}
    \Big(
    G(\theta_{\iter+1}) \Delta_\iter
    +
    o(\|\Delta_\iter\|_2)
    \Big)
    +
    \nabla_\theta \mathcal F(\mu_{\theta_{\iter+1}})
    =
    0.
    \]

    \paragraph{Step 2: Limit as $\tau\to 0$.} By definition of the little-$o$ notation, there exists a function $\varepsilon(s)\to 0$ as $s\to 0$ such that
    \[
    o(\|\Delta_\iter\|_2)=\varepsilon(\|\Delta_\iter\|_2)\,\|\Delta_\iter\|_2.
    \]
    Therefore,
    \[
    \frac{o(\|\Delta_\iter\|_2)}{\tau}
    =
    \varepsilon(\|\Delta_\iter\|_2)\,
    \frac{\|\Delta_\iter\|_2}{\tau}.
    \]
    By Assumption \ref{item:theta}, $\|\Delta_\iter\|_2 = O(\tau)$ and thus $\|\Delta_\iter\|_2\to 0$ and $\|\Delta_\iter\|_2/\tau=O(1)$ as $\tau\to 0$. Hence,
    \[
    \frac{o(\|\Delta_\iter\|_2)}{\tau}=o(1).
    \]
    We finally obtain
    \[
    G(\theta_{\iter+1})\frac{\Delta_\iter}{\tau}
    =
    -\nabla_\theta \mathcal F(\mu_{\theta_{\iter+1}})
    +o(1),
    \qquad (\tau\to 0),
    \]
    This shows that the proximal update defines, at first order, an implicit (backward Euler-type) discretization of the preconditioned gradient flow \eqref{eq:precond_gf}.
\end{proof}

We next provide assumptions on the map $\theta\mapsto F_\theta$ under which Assumption \ref{item:theta} in \Cref{prop:jko_precond_gf} is satisfied.

\begin{proposition} \label{prop:assumptions_jko_precond_gf}
    Let $(\theta_\iter)_{\iter \ge 0}$ be the solutions of \eqref{eq:paramJKOreparam}, and assume \ref{item:diff} and \ref{item:jac} from \Cref{prop:jko_precond_gf}. Additionally, assume 
    \begin{enumerate}
        \item \textbf{Local regularity of the objective:}
        The map
        $
        \theta\mapsto \mathcal F(\mu_\theta)
        $
        is differentiable in a neighborhood of $\theta_\iter$, and its gradient is continuous.
        
        \item \textbf{Non-degeneracy of the parametrization:}
        There exists $c>0$ such that for all $\theta\in \mathbb{R}^p$,
        \[
        \|\F_\theta-\F_{\theta_\iter}\|_{L^2(\mu)}^2
        \ge
        c\|\theta-\theta_\iter\|_2^2,
        \]
    \end{enumerate}
    then Assumption \ref{item:theta} in \Cref{prop:jko_precond_gf} is satisfied.
\end{proposition}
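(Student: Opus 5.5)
The natural plan is to combine the minimality of $\theta_{\iter+1}$ with the non-degeneracy bound. This first gives $\|\Delta_\iter\|_2 = O(\sqrt{\tau})$, which forces $\theta_{\iter+1}\to\theta_\iter$. The first-order condition \eqref{eq:foc_expanded} together with the $L^2(\mu)$-differentiability then upgrades this to $O(\tau)$. Throughout, write $\Delta_\iter=\theta_{\iter+1}-\theta_\iter$.

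\textbf{Step 1: a crude bound.} Compare $\Phi_\iter(\theta_{\iter+1})\le\Phi_\iter(\theta_\iter)=\cF(\mu_{\theta_\iter})$. This gives
\[
\frac{1}{2\tau}\|\F_{\theta_{\iter+1}}-\F_{\theta_\iter}\|_{L^2(\mu)}^2 \le \cF(\mu_{\theta_\iter})-\cF(\mu_{\theta_{\iter+1}}).
\]
Combined with non-degeneracy, $\tfrac{c}{2\tau}\|\Delta_\iter\|_2^2\le \cF(\mu_{\theta_\iter})-\inf\cF$. This requires $\theta\mapsto\cF(\mu_\theta)$ to be bounded below, which is natural for divergences ($\cF\ge0$). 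If that is not granted, I would instead localize: for $\tau$ small, any minimizer must lie in a ball around $\theta_\iter$, since outside such a ball the quadratic term $\tfrac{c}{2\tau}\|\theta-\theta_\iter\|^2$ dominates. I expect this localization to be the main obstacle, because local differentiability alone does not control $\cF$ far from $\theta_\iter$. In either case, $\|\Delta_\iter\|_2=O(\sqrt\tau)\to0$, so $\theta_{\iter+1}$ lies in the neighborhood where the assumptions hold.

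\textbf{Step 2: bootstrap to $O(\tau)$.} Since $\theta_{\iter+1}$ is an interior minimizer, \eqref{eq:foc_expanded} holds. Take the inner product with $\Delta_\iter$ and use \eqref{eq:F_expand}, but expand around $\theta_{\iter+1}$ so that the linear term is $\partial_\theta\F_{\theta_{\iter+1}}\Delta_\iter$. The left-hand side becomes
\[
\tfrac1\tau\langle \F_{\theta_{\iter+1}}-\F_{\theta_\iter},\ \partial_\theta\F_{\theta_{\iter+1}}\Delta_\iter\rangle_{L^2(\mu)}=\tfrac1\tau\big(\|\F_{\theta_{\iter+1}}-\F_{\theta_\iter}\|^2_{L^2(\mu)}+O(\|\F_{\theta_{\iter+1}}-\F_{\theta_\iter}\|_{L^2(\mu)}\,o(\|\Delta_\iter\|))\big).
\]
By non-degeneracy, this is at least $\tfrac{c}{\tau}\|\Delta_\iter\|^2(1-o(1))$. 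The right-hand side is $-\langle\nabla_\theta\cF(\mu_{\theta_{\iter+1}}),\Delta_\iter\rangle\le M\|\Delta_\iter\|$. Here $M$ bounds the continuous gradient on a compact neighborhood of $\theta_\iter$, which is legitimate because $\theta_{\iter+1}$ stays in that neighborhood by Step 1.

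Dividing by $\|\Delta_\iter\|$ gives $\|\Delta_\iter\|_2\le \tfrac{M}{c}\tau(1+o(1))$, which is Assumption \ref{item:theta}. Assumption \ref{item:jac} is used to make the $o(\cdot)$ remainders uniform via Cauchy--Schwarz, exactly as in the proof of \Cref{prop:jko_precond_gf}.
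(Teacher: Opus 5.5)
Your outline (localize, then get $O(\tau)$) is sound, and Step 1 handles a point the paper's proof skips. But Step 2 does not follow from the stated hypotheses as written, and it is unnecessary.

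\textbf{The paper's argument, and your Step 1.} The paper's proof is a single chain. Minimality gives your first display \eqref{eq:minimality_bound}. Local Lipschitz continuity of $\theta\mapsto\cF(\mu_\theta)$ bounds its right-hand side by $C\|\Delta_\iter\|_2$. Non-degeneracy bounds its left-hand side below by $\tfrac{c}{2\tau}\|\Delta_\iter\|_2^2$. Dividing gives $\|\Delta_\iter\|_2\le 2C\tau/c$. It never uses the first-order condition or Assumptions \ref{item:diff}--\ref{item:jac}.

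It does simply assert that the Lipschitz bound applies ``for $\tau$ small enough''. That tacitly assumes $\theta_{\iter+1}$ lies in the neighborhood of $\theta_\iter$, which is exactly the localization you isolate in Step 1. Your fix via $\cF\ge 0$ works but adds a hypothesis. You can instead use only the existence of minimizers, and this makes your ``quadratic term dominates'' heuristic rigorous:
\begin{itemize}
\item If $\Phi_\iter$ has a minimizer for some fixed step $\tau_0$, with minimal value $m_0$, then $\cF(\mu_\theta)\ge m_0-\tfrac{1}{2\tau_0}\|\F_\theta-\F_{\theta_\iter}\|_{L^2(\mu)}^2$ for all $\theta$.
\item Inserting this into the minimality inequality and using non-degeneracy gives $c\big(\tfrac{1}{2\tau}-\tfrac{1}{2\tau_0}\big)\|\Delta_\iter\|_2^2\le \cF(\mu_{\theta_\iter})-m_0$ for $\tau<\tau_0$, hence $\|\Delta_\iter\|_2=O(\sqrt\tau)$.
\end{itemize}

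\textbf{The problem with Step 2.} You expand $\F$ around $\theta_{\iter+1}$, a point that moves with $\tau$, and claim the remainder $r_\iter$ is $o(\|\Delta_\iter\|_2)$ as $\tau\to 0$. Assumption \ref{item:diff} only gives $\|\F_{\theta+h}-\F_\theta-\partial_\theta\F_\theta h\|_{L^2(\mu)}=o(\|h\|_2)$ for each \emph{fixed} $\theta$. With a moving base point you need uniform differentiability, for instance continuity of $\theta\mapsto\partial_\theta\F_\theta$ in operator norm near $\theta_\iter$, which the hypotheses do not provide. Square-integrability of the Jacobian together with Cauchy--Schwarz only bounds $\langle \F_{\theta_{\iter+1}}-\F_{\theta_\iter}, r_\iter\rangle_{L^2(\mu)}$ by $\|\F_{\theta_{\iter+1}}-\F_{\theta_\iter}\|\,\|r_\iter\|$. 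It says nothing about how fast $\|r_\iter\|$ vanishes.

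\textbf{The fix.} After Step 1, take a closed ball $\bar B$ around $\theta_\iter$ inside the neighborhood. On $\bar B$, continuity of the gradient gives a bound $\|\nabla_\theta\cF(\mu_\theta)\|_2\le M$. For small $\tau$ the segment $[\theta_\iter,\theta_{\iter+1}]$ lies in $\bar B$, so the mean value inequality gives $\cF(\mu_{\theta_\iter})-\cF(\mu_{\theta_{\iter+1}})\le M\|\Delta_\iter\|_2$. Combined with your first display and non-degeneracy, this yields $\|\Delta_\iter\|_2\le 2M\tau/c$ directly, with no first-order condition and no use of Assumptions \ref{item:diff}--\ref{item:jac}.
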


\begin{proof}
    Since $\theta_{\iter+1}$ minimizes $\Phi_\iter$, we have
    $
    \Phi_\iter(\theta_{\iter+1}) \le \Phi_\iter(\theta_\iter)
    $, this yields
    
    \begin{equation}\label{eq:minimality_bound}
    \frac{1}{2\tau}
    \|\F_{\theta_{\iter+1}}-\F_{\theta_\iter}\|_{L^2(\mu)}^2
    \le
    \mathcal F(\mu_{\theta_\iter})-\mathcal F(\mu_{\theta_{\iter+1}}).
    \end{equation}
    
    Since $\theta\mapsto \mathcal F(\mu_\theta)$ is differentiable with continuous gradient in a neighborhood of $\theta_\iter$, it is locally Lipschitz there. Therefore, for $\tau$ small enough, there exists $C>0$ such that
    \[
    \mathcal F(\mu_{\theta_\iter})-\mathcal F(\mu_{\theta_{\iter+1}})
    \le
    C\|\theta_{\iter+1}-\theta_\iter\|_2
    =
    C\|\Delta_\iter\|_2.
    \]
    
    Moreover, by the non-degeneracy assumption,
    \[
    \|\F_{\theta_{\iter+1}}-\F_{\theta_\iter}\|_{L^2(\mu)}^2
    \ge
    c\|\theta_{\iter+1}-\theta_\iter\|_2^2
    =
    c\|\Delta_\iter\|_2^2.
    \]
    
    Combining the last two inequalities with \eqref{eq:minimality_bound}, we obtain
    \[
    \frac{c}{2\tau}\|\Delta_\iter\|_2^2
    \le
    C\|\Delta_\iter\|_2.
    \]
    If $\Delta_\iter\neq 0$, dividing by $\|\Delta_\iter\|_2$ gives
    \[
    \|\Delta_\iter\|_2 \le \frac{2C}{c}\tau.
    \]
    Hence,
    \begin{equation}\label{eq:Delta_O_tau}
        \|\Delta_\iter\|_2 = O(\tau)
        \qquad\text{as }\tau\to 0.\qedhere
    \end{equation}
\end{proof}

As a direct consequence of \cref{prop:jko_precond_gf}, we obtain the following
interpretation. When solving the reparametrized JKO scheme
\eqref{eq:paramJKOreparam} in practice, we are not merely performing a Euclidean
descent in parameter space. Rather, in the small step regime $\tau\to 0$,
the induced dynamics at the distribution level can be interpreted as a projected Wasserstein
gradient flow of $\mathcal F$.
Accordingly, the parameter dynamics are consistent, at first order, with the
preconditioned gradient flow
\[
\dot\theta(t)
=
-\,G\big(\theta(t)\big)^{-1}
\nabla_\theta \mathcal F(\mu_{\theta(t)}).
\]
Therefore, although $G(\theta)$ is never formed explicitly in practice,
the proximal structure of the JKO step implicitly introduces a Wasserstein
preconditioning of the dynamics when $\tau$ is small.

\paragraph{Strength of the assumptions and practical relevance.}

The assumptions used in \cref{prop:assumptions_jko_precond_gf} are admittedly strong and are generally not satisfied in large-scale neural network models. In particular, the invertibility of the matrix $G(\theta)$ and the non-degeneracy of the parametrization $\theta \mapsto F_\theta$ are structural conditions that rarely hold globally in highly overparameterized settings.

The invertibility assumption on $G(\theta)$ can be relaxed in practice by replacing the inverse with the Moore--Penrose pseudoinverse, as commonly done in \citep{zuo2025numerical, dumont2026learning}. 

The non-degeneracy condition is more restrictive. It requires that small changes in the model output imply proportionally small changes in the parameters, which may fail for highly non-convex parametrizations such as neural networks. 

Despite these limitations, the result remains informative beyond the idealized setting considered in the proposition. In practice, even when the assumptions are violated, the parametric JKO updates empirically exhibit dynamics that closely follow the predicted preconditioned gradient flow when the step size $\tau$ is small, see \cref{subsec:JKOandPrecondinpractice}. 

Therefore, while the assumptions are not expected to hold exactly in neural network architectures, the analysis provides theoretical insight into the mechanism underlying the method and helps explain the stability and qualitative behavior observed in experiments.

\subsection{Application to the Bures--Wasserstein geometry}
\label{appendix:bures_application}

We now illustrate the geometric constructions introduced in the previous subsections on a concrete example, namely the family of Gaussian distributions endowed with the Bures--Wasserstein metric. 
We show that, when restricted to Gaussian distributions with either isotropic covariance matrices or diagonal covariance matrices, the two metrics $G$ and $G_{\mathrm{W}}$, introduced above, coincide.

\paragraph{The Isotropic Bures--Wasserstein metric.}

We consider the manifold of Gaussian distributions with isotropic covariance matrices
$$
\mathrm{IG}
:=
\left\{
\mathcal N(m,\sigma^2 I_d)
:
m\in\R^d,
\quad
\sigma>0
\right\}.
$$

For two Gaussian measures
$
\mu_1
=
\mathcal N(m_1,\sigma_1^2 I_d)$, $
\mu_2
=
\mathcal N(m_2,\sigma_2^2 I_d)$,
 
the squared Bures-Wasserstein distance reduces to
$$
\W_2^2(\mu_1,\mu_2)
=
\|m_1-m_2\|^2
+
d
\big(
\sigma_1
-
\sigma_2
\big)^2.
$$

This expression defines a Riemannian metric on the manifold $\mathrm{IG}$, called the Bures--Wasserstein metric.

\paragraph{Gaussian parametrization.}

Let the reference distribution be the standard Gaussian
$
\mu
=
\mathcal N(0,I_d),
$ 
and consider linear transformations
\begin{equation}
    F_\theta(x)
    =
    \sigma x + m,
    \qquad
    \theta
    =
    (m,\sigma),\text{ where }
m\in\R^d,
\qquad
\sigma>0.
\end{equation}

Since the map $F_\theta$ is linear with invertible matrix $\sigma I_d$, it admits a smooth inverse, which allows us to write explicitly the velocity field
$
\partial_\theta F_\theta\big(F_\theta^{-1}(x)\big).
$
In particular, the pushforward distribution is given by
$
\mu_\theta
=
F_{\theta\#}\mu
=
\mathcal N(m,\sigma^2 I_d).
$

\paragraph{Metric tensor.}

The Jacobian of the parametrization satisfies
\begin{equation}
    \partial_\theta F_\theta
    \big(F_\theta^{-1}(x)\big)h
    =
    a
    +
    \frac{s}{\sigma}
    (x-m),
\end{equation}
for directions
$
h=(a,s),
\;
a\in\R^d,
\;
s\in\R.
$ 

The associated metric tensor therefore reads
\begin{equation}
    G(\theta)
    =
    \int
    \partial_\theta F_\theta(z)^\top
    \partial_\theta F_\theta(z)
    \, \mathrm{d}\mu(z)
    =
    \begin{pmatrix}
        I_d & 0 \\
        0 & d
    \end{pmatrix}.
\end{equation}

\begin{proposition}
In the isotropic Gaussian setting, the matrix $G(\theta)$ coincides with the intrinsic Bures--Wasserstein metric tensor on the manifold $\mathrm{IG}$.
\end{proposition}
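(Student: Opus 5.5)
Two routes are natural: compute the Bures--Wasserstein tensor directly from the closed-form distance, or invoke \Cref{prop:coincidewithWL}. I would do both, since each confirms the other.

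\emph{Direct computation.} The intrinsic metric tensor at $\theta=(m,\sigma)$ is the Hessian at $\theta'=\theta$ of half the squared distance, $\tfrac12\W_2^2(\mu_\theta,\mu_{\theta'})$. Writing $\theta'=\theta+\varepsilon h$ with $h=(a,s)$, the closed form $\W_2^2=\|m-m'\|^2+d(\sigma-\sigma')^2$ gives
\[
\W_2^2(\mu_\theta,\mu_{\theta+\varepsilon h})=\varepsilon^2\big(\|a\|^2+d\,s^2\big),
\]
exactly, not only to leading order. The induced quadratic form in $h$ is therefore $h^\top \mathrm{diag}(I_d,d)\,h$. This is precisely the matrix $G(\theta)$ computed above from $\int \partial_\theta F_\theta(z)^\top\partial_\theta F_\theta(z)\,\mathrm{d}\mu(z)$. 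That computation used $\partial_\theta F_\theta(z)=[I_d\mid z]$, together with $\int z\,\mathrm{d}\mu=0$ and $\int\|z\|^2\,\mathrm{d}\mu=d$.

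\emph{Structural confirmation.} I would then check the hypothesis of \Cref{prop:coincidewithWL}. The realizable velocity fields are
\[
\partial_{m_i}F_\theta\big(F_\theta^{-1}(x)\big)=e_i=\nabla x_i,
\qquad
\partial_\sigma F_\theta\big(F_\theta^{-1}(x)\big)=\tfrac{x-m}{\sigma}=\nabla\Big(\tfrac{\|x-m\|^2}{2\sigma}\Big).
\]
Both are gradients of potentials, and these potentials have at most quadratic growth. Since $\mu_\theta$ is Gaussian, these gradients lie in $L^2(\mu_\theta)$ and the integration by parts is justified. Hence $G_{\mathrm W}(\theta)=G(\theta)$. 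It remains to identify $G_{\mathrm W}$, the pullback of Otto's metric onto $\mathrm{IG}$, with the Riemannian metric whose distance is the Bures--Wasserstein formula. This holds because $\mathrm{IG}$ is geodesically convex in $(\cP_2,\W_2)$: the displacement interpolation between two isotropic Gaussians is $\mathcal N\big((1-t)m_1+tm_2,\,((1-t)\sigma_1+t\sigma_2)^2I_d\big)$. The restricted Wasserstein distance therefore equals the induced length distance. Its local quadratic expansion gives the same tensor.

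The main obstacle is conceptual rather than computational: stating precisely what ``intrinsic Bures--Wasserstein metric tensor'' means. I would fix it as the tensor obtained from the second-order expansion of $\W_2^2$ restricted to $\mathrm{IG}$. The computation above then closes the argument. The geodesic convexity remark only reconciles this definition with the pullback $G_{\mathrm W}$.
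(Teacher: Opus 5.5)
Your proposal is correct, and your second route is exactly the paper's proof. The paper observes that $v_h(x)=a+\frac{s}{\sigma}(x-m)$ is the gradient of $\psi(x)=a^\top x+\frac{s}{2\sigma}\|x-m\|^2$ and invokes \Cref{prop:coincidewithWL} to get $G=G_{\mathrm W}$. It then simply asserts that the pullback of the Wasserstein metric is the Bures--Wasserstein metric.

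Your primary argument is genuinely different: you read the tensor off the closed-form distance $\|m-m'\|^2+d(\sigma-\sigma')^2$. This is more elementary, since it needs no elliptic equation and no integration by parts. It also matches how the paper itself introduces the Bures--Wasserstein metric on $\mathrm{IG}$, namely as the metric generated by that distance formula. Your geodesic-convexity remark additionally supplies the step the paper leaves implicit: why $G_{\mathrm W}$ on $\mathrm{IG}$ is the metric of that distance.

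A one-line bridge links your two routes. The map $x\mapsto x+\varepsilon v_h(x)$ is affine with slope $(1+\varepsilon s/\sigma)I_d$, so for small $\varepsilon$ it is the OT map from $\mu_\theta$ to $\mu_{\theta+\varepsilon h}$. Hence $\W_2^2(\mu_\theta,\mu_{\theta+\varepsilon h})=\varepsilon^2\|v_h\|_{L^2(\mu_\theta)}^2=\varepsilon^2 h^\top G(\theta)h$ exactly, which explains your ``exactly, not only to leading order''.

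What the paper's route buys is structural insight rather than a computation. It isolates the gradient-field property as the reason $G$ and $G_{\mathrm W}$ agree, which transfers verbatim to the diagonal case. It also explains why the two differ for generic neural-network parametrizations. Your direct route, by contrast, relies on having a closed form for $\W_2$ on the family.
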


\begin{proof}

From the previous computation,
\begin{equation}
    v_h(x) = \partial_\theta F_\theta
    \big(F_\theta^{-1}(x)\big)
    h
    =
    a
    +
    \frac{s}{\sigma}(x-m).
\end{equation}

This velocity field is the gradient of the quadratic function
\begin{equation}
    \psi(x)
    =
    a^\top x
    +
    \frac{s}{2\sigma}
    \|x-m\|^2.
\end{equation}

Hence admissible velocity fields belong to the gradient subspace of the Wasserstein tangent space. By \cref{prop:coincidewithWL}, the induced metric tensor therefore coincides with the pullback of the Wasserstein metric, which is the Bures--Wasserstein metric on the Gaussian manifold.
\end{proof}

\paragraph{Bures--Wasserstein gradient.}

Let
\begin{equation}
    g(x)
    :=
    \nabla
    \frac{\delta \cF}{\delta \mu}
    (\mu_\theta)(x)
\end{equation}
denote the Wasserstein gradient.

The Riemannian gradient on the parametric manifold, with respect to our metric, is given in \cref{prop:riemaniangradManifold} by
\begin{equation}
    v_\theta
    =
    -
    \widetilde{\mathcal H}_\theta g.
\end{equation}
We now compute this operator explicitly.

From the definition of the projection operator,
\begin{equation}
    \widetilde{\mathcal H}_\theta g
    =
    \partial_\theta F_\theta
    \big(F_\theta^{-1}(\cdot)\big)
    G(\theta)^{-1}
    \int
    \partial_\theta F_\theta(z)^\top
    g\big(F_\theta(z)\big)
    \, \mathrm{d}\mu(z).
\end{equation}

Using the explicit form, for $h=(a,s)$,
\begin{equation}
    \partial_\theta F_\theta
    \big(F_\theta^{-1}(x)\big)
    h
    =
    a
    +
    \frac{s}{\sigma}(x-m),
\end{equation}
we obtain
\begin{equation}
    \nabla_{\mathcal M_\mu}
    \cF(\mu_\theta)(x)
    =
    \alpha
    +
    \beta
    (x-m),
\end{equation}
with coefficients
\begin{equation}
    \alpha
    =
    \int
    g(x)
    \, \mathrm{d}\mu_\theta(x),\quad
    \beta
    =
    \frac{1}{d\sigma^2}
    \int
    g(x)
    \cdot
    (x-m)
    \, \mathrm{d}\mu_\theta(x).
\end{equation}

This is the well-known expression of the Bures--Wasserstein gradient, see, e.g., \citep[Appendix A.4]{petit-talamon2026variational}.
Hence, we recover it here as a direct application of the projection operator introduced in \cref{sec:theory}.

The same construction can be carried out in the diagonal Gaussian setting.
\section{Practical Algorithm for GWF}
\label{appendix:PracticalAlgorithm}

In this appendix, we detail the practical ingredients required to implement the Generative Wasserstein Flow (GWF) framework introduced in the main text.
We successively describe the losses associated with each divergence, the regularization strategies used for the discriminator, and a generic training algorithm covering all considered objectives. We denote by $h_\phi$ a neural network modeling the critic, and by $\T_\theta$ a neural network modeling the transport map (or generator).

\paragraph{Losses.}
As explained in \cref{sec:jko_fdiv} and \cref{sec:jko_mmd}, GWF can be derived for IPMs and $f$-divergences.
We detail here the losses used in our implementation for the different divergences considered throughout the experiments.
Let $(y_j)_{j=1}^n \sim \nu$ denote real samples and $(z_j)_{j=1}^n \sim \mu_0$ noise samples, with generated samples given by $x_j = \T_\theta(z_j)$.
We denote by $d_{\text{real},j} = h_\phi(y_j)$ and $d_{\text{fake},j} = h_\phi(x_j)$ the discriminator outputs on real and generated samples, respectively.

\paragraph{Losses for $f$-divergences.}
For $f$-divergences, the losses directly follow from the semi-dual of the UOT formulation \eqref{eq:SJKO2} and the variational representations derived in \cref{Appendix:VarFormfdiv}.
They coincide with the standard $f$-GAN objectives applied to the reverse divergence, with a JKO regularization term.

\begin{align*}
\loss_{h}^{\mathrm{KL}}
&=
\frac{1}{n}\sum_{j=1}^n
\Big(
d_{\text{fake},j}
+ e^{- d_{\text{real},j} - 1}
\Big), \\[0.3em]
\loss_{h}^{\mathrm{KL\text{-}DV}}
&=
\frac{1}{n}\sum_{j=1}^n d_{\text{fake},j}
\;+\;
\log\!\left(
\frac{1}{n}\sum_{j=1}^n e^{- d_{\text{real},j}}
\right), \\[0.3em]
\loss_{h}^{\mathrm{JS}}
&=
\frac{1}{n}\sum_{j=1}^n
\Big(
(d_{\text{fake},j})_+
+
(-d_{\text{real},j})_+
\Big), \\[0.3em]
\loss_{h}^{\chi^2}
&=
\frac{1}{n}\sum_{j=1}^n
\Big(
d_{\text{fake},j}
+ \tfrac14 d_{\text{real},j}^2
- d_{\text{real},j}
\Big).
\end{align*}

\begin{equation*}
\loss_{\T}^{f\text{-div}}
=
-\,\frac{1}{n}\sum_{j=1}^n d_{\text{fake},j}.
\end{equation*}

\paragraph{Losses for $\MMD^2$ and kernel-based objectives.}
For the squared MMD objective, the losses follow from the equivalence between our \cref{alg:mmd_wpgan} and the JKO-regularized MMD GAN, presented in \cref{appendix:jko_mmd_gan}.
We denote by $k$ the (possibly learned) kernel described in \cref{Appendix:ImplementationDetails}.

\begin{align*}
\loss_{h}^{\MMD}
&= - \MMD_{k}^2(d_{\text{real}}, d_{\text{fake}}), \\
\loss_{\T}^{\MMD}
&= \MMD_{k}^2(d_{\text{real}}, d_{\text{fake}}).
\end{align*}

In our experiments, we also consider alternative objectives related to $\MMD^2$, namely sMMD \citep{sMMDGAN2018} and ckMMD \citep{zhang2025ckgan}.
The sMMD objective controls the Lipschitz constant of the critic through a learnable scaling factor, while ckMMD considers an extended function class, resulting in a computable lower bound of the MMD.
These variants lead to the following losses.

\begin{align*}
\loss_{h}^{\text{sMMD}}
&= - s \, \MMD_{k}^2(d_{\text{real}}, d_{\text{fake}}), \\
\loss_{\T}^{\text{sMMD}}
&= s \, \MMD_{k}^2(d_{\text{real}}, d_{\text{fake}}),
\end{align*}

where $s$ denotes the scaling factor introduced in \citet{sMMDGAN2018}.

\begin{align*}
\loss_{h}^{\text{ckMMD}}
&=
-\,\frac{1}{n}\sum_{j=1}^n
\Big(
k(z_j, d_{\text{real},j})
-
k(z_j, d_{\text{fake},j})
\Big), \\[0.3em]
\loss_{\T}^{\text{ckMMD}}
&=
-\,\frac{1}{n}\sum_{j=1}^n
k(z_j, d_{\text{fake},j}).
\end{align*}

\paragraph{Losses for Wasserstein-1.}
As discussed in \cref{sec:jko_mmd}, the GWF framework can also be instantiated for Integral Probability Metrics (IPMs) such as the Wasserstein-1 distance.
In this case, we rely on the semi-unbalanced optimal transport (sUOT) formulation for IPMs derived in \cref{paragraph:sUOTIPM},
\begin{equation}
    \sUOT_c(\mu,\nu)
    = \max_{g \in \cG(\lambda_2)}
    \int g^c \, \mathrm{d}\mu
    + \int g \, \mathrm{d}\nu .
\end{equation}

At the $\iter$-th JKO step, the $c$-transform $g^c$ can be parameterized by a measurable map $\T$ as
\(
g^c(x) = \inf_{\T} \, c\big(x,\T(x)\big) - g\big(\T(x)\big).
\)
Introducing the rescaled potential $h = \lambda_2 g = 2\tau g$, this yields the following problem:
\begin{equation}
    \sup_{h \in \cG(1)} \;
    \inf_{\T}
    \int \Big( \tfrac{1}{2\tau}\|\id - \T\|_2^2 - h \circ \T \Big)
    \, \mathrm{d}\mu_{\iter}
    + \int h \, \mathrm{d}\nu .
\end{equation}

Enforcing the $1$-Lipschitz constraint on the discriminator therefore leads to the following losses (without the JKO regularization):
\begin{align*}
\loss_{h}^{\W_1}
&=
\frac{1}{n}\sum_{j=1}^n
\Big(
d_{\text{fake},j}
-
d_{\text{real},j}
\Big), \\[0.3em]
\loss_{\T}^{\W_1}
&=
-\,\frac{1}{n}\sum_{j=1}^n
d_{\text{fake},j}.
\end{align*}
In practice, this corresponds to a JKO-regularized WGAN.

\paragraph{Regularization of the discriminator.}
GWF are closely related to GAN-based models (see \cref{paragraph:linkGAN-JKO}).
For all considered divergences, the discriminator is required to satisfy a certain degree of regularity in order to stabilize the adversarial optimization, as in GANs.
In particular, for the Wasserstein-1 distance, the discriminator must be $1$-Lipschitz.

During training, we enforce this regularity by adding an $R_1$ gradient penalty to the discriminator loss.
The resulting discriminator objective takes the form
\begin{equation}
    \loss_{h} \;\leftarrow\; \loss_{h}
    \;+\;
     \lambda_{\mathrm{gp}}(\|\nabla_z h_\phi(z)\|_2-c)^2,
\end{equation}
where $\lambda_{\mathrm{gp}}$ denotes the gradient penalty weight, and $z$ denotes either a real sample $y \sim \nu$ or an interpolated sample between real and generated data. $c=1$ only when the chosen divergence is $\W_1$ (and $c=0$ otherwise).

\begin{algorithm}[tb]
\caption{Training Procedure of Generative Wasserstein Flows}
\label{alg:genericalgoGWF}
\begin{algorithmic}
\STATE \textbf{Inputs:} number of outer (JKO) steps $K$, number of inner steps $N$, step size $\tau$, divergence $D$, steps sizes $\eta_h$ and $\eta_\T$
\STATE Initialize $\T_{\theta_0} = \id$ 
\FOR{$\iter = 1, \ldots, K$}
    \FOR{$i = 1, \ldots, N$}
        \STATE Sample noise $(z_j)_{j=1}^n \sim \mu_0$ and real samples $(y_j)_{j=1}^n \sim \nu$
        \STATE Compute $x = (\T_{\theta_{\iter}^{i}}(z_j))_{j=1}^n$
        \STATE Compute
        $
            d_{\text{real}} = h_{\phi_{\iter}^{i}}(y),
            \quad
            d_{\text{fake}} = h_{\phi_{\iter}^{i}}(x)
        $
        \STATE Update: 
        \[
            \phi_{\iter}^{i+1}
            =
            \phi_{\iter}^{i}
            -
             \eta_h \nabla_\phi
            \loss_{h}^{D}(d_{\text{real}}, d_{\text{fake}})
        \]
        
        \STATE Sample noise $(z_j)_{j=1}^n \sim \mu_0$ and real samples $(y_j)_{j=1}^n \sim \nu$
        \STATE Compute $x = (\T_{\theta_{\iter}^{i}}(z_j))_{j=1}^n$
        \STATE Compute 
        $
            d_{\text{real}} = h_{\phi_{\iter}^{i+1}}(y),
            \quad
            d_{\text{fake}} = h_{\phi_{\iter}^{i+1}}(x)
        $
        \STATE Define the JKO objective
        \[
            J(\theta_{\iter}^{i})
            =
            \frac{1}{n}
            \sum_{j=1}^n
            \Big(
            \frac{1}{2\tau}
            \big\|
            \T_{\theta_{\iter-1}^{N}}(z_j)
            -
            x_j
            \big\|_2^2
            +
            \loss_{\T}^{D}(d_{\text{real}}, d_{\text{fake}})
            \Big)
        \]
        \STATE Update:
        \[
            \theta_{\iter}^{i+1}
            =
            \theta_{\iter}^{i}
            -
             \eta_\T \nabla_\theta J(\theta_{\iter}^{i})
        \]
    \ENDFOR
    \STATE Set $\theta^1_{\iter+1} \leftarrow \theta_{\iter}^{N}$ and $\phi^1_{\iter+1} \leftarrow \phi_{\iter}^{N}$
\ENDFOR
\end{algorithmic}
\end{algorithm}

\paragraph{GWF Algorithm.}
Using the reparametrization trick introduced in \cref{section:gans}, together with the losses derived above and the discriminator regularization, we can define a generic training algorithm for Generative Wasserstein Flows.
The resulting procedure is summarized in \cref{alg:genericalgoGWF}.
It corresponds to a JKO regularization of the reverse $f$-divergence GAN objectives, and to a JKO regularization of GAN training for symmetric objectives such as $\MMD^2$ and $\W_1$.

\section{Implementation Details}
\label{Appendix:ImplementationDetails}

In this section, we provide implementation details for all experiments presented in the paper.
We describe the architectures used for the transport maps and critics, the optimization of kernels for MMD-based objectives, the training and optimization setup, and the evaluation protocol used during experiments.

\paragraph{Architectures for generators and discriminators.}
Throughout our experiments, we use several backbone architectures to parametrize both the transport map $\T$ (generator) and the critic $h$.
Each generator is paired with a discriminator of comparable capacity, i.e., with the same order of magnitude in parameter count and standard architectural design choices.

\begin{itemize}
    \item \textbf{U-Net (``UNet'').}
    The generator is a U-Net with approximately 18M parameters, adapted with minor modifications from the diffusion U-Net architecture of \citet{ho2020denoising}.
    The corresponding discriminator is a ResNet-style architecture with stride-2 downsampling stages, an attention block at the $8{\times}8$ (or $7{\times}7$) resolution, global sum pooling, and a linear head.
    The base channel width of the discriminator is decoupled from that of the generator.

    \item \textbf{NCSN++-style (``LargeNet'').}
    The generator is a large network with approximately 48M parameters, adapted with light modifications from the public score-based implementation of \citet{song2021scorebased}, and previously used in \citet{choi2024scalable}.
    The associated discriminator follows a small-image NCSN++-style design, with successive downsampling blocks, a minibatch standard-deviation channel, global sum pooling, and a linear head.

    \item \textbf{ResNet (``SmallNet'').}
    The generator is a compact residual network with approximately 1M parameters, also used in \citet{choi2024scalable}.
    The corresponding discriminator is a lightweight ResNet composed of an initial downsampling residual block, followed by several residual blocks, ReLU nonlinearities, global sum pooling, and a linear head.

    \item \textbf{2D MLP.}
    For low-dimensional experiments, both the generator and the discriminator are implemented as simple multilayer perceptrons.

    \item \textbf{ResNet MMD GAN.}
For MMD-based GAN experiments, we use the ResNet generator and discriminator architecture proposed in \citet{zhang2025ckgan}.
The generator contains approximately $4$M parameters.
\end{itemize}

\paragraph{Optimization of the kernel}
For MMD-based objectives, the choice of the kernel plays a crucial role.
Rather than relying on a single fixed kernel, we follow the approach of \citet{zhang2025ckgan} and consider a learnable combination of positive semi-definite kernels.

More precisely, we define the kernel as
\begin{equation}
    k(x,y) \;=\; \sum_{i=1}^m w_i \, k_i(x,y),
\end{equation}
where $\{k_i\}_{i=1}^m$ is a collection of predefined kernels and $w_i \geq 0$ are learnable weights.
They are optimized jointly with both the critic and generator parameters through backpropagation.

In our experiments, we consider the following sum of kernels:
\begin{itemize}
    \item a Gaussian kernel with fixed bandwidth: $k_{\text{Gauss}}(x,y) 
= \exp\!\left(-\frac{\|x-y\|_2^2}{2\sigma^2}\right),$
    \item a mixture of RBF kernels at multiple scales: $
k_{\text{RBF-mix}}(x,y) 
= \sum_{q=1}^{K} \exp\!\left(-\frac{\|x-y\|_2^2}{2\sigma_q^2}\right),
\quad \sigma_q = 2^{q-1}\sigma_0
$,
    \item a Laplacian kernel: $k_{\text{Lap}}(x,y) 
= \exp\!\left(-\frac{\|x-y\|_1}{\sigma}\right)$,
    \item an exponential kernel: $k_{\text{Exp}}(x,y) = \exp\!\left(-\frac{\|x-y\|_2}{\sigma}\right)$,
    \item Matérn kernels with smoothness parameters $3/2$: $k_{\text{Matérn-}3/2}(x,y) 
= \alpha \left(1 + \frac{\sqrt{3}\|x-y\|_2}{\ell}\right)
\exp\!\left(-\frac{\sqrt{3}\|x-y\|_2}{\ell}\right)$, 
    \item a positive semi-definite Riesz kernel: $k_{\text{Riesz}}(x,y) 
= -\|x-y\|_2 + \|x\|_2 + \|y\|_2$.
\end{itemize}

\paragraph{Optimization.}
We train all models with the Adam optimizer (betas \((0.5,\,0.9)\)). 
Learning rates are \(\eta_T = 2\times 10^{-4}\) for the generator \(T\) and \(\eta_h = 1\times 10^{-4}\) for the discriminator \(h\).
We apply a cosine annealing scheduler with minimum learning rate \(\eta_{\min} = 5\times 10^{-5}\) for both networks. 
For \emph{U-Net} and \emph{LargeNet} we also maintain an exponential moving average (EMA) of the parameters with decay \(0.9999\), which is known to improve stability and evaluation performance in adversarial training \citep{Polyak1992ema,yaz2019unusual}. 

\paragraph{Evaluation}
\label{Appendix:Evaluation}

To evaluate we mainly use the Fréchet Inception Distance (FID) \cite{heusel2017gans}. Images are resized to $299\times 299$, the expected input of Inception-v3.
For MNIST, we replicate the single channel to RGB before resizing.
We use the same preprocessing for all models and, unless noted, the same number of generated samples as in the reference set for metric computation. We use the \texttt{pytorch-fid} package to compute these values.

\section{Additional Experiments} 
\label{appendix:xps}

\looseness=-1 This appendix presents additional experiments illustrating the theoretical results of \cref{sec:theory} and \cref{appendix:wuchen_li_metric,appendix:calrifictionJKOpractice} through a 2d comparison between the preconditioned, the unpreconditioned flow and the Natural Wasserstein Gradient Flow, the limitations of \cref{alg:primal_dual_mmd_samples}, and complementary experiments and visualizations for Generative Wasserstein Flows.

\subsection{Preconditioned Gradient Flow and Practical JKO}
\label{subsec:JKOandPrecondinpractice}

In this section, we provide numerical illustrations of the connection between
the practical JKO scheme and the preconditioned gradient flow discussed in
\cref{sec:theory}. We consider the parametric setting
\begin{equation}
\label{eq:Appendix_parametricflowsetting}
    \min_{\theta\in\R^p}\ \cF(\mu_\theta),
    \qquad
    \mu_\theta = (\F_\theta)_\# \mu .
\end{equation}
As discussed in \cref{paragraph:reparamTrick}, the JKO scheme associated with
this parametrization can be written as
\begin{equation}
\label{eq:appendix_JKO_reparam}
    \theta_{\ell+1}
    =
    \arg\min_{\theta\in\R^p}
    \frac{1}{2\tau}
    \|\F_\theta-\F_{\theta_\ell}\|_{L^2(\mu)}^2
    +
    \cF(\mu_\theta).
\end{equation}
Alternatively, one may ignore the Wasserstein geometry and consider the standard
Euclidean gradient flow
\begin{equation}
\label{eq:appendix_euclideanGF}
    \dot{\theta}_t
    =
    - \nabla_\theta \cF(\mu_{\theta_t}) .
\end{equation}

The result of \cref{prop:jko_precond_gf} shows that, under suitable regularity
assumptions and in the small-step regime, the practical JKO scheme is related
to the preconditioned gradient flow
\begin{equation}
\label{eq:precondGF_Appendix2}
    \dot{\theta}_t
    =
    - G(\theta_t)^{-1}
    \nabla_\theta \cF(\mu_{\theta_t}),
\end{equation}
where
\begin{equation}
\label{eq:G_metric_appendix}
    G(\theta)
    =
    \int
    \partial_\theta \F_\theta(z)^\top
    \partial_\theta \F_\theta(z)
    \, \mathrm{d}\mu(z)
    \in \R^{p\times p}.
\end{equation}
\looseness=-1 The goal of the following experiments is to illustrate this connection in two
settings: first in a two-dimensional example where parameter trajectories can be
visualized, and then in a small neural-network example closer to our generative
modeling experiments.

\begin{figure}[t]
    \centering

    \begin{subfigure}{0.48\linewidth}
        \centering
        \includegraphics[width=\linewidth]{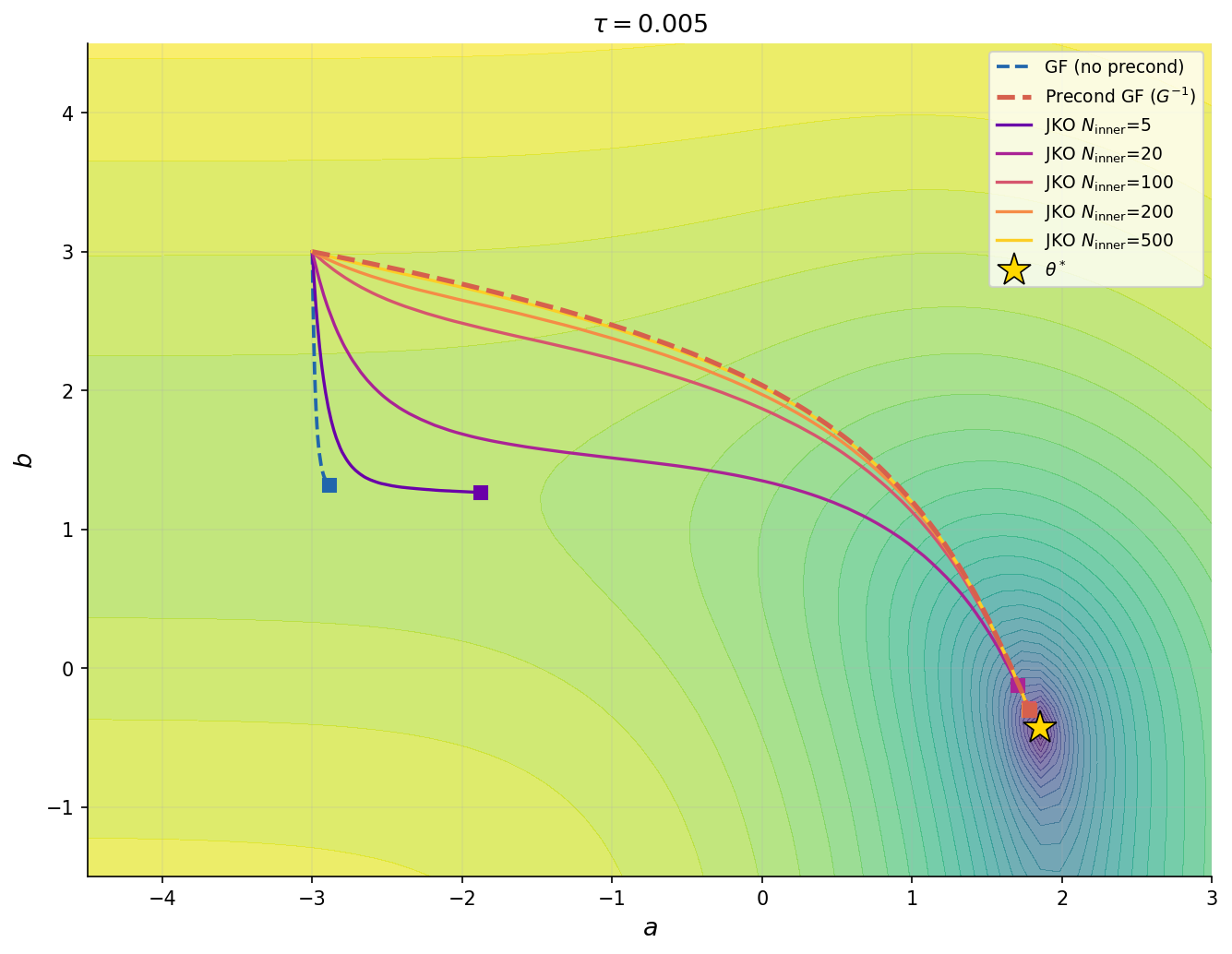}
        \caption{
        Two-dimensional toy example. Increasing $N_{\mathrm{inner}}$ makes the
        practical JKO trajectory closer to the preconditioned gradient flow.
        }
        \label{fig:JKOpractice2d}
    \end{subfigure}
    \hfill
    \begin{subfigure}{0.48\linewidth}
        \centering
        \includegraphics[width=\linewidth]{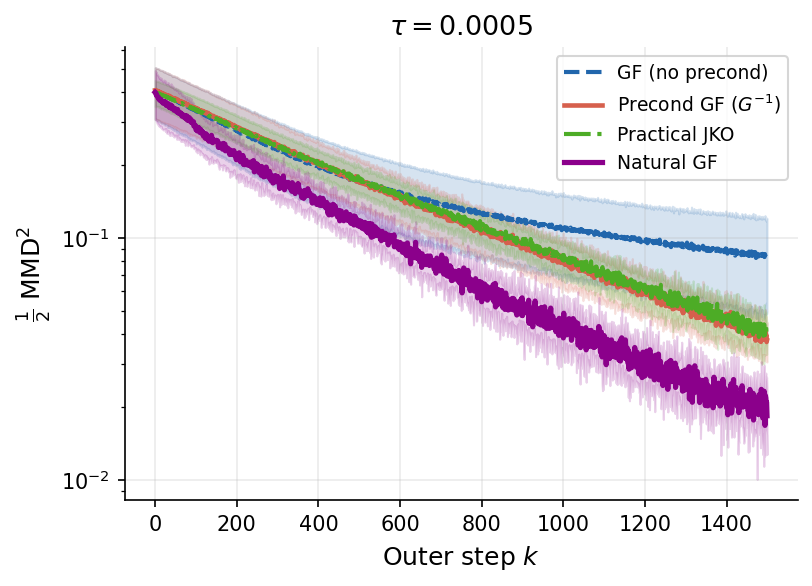}
        \caption{
        Neural-network example. For small $\tau$, practical JKO and the
        preconditioned gradient flow display similar behavior, while the
        Euclidean gradient flow converges more slowly.
        }
        \label{fig:JKOpracticeNN}
    \end{subfigure}

    \caption{
    Numerical illustration of the connection between practical JKO and the
    preconditioned gradient flow. Left: parameter trajectories in a
    two-dimensional toy example. Right: decay of the squared MMD objective for a
    shallow neural-network generator.}
    \label{fig:JKOpractice}
\end{figure}

\paragraph{A two-dimensional toy example.}

We first consider a simple example in which the evolution of the parameters can
be visualized explicitly. Let
\begin{equation}
    \F_{(a,b)}(z)
    =
    \big(\softplus(a) z_1,\ \softplus(b) z_2 \big),
    \qquad
    z \sim \mathcal N(0,I_2),
\end{equation}
and consider the Gaussian target distribution
\begin{equation}
    \nu
    =
    \mathcal N\big(0,\diag(\sigma_1^2,\sigma_2^2)\big),
    \qquad
    \sigma_1=2,
    \qquad
    \sigma_2=0.5 .
\end{equation}
We minimize the squared MMD functional with Riesz kernel between $\mu_\theta$ and $\nu$. In this
case, the optimal parameters are known explicitly ( $a^\star = \softplus^{-1}(2),
    b^\star = \softplus^{-1}(0.5)$).

We compare the Euclidean gradient flow in
\cref{eq:appendix_euclideanGF}, the preconditioned gradient flow in
\cref{eq:precondGF_Appendix2}, the practical JKO scheme in
\cref{eq:appendix_JKO_reparam}, and the kernelized approximation of the
Wasserstein natural gradient flow proposed by \citet{arbel2019kernelized}. For
the practical JKO scheme, the inner optimization problem is solved by gradient
descent, and we vary the number of inner steps $N_{\mathrm{inner}} \in \{1,5,20,50,100,200\}.$
Increasing $N_{\mathrm{inner}}$ corresponds to solving the JKO subproblem more
accurately. We use a step size $\lambda_2=2\tau=0.01$ and plot the resulting
trajectories together with the MMD landscape and the optimal parameter
$\theta^\star=(a^\star,b^\star)$ in \cref{fig:JKOpractice2d}.

When $N_{\mathrm{inner}}=1$, the inner optimization is initialized at
$\theta_\ell$, and the gradient of the proximal term vanishes at this point.
Therefore, the resulting update exactly coincides with one Euclidean gradient
descent step on $\cF(\mu_\theta)$. As
$N_{\mathrm{inner}}$ increases, the practical JKO trajectory becomes closer to
the preconditioned gradient flow. This is consistent with
\cref{prop:jko_precond_gf}: when the JKO subproblem is solved more accurately
and $\tau$ is small, the practical JKO update follows the geometry induced by
$G(\theta)$ rather than the standard Euclidean geometry in parameter space.

\paragraph{A neural-network example.}

We then consider a setting closer to the generative modeling experiments. The
target distribution is a two-dimensional distribution supported on three
circles, and the objective $\cF$ is the squared MMD functional with a Riesz
kernel. The generator $\F_\theta$ is parametrized by a shallow neural network
with approximately $200$ parameters.

As in the previous experiment, we compare four optimization strategies:
\begin{itemize}
    \item the Euclidean gradient flow in \cref{eq:appendix_euclideanGF};
    \item the preconditioned gradient flow in \cref{eq:precondGF_Appendix2};
    \item the practical JKO scheme in \cref{eq:appendix_JKO_reparam}, where the
    inner subproblem is solved by gradient descent for a fixed number of
    iterations;
    \item a kernelized approximation of the Wasserstein natural gradient flow,
    following \citet{arbel2019kernelized}.
\end{itemize}
We use $K=1500$ outer optimization steps and
$N_{\mathrm{inner}}=50$ inner steps to solve the JKO subproblem. The JKO step
size is set to $\tau=0.0005$. Since $G(\theta)$ may be singular for neural
networks, we use the Moore--Penrose pseudoinverse $G(\theta)^\dagger$ in the
preconditioned gradient flow.

The evolution of the squared MMD objective is reported in
\cref{fig:JKOpracticeNN}. For small $\tau$, and when the inner optimization is
solved with enough gradient steps, the practical JKO scheme behaves similarly
to the preconditioned gradient flow. In contrast, the Euclidean gradient flow
converges more slowly, suggesting that the Euclidean geometry in parameter
space is poorly adapted to the distributional objective.

The kernelized Wasserstein natural gradient flow follows
a different dynamics from the preconditioned gradient flow. This is consistent
with \cref{prop:coincidewithWL}: the metric $G(\theta)$ induced by the
parametrization coincides with the Wasserstein information metric only when the parameter-induced velocity fields are gradient fields. This condition is not
expected to hold for a general neural-network parametrization. 

\looseness=-1 Finally, we emphasize that each JKO step requires solving an inner optimization
problem, here with $N_{\mathrm{inner}}=50$ gradient descent iterations, making
the method significantly more computationally expensive than a single gradient flow update. Nevertheless, for large neural networks, explicitly forming and inverting the matrix $G(\theta)$ is prohibitive. In this regime, practical JKO
can be viewed as a tractable way to approximate the preconditioned gradient
flow without explicitly computing $G(\theta)$.

\subsection{Limitations of \cref{alg:primal_dual_mmd_samples}}
\label{appendix:xpsLimitationAlgo1}

In this section, we conduct experiments on MNIST to assess the practical behavior of \cref{alg:primal_dual_mmd_samples}.
In low-dimensional settings, such as two-dimensional toy datasets, this algorithm performs well (see \cref{subsec:JKOandPrecondinpractice}).
For higher-dimensional data such as MNIST, the resulting samples are of poor quality.

Indeed, in \cref{alg:primal_dual_mmd_samples}, we update the generator parameters with

\[
-\nabla_\theta \int g\big(\T_\theta(x)\big)\, \mathrm{d}\mu_\iter(x),
\]
which is equal to $\nabla_\theta \MMD^2(\T_\theta\#\mu_\iter,\nu)$.
Hence this amounts to minimizing a parametrized, regularized squared MMD objective, similarly to Generative Moment Matching (GMM) models \citep{li2015generative}, up to an additional regularization term. As a result, the method lacks a sufficiently expressive critic to discriminate complex image distributions.

Nevertheless, we emphasize that MMD flows themselves can perform well in high-dimensional settings. 
In particular, recent work \citep{hertrich2024generative} demonstrates that particle-based MMD flows can generate realistic images.
However, their approach follows a different paradigm from ours. 
In their method, the MMD flow is first simulated directly at the particle level, producing trajectories that transport samples toward the data distribution. 
A neural network is then trained in a second stage to regress this transport map from the simulated particle trajectories.
By contrast, in \cref{alg:primal_dual_mmd_samples}, we directly learn a parametric transport map without relying on explicit particle trajectories. We hypothesize that this direct parametric learning problem is significantly more challenging, especially in high-dimensional settings.

To partially alleviate this issue, we experimented with enriching the kernel by incorporating a learned feature representation.
Specifically, we introduce a small encoder trained via an autoencoding objective, and define the kernel in the corresponding latent space, as in \cite{deng2026generativemodelingdrifting}. This modification leads to some qualitative improvement, but the generated samples remain unsatisfactory.

Following the ideas of improved GMM models \citep{li2017mmd}, we therefore move towards learning the kernel embedding itself through an adversarial objective.
This leads to the JKO-regularized MMD GAN formulation described in \cref{alg:mmd_wpgan}, whose empirical performance is discussed in \cref{sec:experiments}.

Qualitative results illustrating these claims are shown in \cref{fig:mmd_failure_modes}.

\begin{figure}[h]
    \centering
    \begin{subfigure}[t]{0.30\linewidth}
        \centering
        \includegraphics[width=\linewidth]{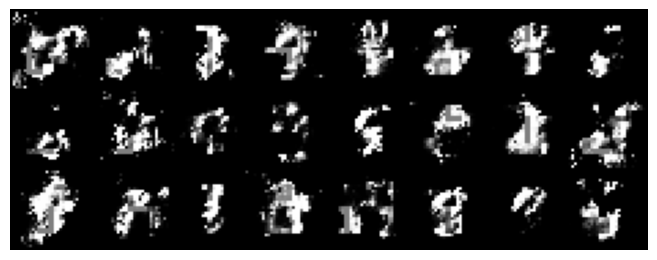}
        \caption{\cref{alg:primal_dual_mmd_samples} }
    \end{subfigure}
    \hfill
    \begin{subfigure}[t]{0.30\linewidth}
        \centering
        \includegraphics[width=\linewidth]{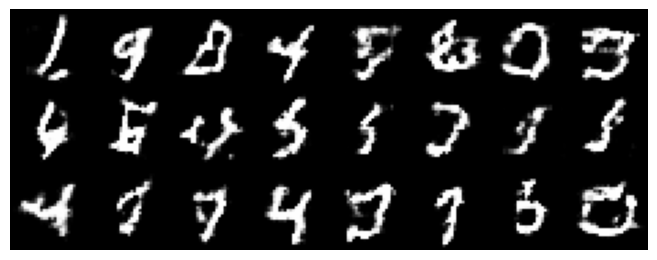}
        \caption{\cref{alg:primal_dual_mmd_samples} + AutoEncoder}
    \end{subfigure}
    \hfill
    \begin{subfigure}[t]{0.30\linewidth}
        \centering
        \includegraphics[width=\linewidth]{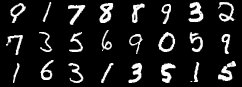}
        \caption{\cref{alg:mmd_wpgan}}
    \end{subfigure}

    \caption{
     Generated samples obtained with $\MMD^2$-based generative methods on MNIST.
    }
    \label{fig:mmd_failure_modes}
\end{figure}

\subsection{Additional Results on GWF}
\label{subsec:additionalExpeGWF}

In this section, we report additional experimental results that complement the main experiments and further support the conclusions of the paper. These results analyze the role of inner optimization accuracy in the JKO scheme, compare the behavior of KL and reverse KL formulations, provide complementary quantitative evaluations, and assess the additional computational cost and qualitative behavior of GWF.

\paragraph{Impact of inner optimization accuracy in GWF.}
We investigate the importance of accurately solving the inner optimization problem arising at each JKO step.
To this end, we vary the number of inner optimization steps $N \in \{100, 1000, 2000, 3000\}$ for \cref{alg:genericalgoGWF} and analyze the resulting sample quality.
The evolution of the FID as a function of $N$ is shown in \cref{fig:all_div_inner}, and quantitative results are summarized in \cref{tab:inner_steps_fid}.

We observe that the number of inner optimization steps has a critical impact on the performance of GWF across all considered divergences.
Using too few inner steps ($N=100$) leads to severely degraded performance, indicating that the JKO subproblem is poorly approximated in this regime.

As $N$ increases, the FID consistently improves and eventually stabilizes, with diminishing returns beyond $N \approx 2000$ for most divergences.
This suggests that a sufficiently accurate resolution of the inner problem is necessary for the JKO regularization to be effective, while excessively large values of $N$ provide limited additional benefit relative to their computational cost.

\begin{figure*}[h]
    \centering
    \includegraphics[width=\linewidth]{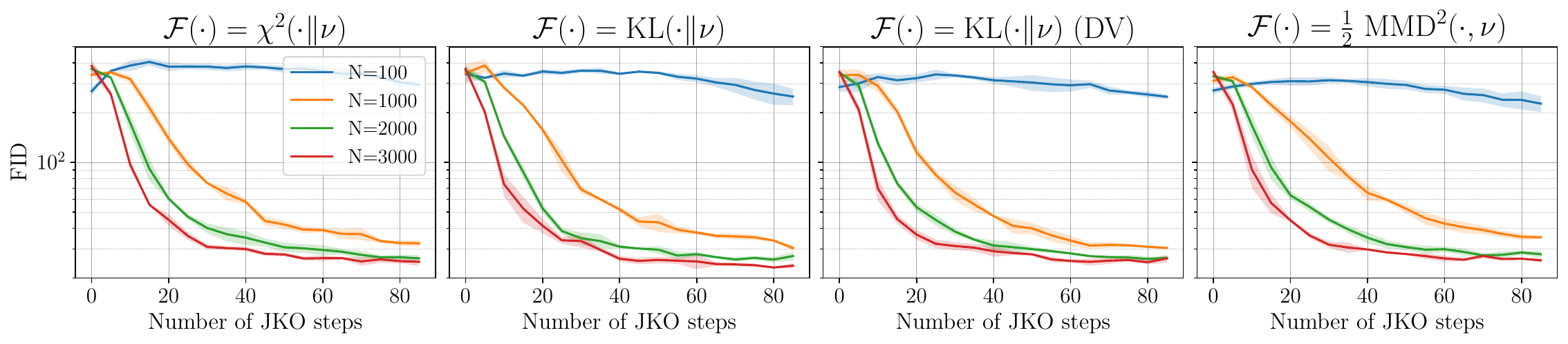}
    \caption{
    Effect of inner optimization accuracy on GWF training.
    We report the evolution of the FID as a function of the total number of inner optimization steps $N$.
    }
    \label{fig:all_div_inner}
\end{figure*}

\begin{table}[h]
    \centering
    \caption{
    FID across divergences for varying numbers of inner optimization steps $N$ (averaged over 3 runs).
    }
    \small
    \begin{tabular}{l c c c c}
        \hline
        \textbf{Divergence} 
        & \textbf{$N=100$} 
        & \textbf{$N=1000$} 
        & \textbf{$N=2000$} 
        & \textbf{$N=3000$} \\
        \hline
        $\chi^2$ 
        & $295.16 \pm 15.23$ 
        & $32.40 \pm 1.34$ 
        & $26.29 \pm 1.24$ 
        & $\mathbf{25.04 \pm 1.43}$ \\
        
        KL 
        & $249.94 \pm 28.69$ 
        & $30.31 \pm 1.32$ 
        & $27.13 \pm 1.62$ 
        & $\mathbf{23.72 \pm 0.90}$ \\
        
        KL (DV) 
        & $249.19 \pm 8.45$ 
        & $30.38 \pm 0.37$ 
        & $\mathbf{26.47 \pm 0.73}$ 
        & $26.24 \pm 1.46$ \\
        
        MMD 
        & $226.36 \pm 24.83$ 
        & $35.22 \pm 0.71$ 
        & $27.83 \pm 1.29$ 
        & $\mathbf{25.56 \pm 0.71}$ \\
        \hline
    \end{tabular}

\label{tab:inner_steps_fid}
\end{table}

\paragraph{Additional quantitative results.}

We report additional quantitative results corresponding to the experiments discussed in \cref{sec:experiments}. 
In particular, \cref{tab:dv_kl_arch} compares the DV and classical KL formulation across architectures, while \cref{tab:final_fid_jko} summarizes the impact of the JKO step size for various divergences using the \emph{Small-Net} architecture. 

\looseness=-1 We also report complementary evaluation metrics in \cref{tab:performance_metrics}, together with their evolution along training in \cref{fig:metrics_evolution_jko}. 
Kernel Inception Distance (KID) \citep{bińkowski2018demystifying} and Inception Score (IS) \citep{salimans2016improved} are included to complement FID and assess robustness of the conclusions across metrics. 
Importantly, all three metrics lead to consistent conclusions across divergences and JKO step sizes, indicating that our findings do not depend on a specific evaluation metric.

\clearpage

\begin{figure}[t]
    \centering
    
    \includegraphics[width=0.95\textwidth]{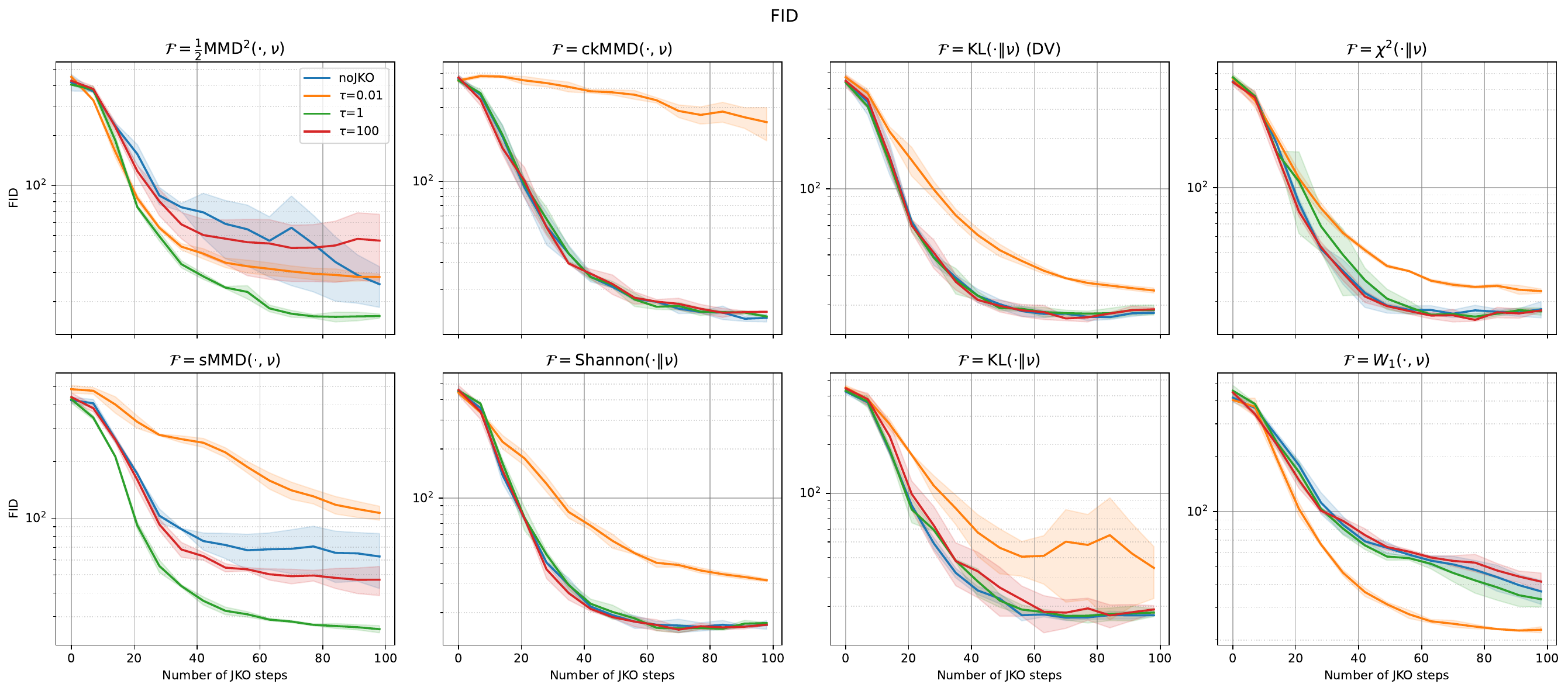}
    \includegraphics[width=0.95\textwidth]{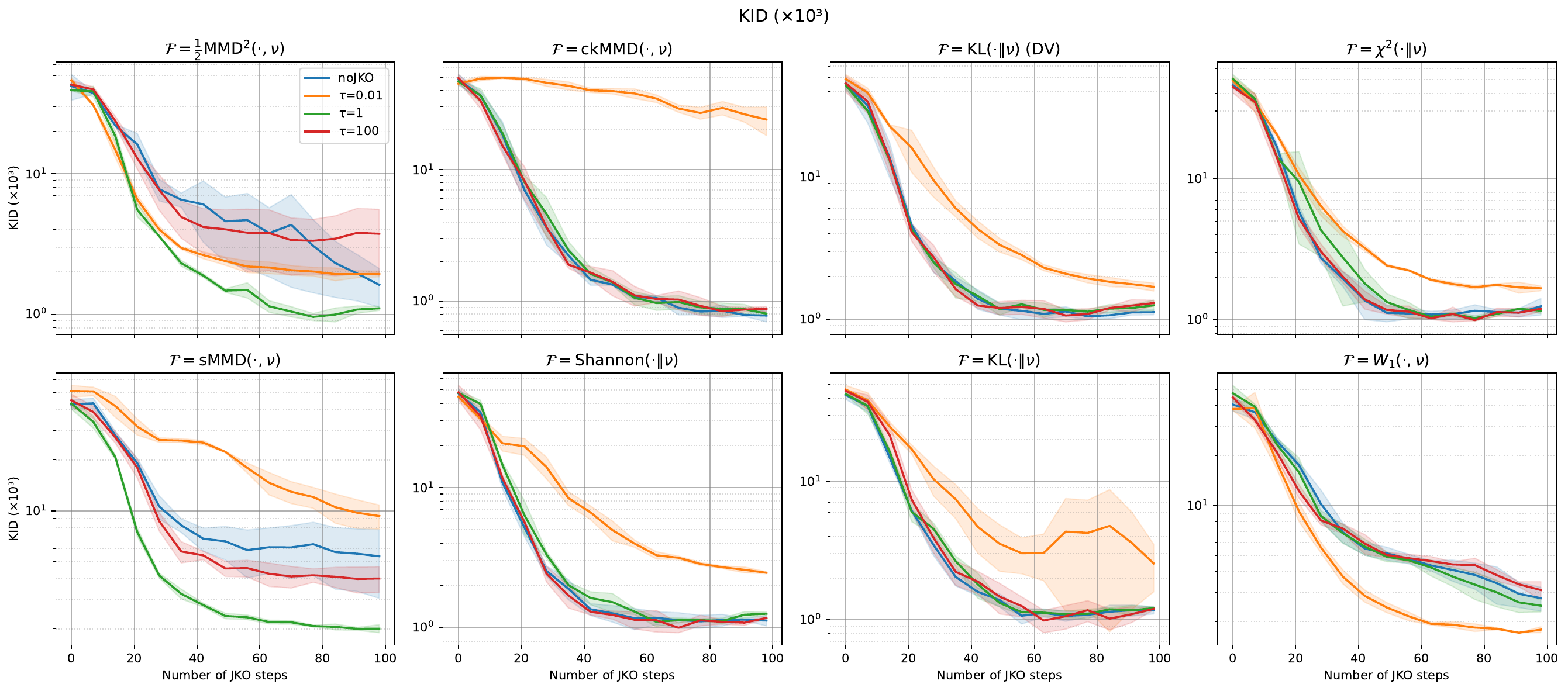}

    \includegraphics[width=0.95\textwidth]{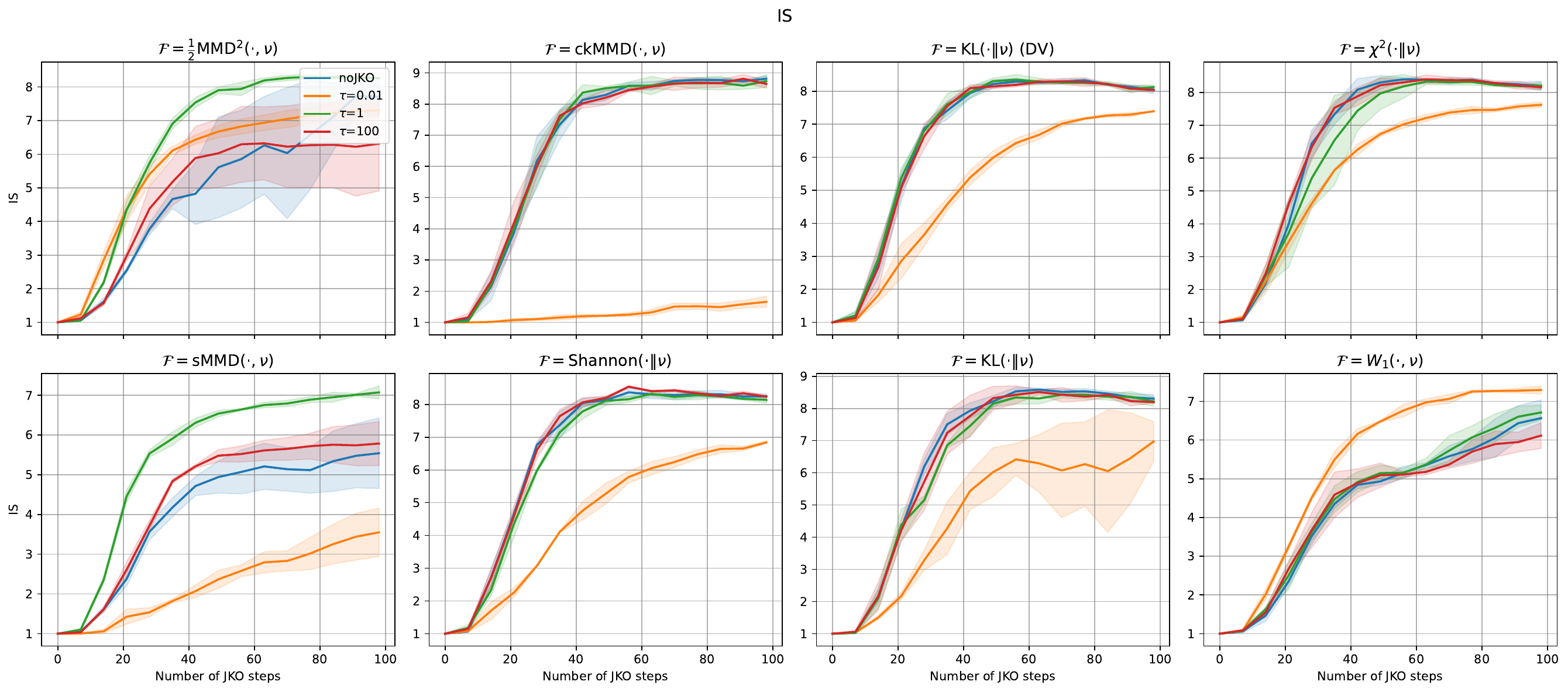}
    
    \caption{
    Evolution of generative performance metrics along the JKO iterations for different divergences and step sizes $\tau$.
    Top row: Fréchet Inception Distance (FID) and Kernel Inception Distance (KID).
    Bottom: Inception Score (IS).
    These complementary metrics provide a consistent evaluation of generative performance across training.
    }
    \label{fig:metrics_evolution_jko}
\end{figure}

\clearpage

\paragraph{Computational Cost of JKO.} \cref{tab:jko_overhead,tab:architecture_cost} provide measurements of the additional computational cost associated with JKO regularization and different network architectures. 
We observe that the additional computational cost introduced by JKO remains modest (mean overhead $\approx$ +8.9\%), and is largely independent of the underlying architecture.

\begin{table}[t]
\centering

\begin{minipage}[t]{0.49\textwidth}
    \centering
    \raggedright

    \caption{
        Comparison of the DV and classical KL variational formulations at the final JKO iteration for different network architectures on CIFAR-10.
    }  
    \label{tab:dv_kl_arch}
    \resizebox{\linewidth}{!}{%
    \begin{tabular}{l c c c}
        \hline
        \textbf{Architecture} & \textbf{JKO step} & \textbf{DV (mean $\pm$ std)} & \textbf{KL (mean $\pm$ std)} \\
        \hline
        \emph{Small-Net} & 150 & $\mathbf{25.67 \pm 0.44}$ & $26.90 \pm 0.99$ \\
        \emph{Large-Net}$^{\ast}$  & 50  & $\mathbf{9.35 \pm 1.01}$ & $10.24 \pm 1.09$ \\
        \emph{U-Net}    & 80  & $\mathbf{24.66 \pm 2.77}$ & $30.06 \pm 0.70$ \\
        \hline
    \end{tabular}%
    }
    
    \vspace{2mm}
    {\footnotesize \textit{$^{\ast}$For \emph{Large-Net}, results are reported at the best-performing JKO iteration (40 steps)}}    
\end{minipage}%
\hfill
\begin{minipage}[t]{0.49\textwidth}
    \centering
    \raggedright

    \caption{
    FID for different divergences and JKO step sizes on CIFAR-10 using \emph{Small-Net} (averaged over 3 runs).
    }
    \label{tab:final_fid_jko}
    
    \resizebox{\linewidth}{!}{%
    \begin{tabular}{lcccc}
        \toprule
        \textbf{Divergence} & \textbf{no JKO} & \textbf{JKO $\tau=0.01$} & \textbf{JKO $\tau=1$} & \textbf{JKO $\tau=100$} \\
        \midrule
        ckMMD & $13.96 \pm 0.56$ & $129.38 \pm 5.19$ & $\mathbf{12.78 \pm 0.88}$ & $14.12 \pm 0.67$ \\
        $\chi^2$ & $15.60 \pm 0.36$ & $22.26 \pm 0.57$ & $\mathbf{15.01 \pm 0.12}$ & $15.87 \pm 0.82$ \\
        KL & $15.78 \pm 0.78$ & $26.94 \pm 2.26$ & $\mathbf{15.47 \pm 0.63}$ & $17.28 \pm 1.81$ \\
        KL-DV & $16.68 \pm 0.85$ & $25.19 \pm 0.28$ & $\mathbf{16.56 \pm 0.67}$ & $16.88 \pm 0.74$ \\
        MMD & $49.39 \pm 13.48$ & $28.36 \pm 0.72$ & $\mathbf{15.76 \pm 0.08}$ & $21.43 \pm 2.58$ \\
        Shannon & $\mathbf{14.60 \pm 0.53}$ & $32.24 \pm 1.63$ & $15.92 \pm 0.40$ & $15.23 \pm 0.87$ \\
        sMMD & $59.79 \pm 7.79$ & $135.27 \pm 10.58$ & $\mathbf{26.79 \pm 0.23}$ & $55.59 \pm 8.22$ \\
        Wasserstein-1 & $35.37 \pm 8.28$ & $\mathbf{22.66 \pm 1.08}$ & $28.91 \pm 0.99$ & $35.35 \pm 3.01$ \\
        \bottomrule
    \end{tabular}%
    }
\end{minipage}

\end{table}

\begin{table}[t]
\centering
\footnotesize
\caption{
Final generative performance on CIFAR-10 across divergences and JKO step sizes $\tau$ using \emph{Small-Net}.
We report Fréchet Inception Distance (FID $\downarrow$), Kernel Inception Distance (KID $\downarrow$), and Inception Score (IS $\uparrow$) as mean $\pm$ standard deviation over 3 seeds.
}
\label{tab:performance_metrics}

\resizebox{\linewidth}{!}{
\begin{tabular}{lcccc}
\hline
Divergence 
& noJKO 
& $\tau=0.01$ 
& $\tau=1$ 
& $\tau=100$ \\
\hline

ckMMD 
& \textbf{13.16} $\pm$ 0.76 / \textbf{0.78} $\pm$ 0.08 / \textbf{8.81} $\pm$ 0.09
& 241.52 $\pm$ 58.26 / 23.99 $\pm$ 5.84 / 1.66 $\pm$ 0.18
& 13.42 $\pm$ 0.22 / 0.81 $\pm$ 0.02 / 8.73 $\pm$ 0.20
& 14.41 $\pm$ 0.16 / 0.88 $\pm$ 0.04 / 8.65 $\pm$ 0.13 \\

$\chi^2$
& 17.93 $\pm$ 2.04 / 1.25 $\pm$ 0.17 / 8.20 $\pm$ 0.14
& 23.16 $\pm$ 0.70 / 1.67 $\pm$ 0.07 / 7.63 $\pm$ 0.08
& \textbf{17.39} $\pm$ 0.90 / \textbf{1.16} $\pm$ 0.05 / \textbf{8.21} $\pm$ 0.11
& 17.63 $\pm$ 0.35 / 1.20 $\pm$ 0.05 / 8.16 $\pm$ 0.07 \\

KL
& \textbf{17.62} $\pm$ 0.18 / \textbf{1.17} $\pm$ 0.07 / \textbf{8.30} $\pm$ 0.12
& 34.54 $\pm$ 12.19 / 2.55 $\pm$ 0.96 / 6.97 $\pm$ 0.62
& 18.30 $\pm$ 1.03 / 1.21 $\pm$ 0.03 / 8.22 $\pm$ 0.15
& 19.13 $\pm$ 1.08 / 1.20 $\pm$ 0.02 / 8.20 $\pm$ 0.03 \\

KL-DV
& \textbf{17.98} $\pm$ 0.34 / \textbf{1.12} $\pm$ 0.05 / 8.04 $\pm$ 0.12
& 24.46 $\pm$ 0.84 / 1.69 $\pm$ 0.11 / 7.39 $\pm$ 0.00
& 18.72 $\pm$ 1.28 / 1.25 $\pm$ 0.11 / \textbf{8.12} $\pm$ 0.06
& 18.86 $\pm$ 0.46 / 1.30 $\pm$ 0.06 / 8.04 $\pm$ 0.05 \\

MMD
& 25.48 $\pm$ 7.00 / 1.62 $\pm$ 0.50 / 7.71 $\pm$ 0.55
& 28.16 $\pm$ 1.18 / 1.94 $\pm$ 0.09 / 7.31 $\pm$ 0.22
& \textbf{16.39} $\pm$ 0.50 / \textbf{1.10} $\pm$ 0.04 / \textbf{8.27} $\pm$ 0.02
& 46.62 $\pm$ 20.53 / 3.74 $\pm$ 1.84 / 6.32 $\pm$ 1.41 \\

Shannon
& \textbf{16.84} $\pm$ 1.01 / \textbf{1.12} $\pm$ 0.09 / \textbf{8.25} $\pm$ 0.07
& 31.53 $\pm$ 0.63 / 2.46 $\pm$ 0.04 / 6.84 $\pm$ 0.04
& 17.26 $\pm$ 0.14 / 1.26 $\pm$ 0.04 / 8.14 $\pm$ 0.10
& 16.87 $\pm$ 0.22 / 1.17 $\pm$ 0.02 / 8.24 $\pm$ 0.04 \\

sMMD
& 62.58 $\pm$ 20.26 / 5.38 $\pm$ 2.35 / 5.54 $\pm$ 0.89
& 106.66 $\pm$ 9.06 / 9.31 $\pm$ 1.51 / 3.55 $\pm$ 0.61
& \textbf{25.73} $\pm$ 1.11 / \textbf{2.01} $\pm$ 0.11 / \textbf{7.07} $\pm$ 0.16
& 47.09 $\pm$ 8.29 / 3.97 $\pm$ 0.69 / 5.78 $\pm$ 0.56 \\

Wasserstein-1
& 36.77 $\pm$ 5.48 / 2.77 $\pm$ 0.46 / 6.56 $\pm$ 0.47
& \textbf{22.75} $\pm$ 0.85 / \textbf{1.79} $\pm$ 0.07 / \textbf{7.29} $\pm$ 0.10
& 33.34 $\pm$ 3.13 / 2.50 $\pm$ 0.21 / 6.71 $\pm$ 0.20
& 41.57 $\pm$ 4.74 / 3.10 $\pm$ 0.38 / 6.11 $\pm$ 0.33 \\

\hline
\end{tabular}
}

\begin{flushleft}
\footnotesize
\textit{$^*$The runs used in this table correspond to different random seeds than those reported in \Cref{tab:final_fid_jko}, which explains the slight differences in FID values.}
\end{flushleft}

\end{table}

\begin{table}[h]
\centering

\begin{minipage}[t]{0.49\textwidth}
    \centering
    \raggedright

    \caption{
    JKO computational overhead measured as seconds per 1000 generator iterations (sec/kiter) on CIFAR-10 (batch size 256, H100 GPU, averaged over 3 seeds).
    }
    \label{tab:jko_overhead}
    \resizebox{\linewidth}{!}{%
        \begin{tabular}{lccccc}
            \hline
            Architecture 
            & Divergence 
            & sec/kiter w/o JKO 
            & sec/kiter w/ JKO 
            & JKO Overhead \\
            \hline
            
            \emph{Small-Net} & KL      & 141  & 162  & +14.8\% \\
            \emph{Small-Net} & KL-DV   & 141  & 162  & +14.4\% \\
            \emph{Small-Net} & $\chi^2$& 143  & 155  & +8.3\%  \\
            \emph{Small-Net} & Shannon & 143  & 154  & +7.7\%  \\
            \emph{Small-Net} & W-1     & 156  & 169  & +8.4\%  \\
            \emph{Small-Net} & ckMMD   & 160  & 172  & +7.4\%  \\
            \emph{Small-Net} & sMMD    & 232  & 244  & +5.1\%  \\
            \emph{Small-Net} & MMD     & 377  & 386  & +2.5\%  \\
            UNet & KL-DV  & 245  & 275  & +12.1\% \\
            ResNet-MMD GAN & MMD & 93   & 100  & +8.1\%  \\
            \emph{Large-Net} & KL   & 1491 & 1574 & +5.6\%  \\
            
            \hline
        \end{tabular}
    }  
    \vspace{0.5em}
    \centering
    \footnotesize
    sec/kiter = seconds per 1000 generator iterations.
\end{minipage}%
\hfill
\begin{minipage}[t]{0.49\textwidth}
    \centering
    \raggedright

    \caption{
    Per-iteration computational cost by architecture on CIFAR-10 (batch size 256, H100 GPU).
    Costs are reported in seconds per 1000 generator iterations.
    }
    \label{tab:architecture_cost}
    
    \resizebox{\linewidth}{!}{%
    \begin{tabular}{lccc}
        \hline
        Architecture 
        & Divergence 
        & sec/kiter 
        & vs. \emph{Small-Net} (same divergence) \\
        \hline
        
        \emph{Small-Net}      & KL-DV & 150  & $\times 1.0$ \\
        UNet     & KL-DV & 260  & $\times 1.7$ \\
        \emph{Large-Net}   & KL-DV & 1574 & $\times 10.0$ \\
        ResNet-MMD GAN & MMD & 96 & $\times 0.3$ \\
        \emph{Small-Net}      & MMD   & 384  & $\times 1.0$ \\
        
        \hline
    \end{tabular}
    }
\end{minipage}

\end{table}

\paragraph{Forward vs reverse KL.}

As discussed in \cref{sec:jko_fdiv}, the GWF algorithm can be applied to either 
$\cF(\mu)=\Df(\mu\|\nu)$ (original VGWF formulation \cite{fan2022variational}) 
or $\cF(\mu)=\Df(\nu\|\mu)$ (original RWP-$f$-GAN formulation \cite{lin2021wasserstein}). 
It is well known that minimizing $\cF(\mu)=\kl(\mu\|\nu)$ may lead to mode-seeking behavior, while minimizing $\cF(\mu)=\kl(\nu\|\mu)$ typically encourages mode-covering.

Empirically, however, we did not observe major convergence differences between the two orientations in our setting. The usual mode-seeking behavior associated with $\kl(\mu\|\nu)$ appears mitigated in practice, likely because the variational formulations used in the GWF framework depend only on expectations and are therefore less sensitive to support mismatch than the classical density-based formulation. This observation is illustrated in \cref{fig:kl_vs_reversekl}.

\paragraph{Generated samples.}
We conclude with qualitative results illustrating the sample quality achieved by GWF with the $\MMD^2$ on MNIST (Figure \ref{fig:generatedSamplesMMD}) and with the $\mathrm{KL}$ (DV) on CIFAR-10 (\cref{fig:generatedSamplesDV}).
We further visualize the training dynamics through intermediate snapshots (\cref{fig:mmd2_training_evolution,fig:gwf_jko_largenet_tau02_cifar}).
Finally, to assess potential memorization, we report a nearest-neighbor analysis comparing generated samples to the training dataset (\cref{fig:nearest_neighbors}).

\begin{figure}[p]
\centering

\begin{minipage}[t]{0.48\linewidth}
    \centering
    \includegraphics[width=\linewidth]{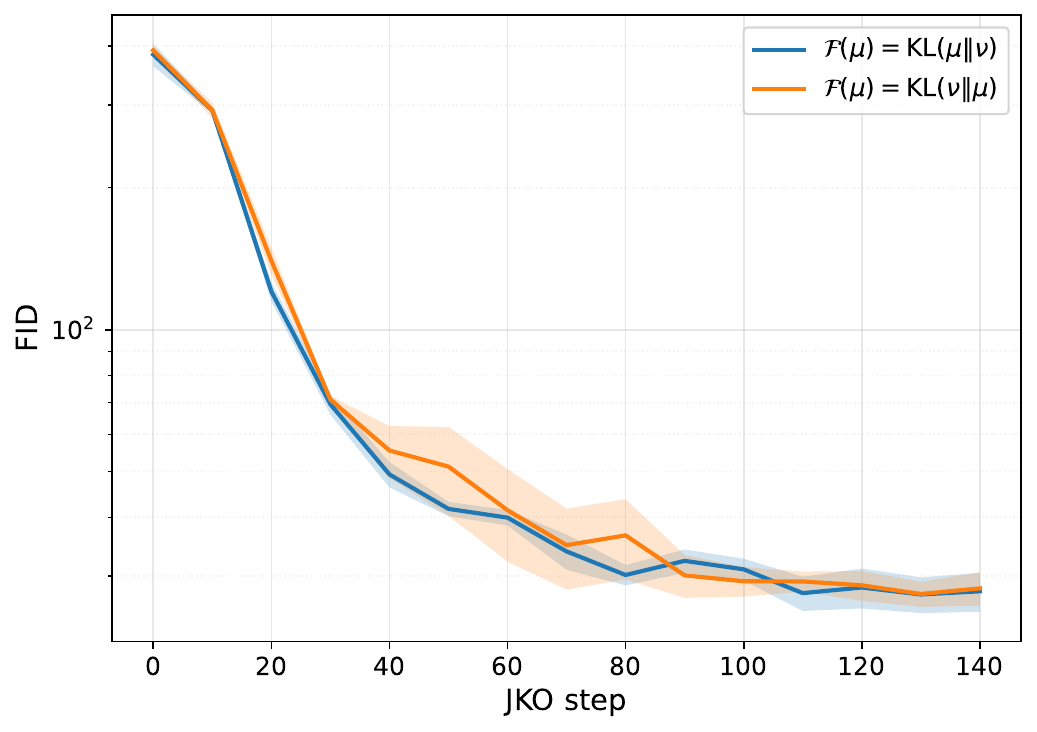}
    \caption{Comparison of the convergence behavior of the GWF algorithm when minimizing 
    $\kl(\mu\|\nu)$ and $\kl(\nu\|\mu)$. We report the evolution of the FID score as a function of the JKO step, averaged over multiple runs.}
    \label{fig:kl_vs_reversekl}
\end{minipage}
\hspace{0.5cm}
\begin{minipage}[t]{0.4\linewidth}
    \centering
    \includegraphics[width=\linewidth]{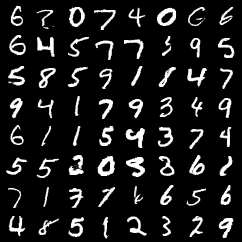}
    \caption{Generated samples from GWF with the $\MMD^2$ and $\tau=0.5$ trained on MNIST with \emph{ResNet MMD GAN} ($\text{FID}=0.95$).}
    \label{fig:generatedSamplesMMD}
\end{minipage}

\vspace{1.5em}

\includegraphics[width=0.56\linewidth]{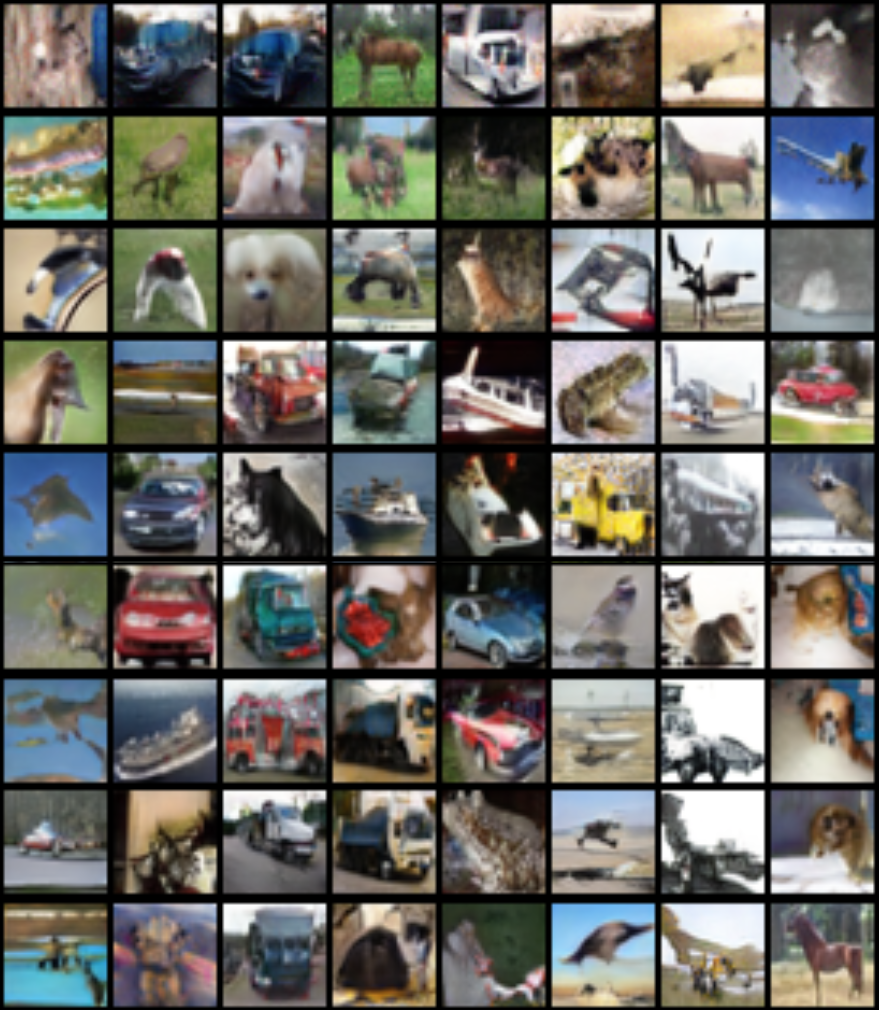}
\caption{
Evolution of generated samples for JKO-regularized GWF with $\mathrm{KL}$ (DV) on CIFAR-10,
using \emph{Large-Net} and step size $\tau=0.2$.
Snapshots are shown from top to bottom every $10\,000$ inner optimization steps.
}
\label{fig:gwf_jko_largenet_tau02_cifar}

\end{figure}

\begin{figure}[p]
\centering

\begin{subfigure}[t]{0.36\textwidth}
    \centering
    \includegraphics[width=\linewidth]{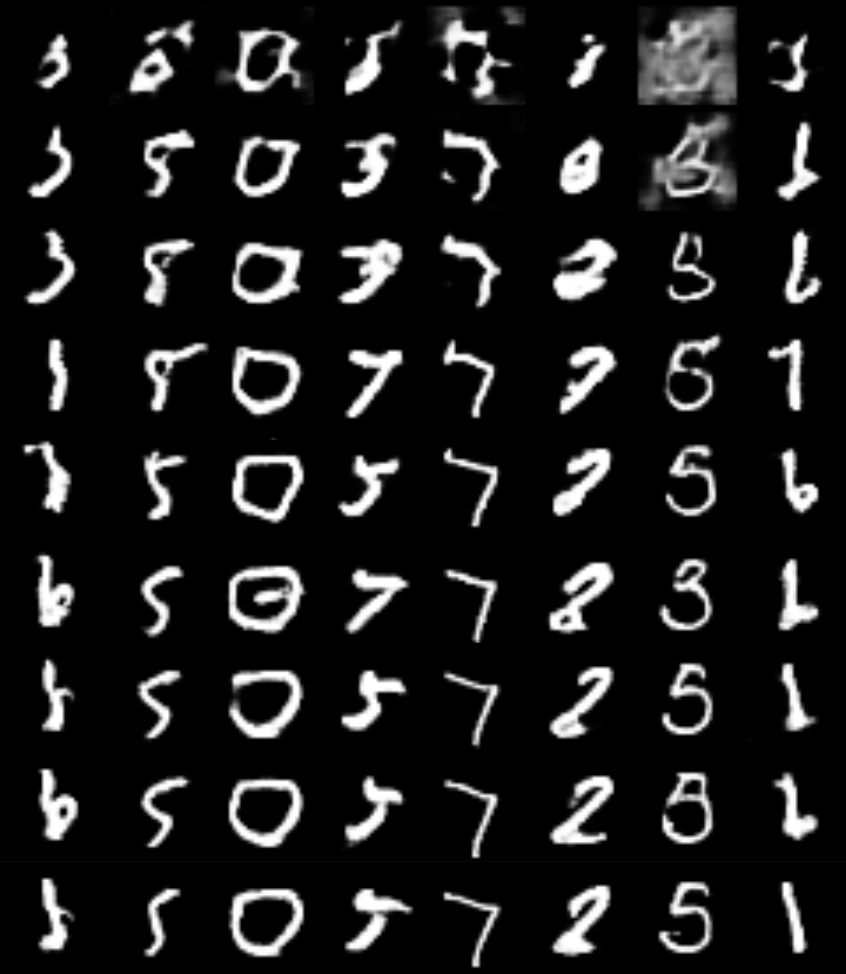}
    \caption{GWF ($\tau=0.1$)}
    \label{fig:mmd2_jko_evolution}
\end{subfigure}
\hspace{1cm}
\begin{subfigure}[t]{0.36\textwidth}
    \centering
    \includegraphics[width=\linewidth]{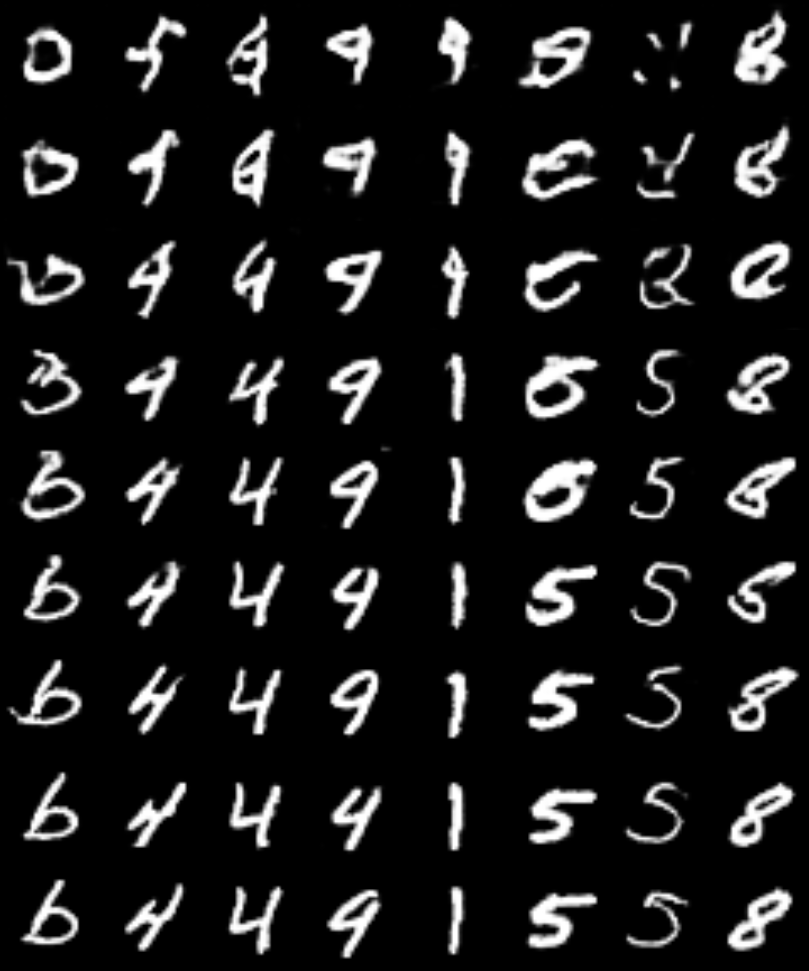}
    \caption{Unregularized GWF (no JKO)}
    \label{fig:mmd2_nojko_evolution}
\end{subfigure}

\caption{
Evolution of generated samples for GWF with $\MMD^2$ during training.
Snapshots are shown from top to bottom every $N=5\,000$ inner optimization steps, starting from $N=1000$.
}
\label{fig:mmd2_training_evolution}

\vspace{1.2em}

\includegraphics[width=0.92\linewidth]{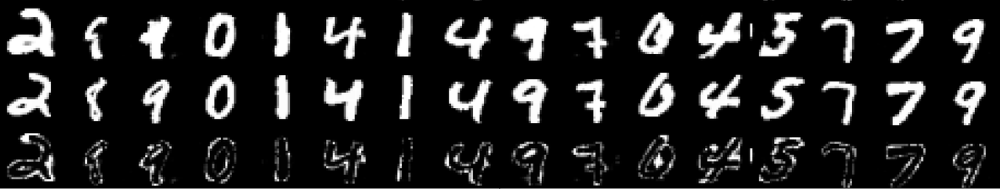}

\vspace{0.3em}

\includegraphics[width=0.92\linewidth]{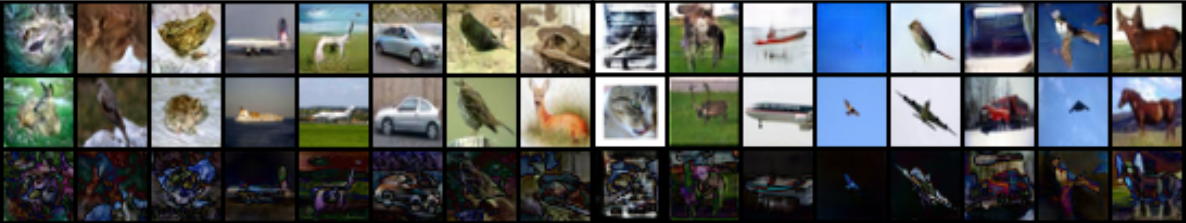}

\caption{
Nearest-neighbor analysis to assess potential memorization on CIFAR-10 and MNIST.
For each generated sample, we compute the closest image in the training dataset using the $\ell_2$ distance in pixel space.
From top to bottom, the first row shows generated samples, the second row shows the corresponding nearest neighbors from the training dataset, and the third row displays the pixelwise differences between them.
The CIFAR-10 examples were generated using the Large-Net architecture with the KL-DV formulation (final FID = 9), while the MNIST examples were generated using the \emph{ResNet MMD GAN} architecture with the MMD objective (final FID = 1.74).}
\label{fig:nearest_neighbors}

\end{figure}

\begin{figure}[p]
    \centering
    \includegraphics[width=5cm]{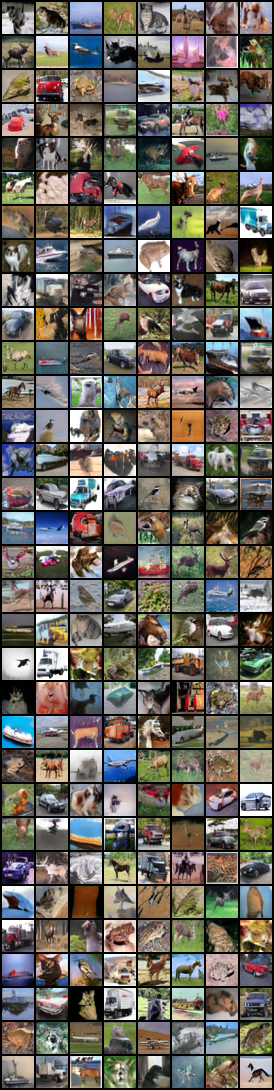}
    \caption{
    Generated samples from WGF with the KL DV formulation with $\tau=0.2$, trained on CIFAR-10 with \emph{Large-Net} ($\text{FID}=8.09$).
    }
    \label{fig:generatedSamplesDV}
\end{figure}

\end{document}